
\documentclass[final]{cvpr}

\usepackage{times}
\usepackage{epsfig}
\usepackage{graphicx}
\usepackage{amsmath}
\usepackage{amssymb}

\usepackage{helvet}
\usepackage{courier}
\usepackage{caption}
\usepackage{amsfonts}
\usepackage{algpseudocode}
\usepackage{algorithm}
\usepackage{multirow}
\usepackage{graphicx}
\usepackage{comment,paralist}
\usepackage{amsmath,amssymb} 
\usepackage{color}
\usepackage{xcolor}
\usepackage{bbm}
\usepackage{wrapfig}
\usepackage{threeparttable}
\usepackage{amsthm}
\usepackage{colortbl, array}
\usepackage{hhline}
\usepackage{booktabs}


\usepackage[pagebackref=true,breaklinks=true,colorlinks,bookmarks=false]{hyperref}

\hypersetup{
    urlcolor=blue,
}

\graphicspath{{figure/}{Figure/}}
\newtheorem{theorem}{Theorem}
\newtheorem{proposition}{Observation}

\newcounter{example}[section]
\DeclareMathOperator*{\argmax}{arg\,max} 

\definecolor{rulecolor}{RGB}{0,71,171}
\definecolor{tableheadcolor}{RGB}{204,229,255}
\definecolor{Gray}{gray}{0.9}

\newcommand{\myrowcolour}{\rowcolor{tableheadcolor}}
\newcommand{\highest}[1]{\textcolor{blue}{\textbf{#1}}}

\newcommand{\topline}{ %
        \arrayrulecolor{rulecolor}\specialrule{0.1em}{\abovetopsep}{0pt}%
        \arrayrulecolor{rulecolor}}

\newcommand{\bottomline}{ %
        \arrayrulecolor{rulecolor} %
        \specialrule{\heavyrulewidth}{0pt}{\belowbottomsep}}%

\newcommand\blfootnote[1]{%
  \begingroup
  \renewcommand\thefootnote{}\footnote{#1}%
  \addtocounter{footnote}{-1}%
  \endgroup
}



\begin{document}

\title{Simpler Certified Radius Maximization by Propagating Covariances}

\author{Xingjian Zhen$^{\dag}$, \ \ Rudrasis Chakraborty$^{\ddag}$, \ \ Vikas Singh$^{\dag}$\\ 
$^\dag$University of Wisconsin-Madison $\quad$ $^\ddag$University of California, Berkeley \\
{\tt\small xzhen3@wisc.edu} \ \ {\tt\small rudrasischa@gmail.com} \ \ {\tt\small vsingh@biostat.wisc.edu}
}


\maketitle

\pagestyle{empty} 
\thispagestyle{empty} 

\begin{abstract}
  One strategy for adversarially training a robust model is to maximize its certified radius  -- the neighborhood around a given training sample for which the model's prediction remains unchanged. The scheme typically involves analyzing a ``smoothed'' classifier where one estimates the prediction corresponding to Gaussian samples in the neighborhood of each sample in the mini-batch, accomplished in practice by Monte Carlo sampling. In this paper, we investigate the hypothesis that this sampling bottleneck can potentially be mitigated by identifying ways to directly propagate the covariance matrix of the smoothed distribution through the network. To this end, we find that other than certain adjustments to the network, propagating the covariances must also be accompanied by additional accounting that keeps track of how the distributional moments transform and interact at each stage in the network. We show how satisfying these criteria yields an algorithm for maximizing the certified radius on datasets including Cifar-10, ImageNet, and Places365 while offering runtime savings on networks with moderate depth, with a small compromise in overall accuracy. We describe the details of the key modifications that enable practical use. Via various experiments, we evaluate when our simplifications are sensible, and what the key benefits and limitations are. 
\end{abstract}

\blfootnote{Code is available at \url{https://github.com/zhenxingjian/Propagating_Covariance}. An short video summary of this paper is available at \url{https://youtu.be/m1ya2oNf5iE}}


\section{Introduction}
\label{sec:intro}

The prevailing approach for evaluating the performance of a deep learning model
involved assessing its overall accuracy profile
on one or more benchmarks of interest.
But the realization that many models were
not robust to even negligible adversarially-chosen perturbations of
the input data \cite{szegedy2013intriguing,biggio2013evasion,ilyas2018black, carlini2017towards}, and
may exhibit highly unstable behavior \cite{bojarski2016end, lecuyer2019certified,mansour2018deep} has led to the emergence
of robust training methods (or robust models) that offer, to varying degrees, immunity
to such adversarial perturbations.
Adversarial training has emerged as a popular mechanism to train a given deep 
model robustly \cite{papernot2016limitations,tramer2017ensemble}. Each mini-batch of training examples shown to the model is supplemented 
with adversarial samples. It makes sense that if the model parameter updates
are based on seeing enough adversarial samples which cover the perturbation
space well, the model is more robust to such adversarial examples at test time \cite{goodfellow2014explaining,huang2015learning,madry2017towards}.
The approach is effective although it often involves paying a premium in terms of
training time due to multiple gradient calculations \cite{shafahi2019adversarial}. 
However, many empirical defenses can fail when the attack is stronger \cite{carlini2017adversarial,uesato2018adversarial,athalye2018robustness}. 


While ideas to improve the efficiency of adversarial training
continue to evolve in the literature,
a complementary line of work seeks to avoid adversarial sample generation
entirely. One instead derives a {\em certifiable robustness} guarantee for a given
model 
\cite{weng2018towards,wong2017provable,zhang2018efficient,mirman2018differentiable,zhang2019you,singh2018fast,balunovic2019certifying}. 
The overall goal is to provide guarantees that {\em no perturbation} within a
certain range will change the prediction of the network.
An earlier proposal, interval bound propagation (IBP) \cite{gowal2018effectiveness},
used convex relaxations at different layers of the network
to derive the guarantees. 
Unfortunately, the bounds tend to get very loose  
as the network depth increases, see Fig. \ref{three_methods} (a). Thus,
the applicability to large high resolution datasets remains under-explored at this time. 

\begin{figure*}[bt]
    \setlength{\abovecaptionskip}{0.1cm}
    \setlength{\belowcaptionskip}{-0.2cm} 
        \centering
               \includegraphics[height=0.34\columnwidth]{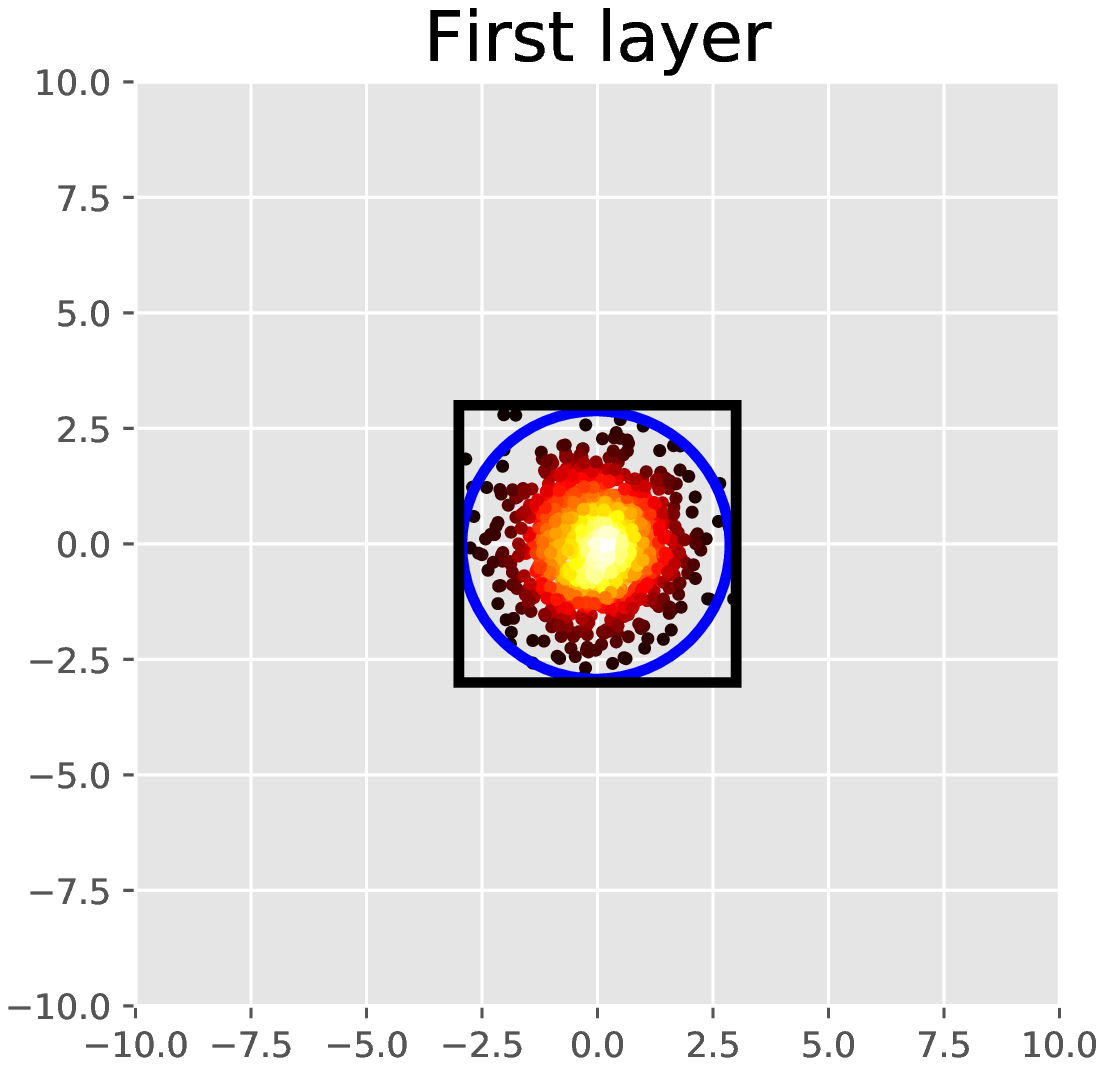}
               \includegraphics[height=0.34\columnwidth]{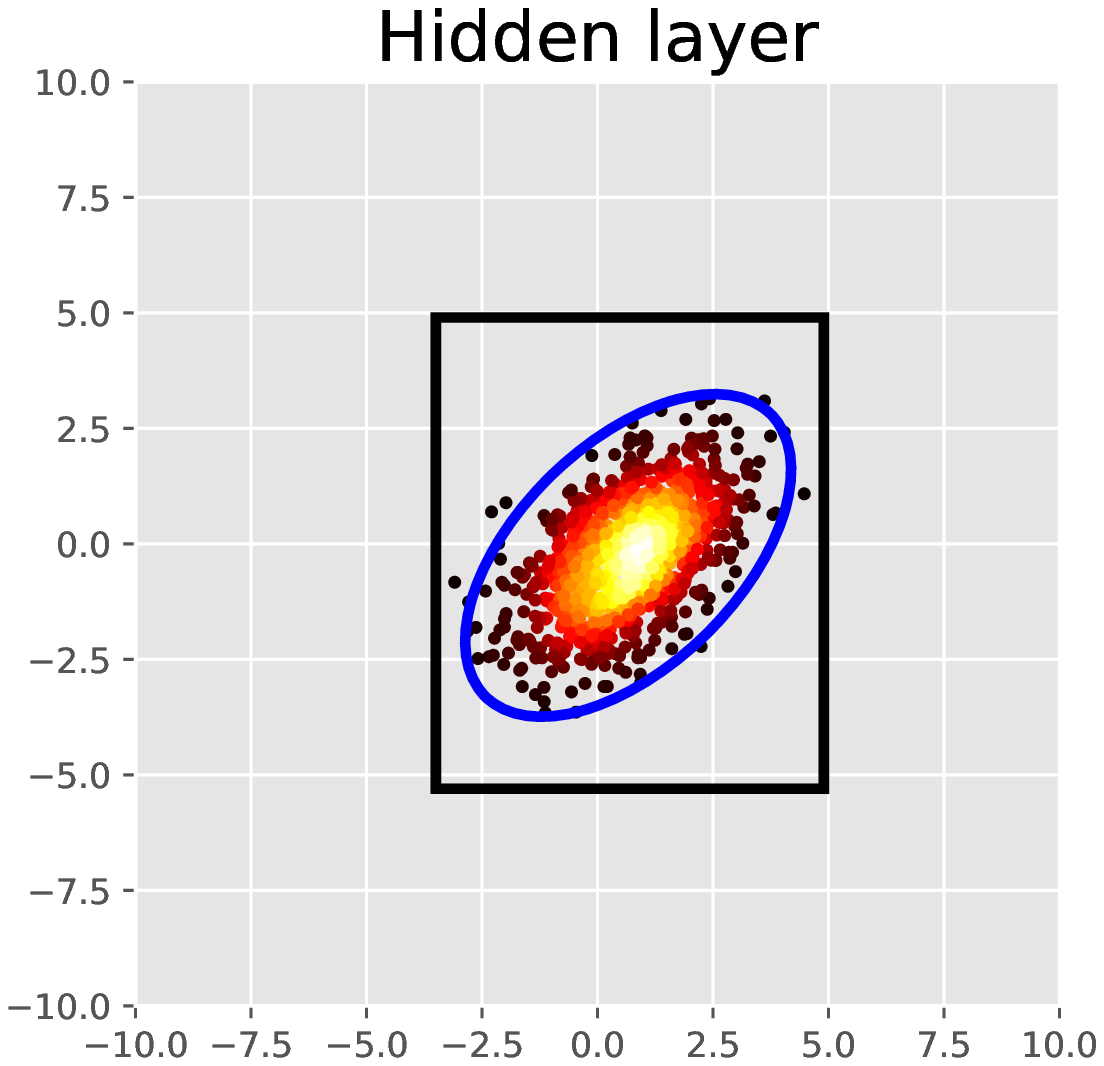}
               \includegraphics[height=0.34\columnwidth]{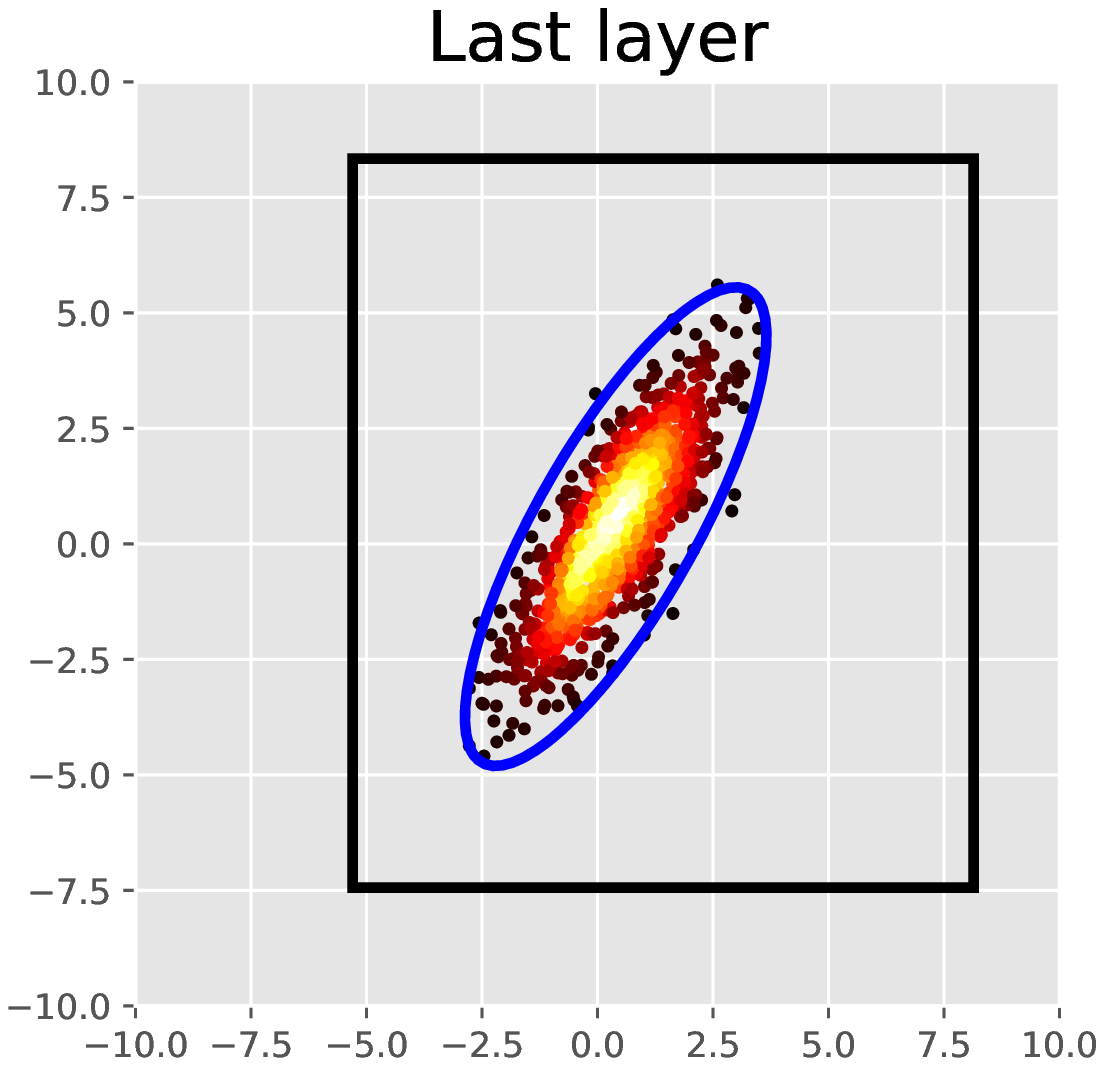}
               \includegraphics[height=0.36\columnwidth]{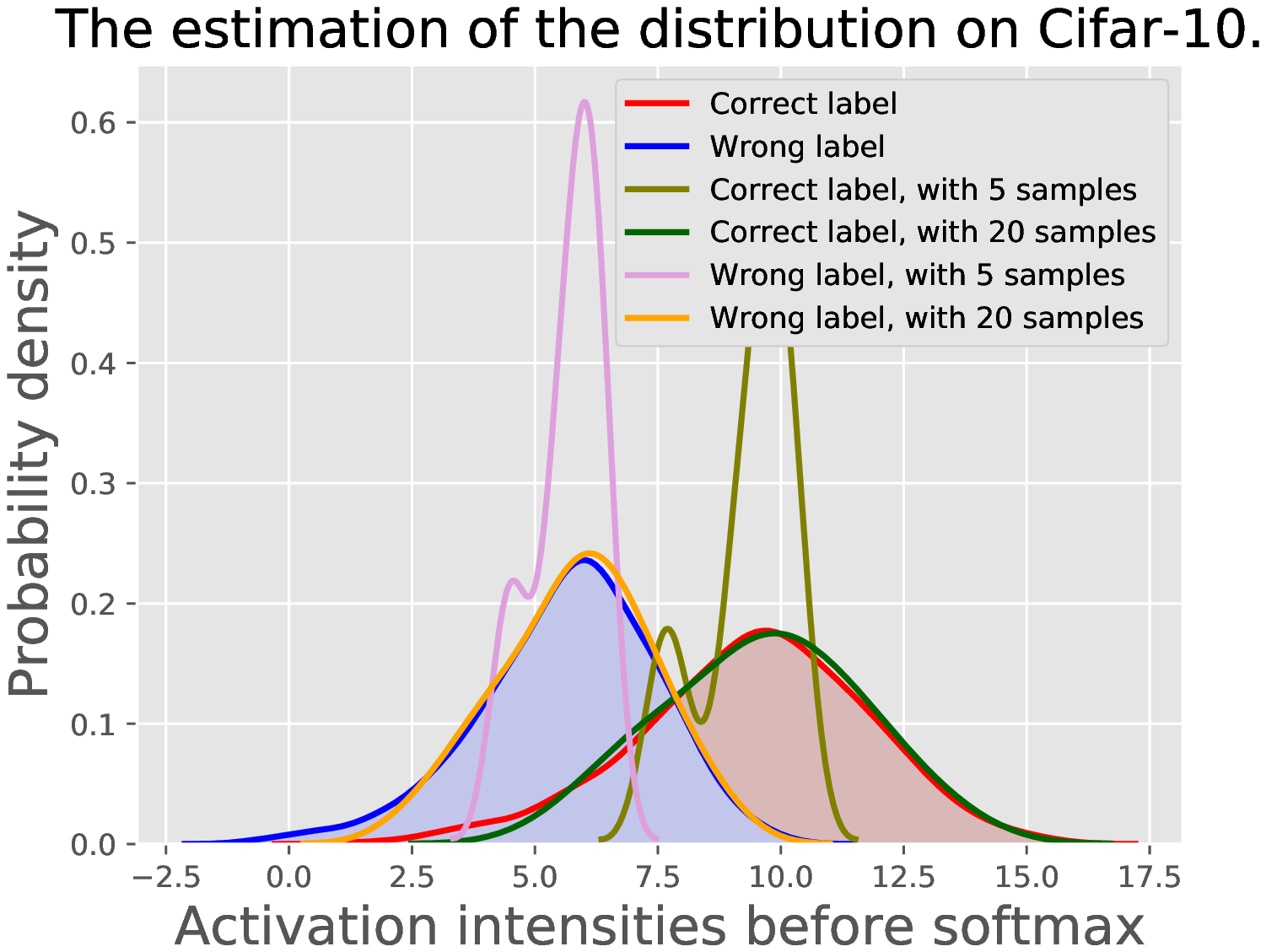}
               \includegraphics[height=0.36\columnwidth]{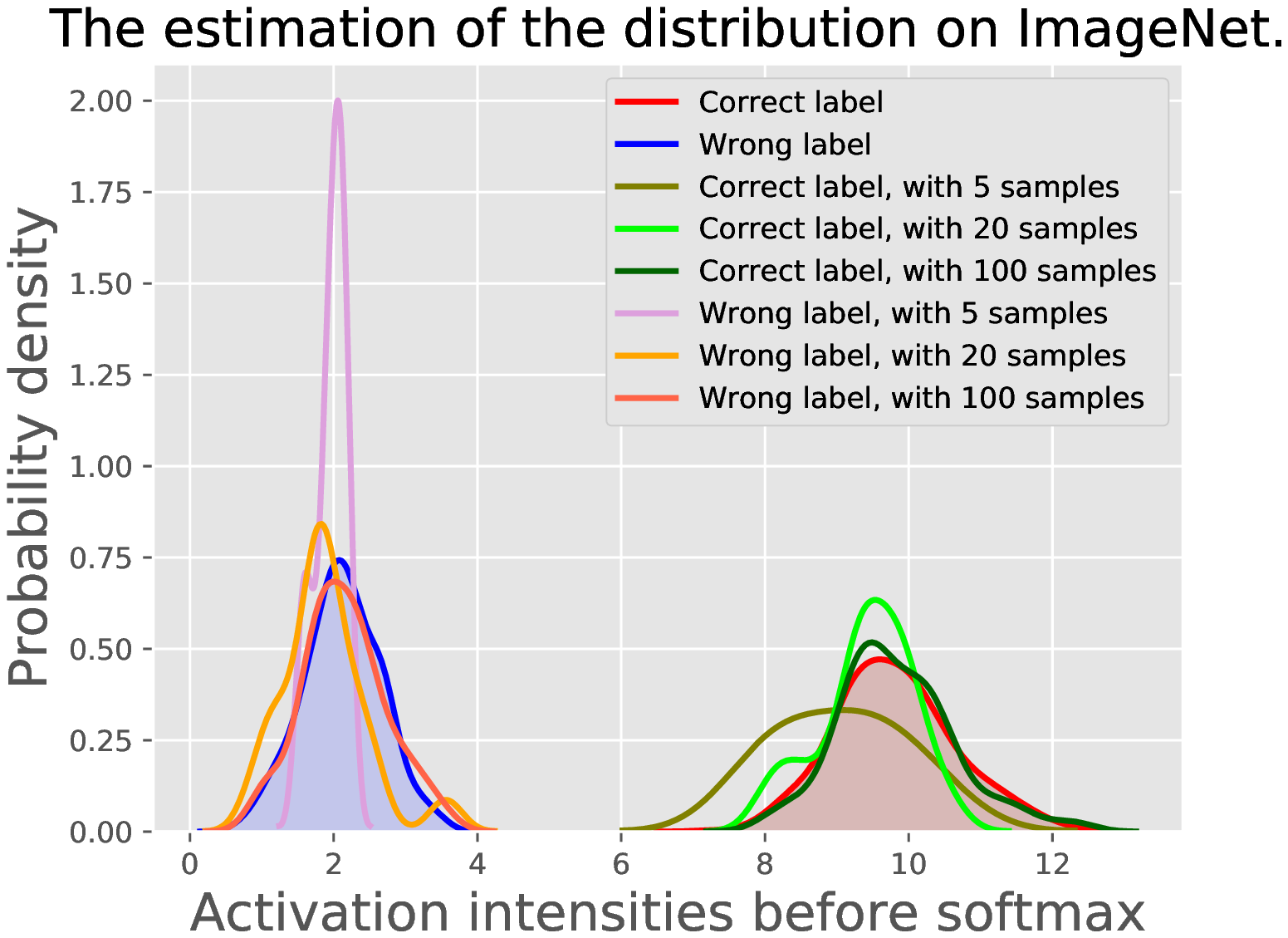}
               \caption{\label{three_methods}\footnotesize \textbf{(a) Columns 1-3:} Example of three methods for certifiable robustness on a
                  two layers MLP. We show results of the input layer, hidden layer, and the output layer here. 
                 Black boxes based on using IBP \cite{gowal2018effectiveness}. Red dots come
                 from the sampling idea from \cite{zhai2020macer}.
                 Ovals are covariance matrices if they are tracked exactly while considering
                 interactions. \textbf{(b) Columns 4-5:} Example of Monte Carlo estimation on a different dataset. If the distributions of the correct and wrong labels are farther, the network is more robust. As the size of images grows, the number of samples for a good estimate also increases.} 
                 \vspace*{-0.5em}
\end{figure*}

Recently, following the idea in \cite{li2018certified,lecuyer2019certified} at a high level, 
Cohen et al. \cite{cohen2019certified} introduced an interesting
randomized smoothing technique, which
can be used to certify the robust radius $C_R$. Assume that we have a 
base network $f_\theta(\cdot)$ for classification.
On a training image $\mathbf{x}\in \mathbb{R}^d$, the output
$f_\theta(\mathbf{x})\in \mathcal{Y}$ is the predicted label of the image $\mathbf{x}$.
Using $f_\theta(\cdot)$, we can build a
``smoothed'' neural network $g_\theta(\cdot)$.

{
\setlength{\abovedisplayskip}{0pt}
\setlength{\belowdisplayskip}{0pt}
\begin{align*}
	g_\theta(\mathbf{x})=\argmax_{c\in\mathcal{Y}}\mathbb{P}(f_\theta(\mathbf{x}+\boldsymbol{\varepsilon})=c),\text{where } \boldsymbol{\varepsilon} \sim \mathcal{N}(0,\sigma^2I)
\end{align*}
}

Here, $\sigma$ can be thought of as
a trade-off between the robustness and the
accuracy of the smoothed classifier $g_\theta(\cdot)$.
One can obtain 
a theoretical certified radius $C_R$ which states that
when $||\boldsymbol{\delta}||_2\leq C_R$, the classifier
$g_\theta(\mathbf{x}+\boldsymbol{\delta})$ will have same label
$y$ as $g_\theta(\mathbf{x})$. 
MACER \cite{zhai2020macer} nicely extended these
ideas and also presented a differentiable form of randomized smoothing
showing how it enables maximizing the radius. 
Internally, a sampling scheme
is used, where empirically, the number of samples to get an accurate estimation could be large.  
As Fig. \ref{three_methods} (b) shows, one needs $100$ samples for a good estimation of the distribution of ImageNet. 


\noindent{\bf Main intuition:} MACER \cite{zhai2020macer} showed that by sampling from a Gaussian distribution and softening the estimation of the distribution in the last layer, maximizing the certified radius is feasible. It is interesting to ask if tracking the ``maximally perturbed'' distribution directly -- in the style of IBP -- is possible without sampling. Results in \cite{xiao2018dynamical} showed that the pre-activation vectors are i.i.d. Gaussian when the channel size goes to infinity. While unrealistic, 
it provides us a starting point. 
Since the Gaussian distribution can be fully characterized by the mean and the covariance matrix, 
we can track these two quantities 
as it passes through the network until the final layer, where the radius is calculated.
If implemented directly, this scheme must involve keeping track of how pixel correlations influence the entries of the covariances from one layer to the next,  and the bookkeeping needs grow rapidly. 
Alternatively, \cite{xiao2018dynamical} uses the fixed point of the covariance matrix 
to characterize it while it
passes through the network, but this idea is not adaptable 
for maximizing the radius task in \cite{zhai2020macer}. 
We will use other convenient approximations of the 
covariance to make  directly tracking of the distribution of the perturbation feasible. 

\noindent{{\bf Other applications of certified radius maximization:}
Training a robust network is also useful when training in the presence of noisy labels \cite{angluin1988learning,goldberger2016training,patrini2017making}. Normally, 
both crowd-sourcing from non-experts and web annotations, 
common strategies for curating large datasets introduce noisy labels. 
It can be difficult to train the model 
directly with the noisy labels without additional care 
\cite{zhang2016understanding}. 
Current methods either try to model the noise transition matrix \cite{goldberger2016training,patrini2017making}, or filter ``correct'' labels from the noisy dataset by collecting a consensus over different neural networks \cite{han2018co,jiang2017mentornet,malach2017decoupling,ren2018learning}. This leads us to consider {\it whether we can train the network from noisy labels without training any auxiliary network?} A key observation here is that the margin of clean labels should be smoother than the noisy labels (as shown in Fig. \ref{noisy_robust}). 
}

\noindent{\bf Contributions:} This paper shows how several
known results characterizing the behavior of (and upper bounds on) covariance matrices
that arise from interactions between random variables with known covariance structure
can be leveraged to obtain a simple scheme that can propagate the distribution (perturbation
applied to the training samples) through the network. 
\setlength{\intextsep}{8pt}%
\setlength{\columnsep}{6pt}%
\begin{wrapfigure}{r}{0.57\columnwidth}
\centering
          \includegraphics[width=0.55\columnwidth]{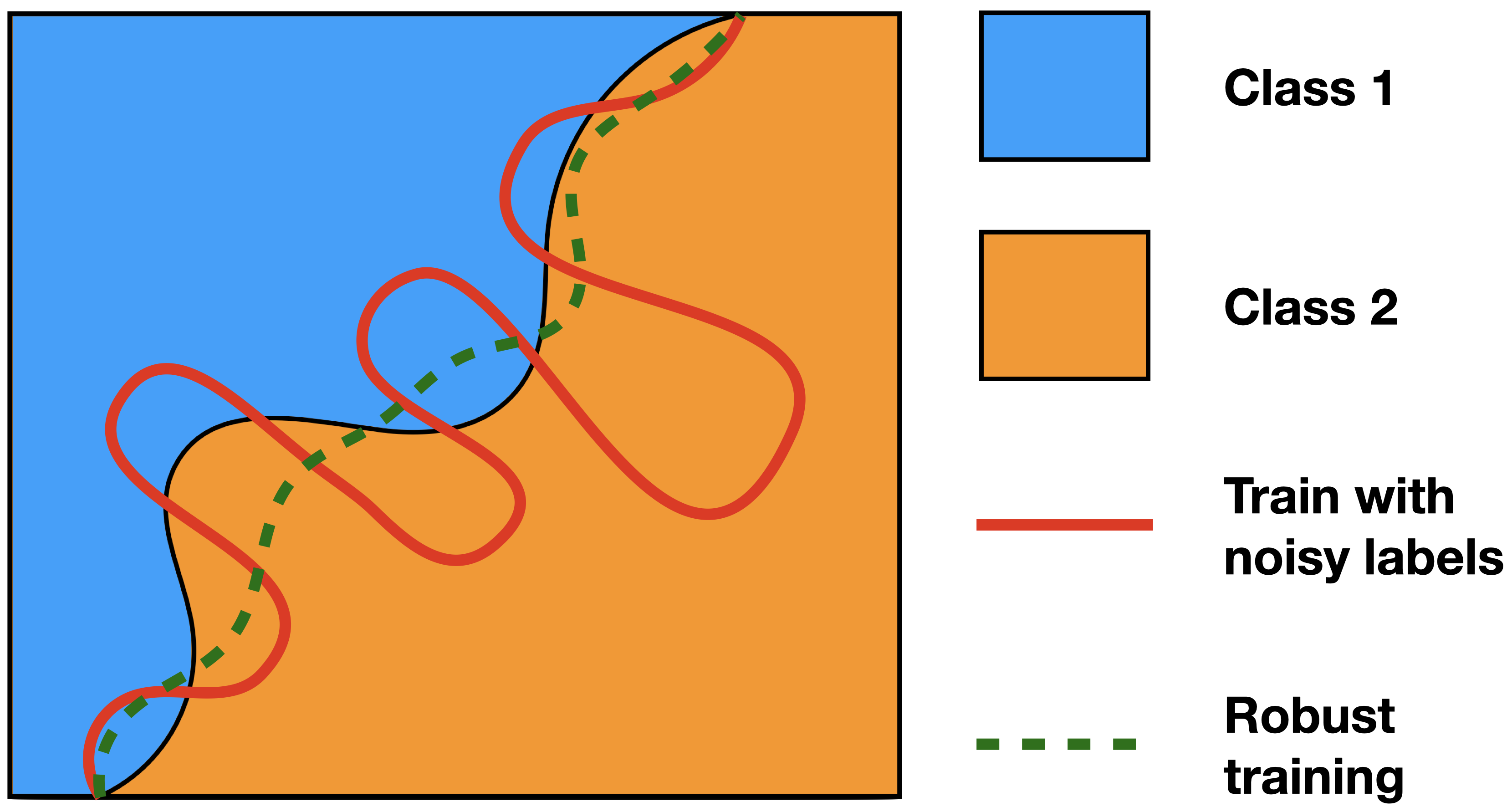}
               \caption{\footnotesize When directly training with noisy labels, the margin will resemble the red line. Using a robust network, the margin will resemble the green line. }\label{noisy_robust}
            \vspace*{-1em}
\end{wrapfigure}
This leads to a sampling-free
method that performs favorably when compared to \cite{zhai2020macer} and other
similar approaches when the network depth is moderate. We show that our method is $5\times$ faster on Cifar-10 dataset and $1.5\times$ faster on larger datasets including ImageNet and Places365 relative to the current state-of-the-art without sacrificing much of the performance. Also, we show that the idea is applicable to  or training with noisy labels.

\setlength{\columnsep}{0.3125in}
\setlength{\intextsep}{12.0pt plus 2.0pt minus 2.0pt}

\section{Robust Radius Via Randomized Smoothing}
We will briefly review the
relevant background on robust radius calculation using  Monte-Carlo (MC) sampling.\\
\textbf{What is the robust radius?} In order to measure the robustness of a neural network,
the {\it robust radius} has been shown to be a sensible measure
\cite{weng2018towards,cohen2019certified}. Given a trained neural network $f_{\theta}$,
the {\it $\ell_2$-robustness at data point $(\mathbf{x}, y)$} is defined as the {\bf largest}
radius $R$ of the ball centered at $\mathbf{x}$ such that all samples within the ball will
be classified as $y$ by the neural network $f_{\theta}$.
Analogously, the {\it $\ell_2$-robustness of $f_{\theta}$} is defined as the
{\bf minimum} $\ell_2$-robustness at data point $(\mathbf{x}, y)$ over the dataset.
But calculating the robust radius for the neural network can be hard;
\cite{weng2018towards} provides a hardness result for the $\ell_1$-robust radius.
In order to make computing $\ell_2$-robustness tractable,
the idea in \cite{cohen2019certified} suggests
working with a tight lower bound, called the ``Certified Radius'',
denoted by $0\leq C_R\leq R$.
Let us now briefly review \cite{cohen2019certified} its functions and features  
for a given base classifier $f_\theta(\cdot)$.

Note that we want to certify that there will be {\em no adversarial samples}
within a radius of $C_R$. By smoothing out the perturbations $\boldsymbol{\varepsilon}$ around the input image/data $\mathbf{x}$ for the base classifier $f_\theta(\cdot)$, intuitively it will be
harder to find an adversarial sample, since it will actually
require finding a ``region'' of adversarial samples.
If we can estimate a lower bound on the probability of the base
  classifier to correctly classify the perturbed data $\mathbf{x}+\boldsymbol{\varepsilon}$,
  denoted as $\underline{p_{c_\mathbf{x}}}$, as well as an upper bound of the
  probability of an incorrect
  classification $\overline{p_{\widetilde{c}}}\leq 1-\underline{p_{c_\mathbf{x}}}$, where $c_\mathbf{x}$ is the true label of
  $\mathbf{x}$ and $\widetilde{c}$ is the ``most likely to be confused'' incorrect label,
  a nice result for the smoothed classifier $g_\theta(\cdot)$ is available,
\begin{theorem}
    \label{certified_radius}
    \cite{cohen2019certified} Let $f_\theta:\mathbf{R}^d\rightarrow \mathcal{Y}$ be any deterministic or random function, and let $\boldsymbol{\varepsilon} \sim \mathcal{N}(\mathbf{0},\sigma^2I)$. 
Let $g_\theta$ be the randomized smoothing classifier defined as $g_\theta(\mathbf{x})=\argmax_{c\in\mathcal{Y}}\mathbb{P}(f_\theta(\mathbf{x}+\boldsymbol{\varepsilon})=c_{\mathbf{x}})$. Suppose $c_{\mathbf{x}},\widetilde{c}\in\mathcal{Y}$ and $\underline{p_{c_{\mathbf{x}}}},\overline{p_{\widetilde{c}}}\in [0,1]$ satisfy $\mathbb{P}(f_\theta(\mathbf{x}+\boldsymbol{\varepsilon})=c_{\mathbf{x}})\geq \underline{p_{c_\mathbf{x}}}\geq \overline{p_{\widetilde{c}}}\geq \max_{\widetilde{c}\neq c_{\mathbf{x}}}\mathbb{P}(f_\theta(\mathbf{x}+\boldsymbol{\varepsilon})=\widetilde{c})$. 
Then $g_\theta(\mathbf{x}+\boldsymbol{\delta})=c_{\mathbf{x}}$ 
for all $\|\boldsymbol{\delta}\|_2<C_R$, 
where $C_R=\frac{\sigma}{2}(\Phi^{-1}(\underline{p_{c_{\mathbf{x}}}})-\Phi^{-1}(\overline{p_{\widetilde{c}}}))$. 
\end{theorem}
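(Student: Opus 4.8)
The plan is to reduce the multivariate statement to a one-dimensional Gaussian comparison via the Neyman--Pearson lemma. Fix a shift $\boldsymbol{\delta}$ with $\|\boldsymbol{\delta}\|_2<C_R$ and an arbitrary label $\widetilde{c}\neq c_{\mathbf{x}}$; since $g_\theta(\mathbf{x}+\boldsymbol{\delta})$ is defined as an $\argmax$ over $\mathcal{Y}$, it suffices to establish
\[
\mathbb{P}\bigl(f_\theta(\mathbf{x}+\boldsymbol{\delta}+\boldsymbol{\varepsilon})=c_{\mathbf{x}}\bigr)>\mathbb{P}\bigl(f_\theta(\mathbf{x}+\boldsymbol{\delta}+\boldsymbol{\varepsilon})=\widetilde{c}\bigr)
\]
for every such $\widetilde{c}$ and then take the maximum over $\widetilde{c}$. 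For a deterministic classifier, writing $A=\{z: f_\theta(z)=c_{\mathbf{x}}\}$ and $B=\{z: f_\theta(z)=\widetilde{c}\}$, the hypotheses give $\mathbb{P}_{\boldsymbol{\varepsilon}}(\mathbf{x}+\boldsymbol{\varepsilon}\in A)\geq\underline{p_{c_\mathbf{x}}}$ and $\mathbb{P}_{\boldsymbol{\varepsilon}}(\mathbf{x}+\boldsymbol{\varepsilon}\in B)\leq\overline{p_{\widetilde{c}}}$; for a random $f_\theta$ one replaces the indicator of $A$ by the function $z\mapsto\mathbb{P}_f(f_\theta(z)=c_{\mathbf{x}})$ taking values in $[0,1]$ and uses the ``soft'' functional form of what follows. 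Either way, what we need are a lower bound on $\mathbb{P}_{\boldsymbol{\varepsilon}}(\mathbf{x}+\boldsymbol{\delta}+\boldsymbol{\varepsilon}\in A)$ and an upper bound on $\mathbb{P}_{\boldsymbol{\varepsilon}}(\mathbf{x}+\boldsymbol{\delta}+\boldsymbol{\varepsilon}\in B)$ that depend on the sets only through their mass under $\mathcal{N}(\mathbf{x},\sigma^2 I)$.

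The first step is the worst-case analysis for $A$. Among all measurable sets of fixed mass $\underline{p_{c_\mathbf{x}}}$ under $\mathcal{N}(\mathbf{x},\sigma^2 I)$, the Neyman--Pearson lemma identifies the one of minimal mass under the shifted Gaussian $\mathcal{N}(\mathbf{x}+\boldsymbol{\delta},\sigma^2 I)$ as a sublevel set of the likelihood ratio of the two densities. A short computation shows that for equal-covariance Gaussians this ratio is a monotone function of $\boldsymbol{\delta}^{\top}(z-\mathbf{x})$, so the extremal set is a half-space $\{z:\boldsymbol{\delta}^{\top}(z-\mathbf{x})\leq t\}$; since $\boldsymbol{\delta}^{\top}\boldsymbol{\varepsilon}\sim\mathcal{N}(0,\sigma^2\|\boldsymbol{\delta}\|_2^2)$, matching the mass pins down $t=\sigma\|\boldsymbol{\delta}\|_2\,\Phi^{-1}(\underline{p_{c_\mathbf{x}}})$. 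Evaluating the mass of that same half-space under the mean-shifted Gaussian is again a one-dimensional computation and yields
\[
\mathbb{P}_{\boldsymbol{\varepsilon}}\bigl(\mathbf{x}+\boldsymbol{\delta}+\boldsymbol{\varepsilon}\in A\bigr)\;\geq\;\Phi\!\left(\Phi^{-1}(\underline{p_{c_\mathbf{x}}})-\frac{\|\boldsymbol{\delta}\|_2}{\sigma}\right).
\]
Running the mirror-image argument for $B$ --- now \emph{maximizing} the shifted mass subject to fixed original mass $\overline{p_{\widetilde{c}}}$, which makes the extremal set the opposite half-space --- gives
\[
\mathbb{P}_{\boldsymbol{\varepsilon}}\bigl(\mathbf{x}+\boldsymbol{\delta}+\boldsymbol{\varepsilon}\in B\bigr)\;\leq\;\Phi\!\left(\Phi^{-1}(\overline{p_{\widetilde{c}}})+\frac{\|\boldsymbol{\delta}\|_2}{\sigma}\right).
\]

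The last step combines the two displays. Because $\Phi$ is strictly increasing, the right-hand side of the first strictly exceeds the right-hand side of the second precisely when
\[
\Phi^{-1}(\underline{p_{c_\mathbf{x}}})-\frac{\|\boldsymbol{\delta}\|_2}{\sigma}>\Phi^{-1}(\overline{p_{\widetilde{c}}})+\frac{\|\boldsymbol{\delta}\|_2}{\sigma},
\]
that is, exactly when $\|\boldsymbol{\delta}\|_2<\frac{\sigma}{2}\bigl(\Phi^{-1}(\underline{p_{c_\mathbf{x}}})-\Phi^{-1}(\overline{p_{\widetilde{c}}})\bigr)=C_R$. Since the hypothesis bounds the mass of \emph{every} incorrect label by $\overline{p_{\widetilde{c}}}$, this holds simultaneously for all $\widetilde{c}\neq c_{\mathbf{x}}$ whenever $\|\boldsymbol{\delta}\|_2<C_R$, so $c_{\mathbf{x}}$ strictly wins the $\argmax$ and $g_\theta(\mathbf{x}+\boldsymbol{\delta})=c_{\mathbf{x}}$. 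I expect the main obstacle to be the Neyman--Pearson step: one must justify carefully that half-spaces are simultaneously extremal for the lower bound on $A$ and the upper bound on $B$ (in the soft case, that the extremizer of a linear functional over $\{h:0\le h\le 1,\ \int h\,d\mu=p\}$ is essentially an indicator of a half-space), and dispose of the usual measure-theoretic fine print --- randomized tie-breaking on the boundary hyperplane, which has Lebesgue measure zero here and so is harmless. Everything after that is elementary: reducing each multivariate Gaussian probability to the scalar $\boldsymbol{\delta}^{\top}\boldsymbol{\varepsilon}$ and two invocations of monotonicity of $\Phi$. It is worth noting only that the certificate is non-trivial solely when $\underline{p_{c_\mathbf{x}}}>\tfrac12$, which is exactly the regime in which $C_R>0$.
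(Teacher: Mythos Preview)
Your proposal is correct and follows essentially the same approach as the paper's proof (which in turn reproduces the argument of Cohen et al.): reduce via Neyman--Pearson to half-space extremizers of the likelihood ratio between $\mathcal{N}(\mathbf{x},\sigma^2 I)$ and $\mathcal{N}(\mathbf{x}+\boldsymbol{\delta},\sigma^2 I)$, compute the resulting one-dimensional Gaussian tail probabilities, and compare. Your write-up is in fact more careful than the paper's about the functional (soft) version for random $f_\theta$ and the measure-zero boundary issue; the only cosmetic difference is that the paper first \emph{defines} the half-spaces and then invokes Neyman--Pearson, whereas you derive them as extremizers.
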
 
The symbol $\Phi$ denotes the CDF of the standard Normal distribution. $\Phi$ and $\Phi^{-1}$
are involved because of smoothing the Gaussian perturbation $\boldsymbol{{\varepsilon}}$. The proof of this theorem can be found in \cite{cohen2019certified}. We also include it in the appendix.

\noindent \textbf{How to compute the robust radius?} Using Theorem \ref{certified_radius},
we will need to
compute the lower bound $\underline{p_{c_{\mathbf{x}}}}$,
the main ingredient to compute $C_R$. 
%
%
In \cite{cohen2019certified}, the authors introduced a sampling-based method to compute the lower bound of $\underline{p_{c_{\mathbf{x}}}}$
in the test phase. The procedure
first samples $n_0$ noisy samples around $\mathbf{x}$ and passes
it through the base classifier $f_\theta$ to estimate the classified label
{\em after} smoothing. Then,
we sample $n$ noisy samples, where $n\gg n_0$, to estimate the
lower bound of $\underline{p_{c_{\mathbf{x}}}}$ for
a certain confidence level $\alpha$. 

\section{Track Distribution Approximately}
In the last section, we discussed how to calculate $\underline{p_{c_{\mathbf{x}}}}$
in a sampling (Monte Carlo) based setting. However, this method is based on counting the number of
correctly classified samples, which is not differentiable during training.
In order to tackle this problem, \cite{zhai2020macer} introduced an 
alternative -- soft randomized smoothing --
to calculate the lower bound
\begin{align}
    \underline{p_{c_{\mathbf{x}}}} & =\mathbb{E}_{\boldsymbol{\varepsilon} \sim \mathcal{N}(\mathbf{0},\sigma^2I)}\left [\frac{e^{\beta u_\theta^{c_{\mathbf{x}}}(\mathbf{x}+\boldsymbol{\varepsilon})}}{\sum_{c'\in\mathcal{Y}}e^{\beta u_\theta^{c'}(\mathbf{x}+\boldsymbol{\varepsilon})}}\right]
\end{align}
where $u_{\theta}$ is the network $f_\theta$ {\bf without the last softmax layer}, i.e., $f_\theta=\argmax \text{softmax}(u_\theta)$, and
$\beta$ is a hyperparameter. 

From Fig. \ref{three_methods} (b), observe that if we have enough MC samples, we can reliably estimate $\underline{p_{c_{\mathbf{x}}}}$
effectively by counting the number of correctly classified samples.
If we can bypass MC sampling to estimate the final distribution, the gains in runtime can be significant. However, directly computing the joint distribution of the perturbations of all the pixels is infeasible: we need simplifying assumptions. 

\noindent{\bf Gaussian pre-activation vectors:} The first assumption is to use a Gaussian distribution to fit the pre-activation vectors. As briefly mentioned before, this is true when the channel size goes to infinity by the central limit theorem. In practice, when the channel size is large enough, e.g., a ResNet-based architecture \cite{he2016deep}, this assumption may be acceptable with a small error (evaluated later in experiments).
Therefore, we will only consider the first two moments, which is reasonable for Gaussian perturbation \cite{wishart1928generalised}. 

\noindent{\bf Second moments:} Our second assumption is that in each layer of a convolution network, the second moments are identical for the {\em perturbation} of all pixels. The input pixels share identical second moments from a fixed Gaussian perturbation $\varepsilon$. Due to weight sharing and the linearity of the convolution operators, the second moments will only depend on the kernel matrix without the position information. A more detailed discussion is in Obs. \ref{prop:identical} and the appendix.

\noindent\textbf{Notations and setup:} 
Let $N$ be the number of channels. 
We use
$\Sigma$ as the covariance matrices of the perturbation across the channels unless otherwise noted. The input perturbation comes from Gaussian perturbation $\varepsilon$ where $\Sigma=\sigma^2 I$. As the image passes through the network, the input perturbation directly influences the output at each pixel as a function of the network parameters.
We use $\Sigma_i \in\mathbf{R}^{N\times N}$, shorthand for $\Sigma_{\mathbf{x}_i}$, to denote the covariance of the perturbation
distribution associated with pixel $i$ of image $\mathbf{x}$ denoted as $\mathbf{x}_i$.
We call $\Sigma[i,j]$ as the $(i,j)$-entry of $\Sigma$. Notice that the $N$ changes from one layer to the other as the number of channels are different. So, the size of $\Sigma$ will change.  
Let $M_q$ be the number of pixels in the $q^{th}$ layer input,
i.e., 
for $q=1$, $M_1$ is the number of pixels in the $1^{\text{st}}$ hidden layer of the network.

Similarly, $\boldsymbol{\mu}_{\mathbf{x}_i}$ or $\boldsymbol{\mu}_{i} \in \mathbf{R}^{N}$ is the mean of the distribution of the pixel $\mathbf{x}_i$ 
intensity after the perturbation. 
In the input layer, since the perturbation $\varepsilon\sim \mathcal{N}(0,\sigma^2 I)$, $\boldsymbol{\mu}_i=\mathbf{x}_i$. 
At the $u_\theta$ layer, the number of channels is the number of classes, with the 
number of pixels being $1$. 
We use $\boldsymbol{\mu}[c_{\mathbf{x}}]$ and $\Sigma[c_{\mathbf{x}},c_{\mathbf{x}}]$ to denote the $c_{\mathbf{x}}$ component of $\boldsymbol{\mu}$ and $(c_{\mathbf{x}}, c_{\mathbf{x}})$-entry of $\Sigma$ respectively. To denote the cross-correlation between two pixels $\mathbf{x}_i, \mathbf{x}_j$, we
use $E_{\mathbf{x}_i \mathbf{x}_j}$ or $E_{ij} \in \mathbf{R}^{N\times N}$.
Note that this cross-correlation is across channels.  For the special case with
channel size $N=1$, we will
use $\sigma^{(i)}\in \mathbf{R}$ to represent the variance in the $i^{\text{th}}$ layer. 
Let us define, 
\begin{align}
c_{\mathbf{x}}=\displaystyle\argmax_{c\in\mathcal{Y}} \boldsymbol{\mu}[c], \quad \widetilde{c}=\displaystyle\argmax_{{c}\in\mathcal{Y},{c}\neq c_{\mathbf{x}}} \boldsymbol{\mu}[c]
\end{align}

Let the number of classes $C = |\mathcal{Y}|$. Then, we can state the following. 
\begin{proposition}
	\label{Gaussian_estimate}
	Using $u_{\theta}$, the prediction of the model
        can be written as  $f_\theta(\mathbf{x})=\argmax_{c\in\mathcal{Y}}\text{ softmax}(u_\theta(\mathbf{x}))$. Assume  $u_\theta(\mathbf{x}) \sim \mathcal{N}(\boldsymbol{\mu},\Sigma)$, where $\mathbf{x} \sim \mathcal{N}(\boldsymbol{\mu}_{\mathbf{x}},\Sigma_{\mathbf{x}})$, $ \boldsymbol{\mu} \in \mathbf{R}^C$ and $\Sigma \in \mathbf{R}^{C\times C}$. Then the estimation of $\underline{p_{c_{\mathbf{x}}}}$ is
        {
		\setlength{\abovedisplayskip}{3pt}
		\setlength{\belowdisplayskip}{0pt}
        \begin{align}\underline{p_{c_{\mathbf{x}}}} &= \Phi(\frac{\boldsymbol{\mu}[{c_{\mathbf{x}}}]-\boldsymbol{\mu}[{\widetilde{c}}]}{\sqrt{\Sigma[{c_{\mathbf{x}},c_{\mathbf{x}}}]+\Sigma[{\widetilde{c},\widetilde{c}}]-2\Sigma[{c_{\mathbf{x}},\widetilde{c}}]}})\end{align}
        }
        \end{proposition}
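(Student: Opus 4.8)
The plan is to collapse the multi-class prediction event down to the single decisive pairwise comparison between the top class $c_{\mathbf{x}}$ and the runner-up $\widetilde{c}$, and then use the elementary fact that any linear functional of a Gaussian vector is again Gaussian. First I would recall what $\underline{p_{c_{\mathbf{x}}}}$ is meant to be: a lower bound for $\mathbb{P}(f_\theta(\mathbf{x}+\boldsymbol{\varepsilon})=c_{\mathbf{x}})=\mathbb{P}\big(u_\theta^{c_{\mathbf{x}}}(\mathbf{x}+\boldsymbol{\varepsilon})\ge u_\theta^{c'}(\mathbf{x}+\boldsymbol{\varepsilon})\ \text{for all } c'\in\mathcal{Y}\big)$ — indeed, letting $\beta\to\infty$ in the soft randomized-smoothing expression sends the softmax weight to the indicator $\mathbbm{1}[c_{\mathbf{x}}=\argmax_c u_\theta^c(\mathbf{x}+\boldsymbol{\varepsilon})]$. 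Because $\widetilde{c}$ is by definition the most confusable label (the $\argmax$ of $\boldsymbol{\mu}$ over $c\neq c_{\mathbf{x}}$), the binding constraint in that event is the one against $\widetilde{c}$, so I estimate $\underline{p_{c_{\mathbf{x}}}}$ by $\mathbb{P}\big(u_\theta^{c_{\mathbf{x}}}(\mathbf{x})>u_\theta^{\widetilde{c}}(\mathbf{x})\big)$; this is an equality when $C=2$, and otherwise an (optimistic) surrogate since $\{\argmax_c u_\theta^c=c_{\mathbf{x}}\}\subseteq\{u_\theta^{c_{\mathbf{x}}}\ge u_\theta^{\widetilde{c}}\}$.

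Next I would set $Z:=u_\theta^{c_{\mathbf{x}}}(\mathbf{x})-u_\theta^{\widetilde{c}}(\mathbf{x})$, which is the image of $u_\theta(\mathbf{x})\sim\mathcal{N}(\boldsymbol{\mu},\Sigma)$ under the linear map $v\mapsto v[c_{\mathbf{x}}]-v[\widetilde{c}]$, i.e.\ the inner product of $u_\theta(\mathbf{x})$ with the vector having $+1$ in coordinate $c_{\mathbf{x}}$, $-1$ in coordinate $\widetilde{c}$, and $0$ elsewhere. Hence $Z$ is univariate Gaussian; its mean is $\boldsymbol{\mu}[c_{\mathbf{x}}]-\boldsymbol{\mu}[\widetilde{c}]$, and evaluating the variance as the corresponding quadratic form in $\Sigma$ gives $\mathrm{Var}(Z)=\Sigma[c_{\mathbf{x}},c_{\mathbf{x}}]+\Sigma[\widetilde{c},\widetilde{c}]-2\Sigma[c_{\mathbf{x}},\widetilde{c}]$.

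Finally I would standardize. With $m=\mathbb{E}[Z]$ and $s=\sqrt{\mathrm{Var}(Z)}$, $\underline{p_{c_{\mathbf{x}}}}=\mathbb{P}(Z>0)=\mathbb{P}\big(\tfrac{Z-m}{s}>-\tfrac{m}{s}\big)=1-\Phi(-m/s)=\Phi(m/s)$, the last step being the symmetry $1-\Phi(-t)=\Phi(t)$ of the standard Normal CDF; plugging in $m$ and $s$ yields exactly the stated expression. The only non-routine point is the very first reduction: for $C>2$ the pairwise probability is not literally $\underline{p_{c_{\mathbf{x}}}}$, so I would be explicit that the identity is exact for binary classification and is adopted as the working estimate in general (and that, via the inclusion above, it over-estimates the true $\argmax$ probability, matching the $\beta\to\infty$ limit of the soft-smoothing objective used earlier).
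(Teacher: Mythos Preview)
Your argument is correct and matches the paper's own proof essentially step for step: reduce to the pairwise event $\{u_\theta^{c_{\mathbf{x}}}>u_\theta^{\widetilde{c}}\}$, observe that the difference is a univariate Gaussian with mean $\boldsymbol{\mu}[c_{\mathbf{x}}]-\boldsymbol{\mu}[\widetilde{c}]$ and variance $\Sigma[c_{\mathbf{x}},c_{\mathbf{x}}]+\Sigma[\widetilde{c},\widetilde{c}]-2\Sigma[c_{\mathbf{x}},\widetilde{c}]$, then standardize and use $1-\Phi(-t)=\Phi(t)$. Your added caveat that for $C>2$ the pairwise probability is only a surrogate (and in fact upper-bounds the true $\argmax$ probability) is a point the paper leaves implicit, so you are slightly more careful there.
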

        \vspace{-0.2em}

  Notice that propagating $\boldsymbol{\mu}$ through the network is simple, since tracking the mean is the same as directly passing it through the network when there is no nonlinear activation, and requires no cross-correlation between pixels. But tracking $\Sigma$ at each step of the network can be challenging and some approximating techniques have been used in literature for simple networks \cite{scaglione2008decentralized}.
  To see this, let us consider a simple 1-D example. 
 
 \noindent{\bf Bookkeeping problem:} 
 Consider a simple 1-D convolution with a kernel size $k$. By Obs. \ref{Gaussian_estimate}, we will need the distribution of  $u_\theta(\mathbf{x})$ of the $i^{th}$ layer (i.e., the network without the softmax layer).
 Directly, this will involve taking into 
 account $k^1$ pixels in the $(i-1)^{th}$ layer, and $k^2$ pixels in the $(i-2)^{th}$ layer. 
 We must calculate the covariance $\Sigma$ and also calculate the cross-correlation $E$ between all $k^q$ pixels in $(i-q)^{th}$ layer.  
{
This trend stops when we 
hit $k^q > M_{i-q}$, where $M_{i-q}$ is the number of pixels at $(i-q)$ layer, but 
it is impractical anyway. 

If we temporarily assume that the network involves no activation functions, 
and if the input perturbation is identical 
for all pixels, then the variance of all 
pixels after perturbation is also 
identical. Thus, the variance of each pixel only relies on the variance of the perturbation and  
not  on the pixel intensity.
This may allow us to track one covariance matrix instead of $M$ for all $M$ pixels.} 

\vspace{-0.2em}
\begin{proposition}
\label{prop:identical}
	With the input perturbation $\boldsymbol{\varepsilon}$ set to be identical along the spatial dimension and without nonlinear activation function, for $q^{th}$ hidden convolution layer with $\{\mathbf{h}_i\}_{i=1}^{M_q}$ output pixels, we have $\Sigma_{\mathbf{h}_i}^{(q)} = \Sigma_{\mathbf{h}_j}^{(q)},\forall i, j\in \{1, \cdots, M_q\}$.
\end{proposition}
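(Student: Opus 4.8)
The plan is an induction on the layer index, but with an inductive hypothesis that is deliberately stronger than the statement: besides ``all output pixels of a layer share one perturbation covariance'', I will also carry ``the cross-correlation $E_{\mathbf{h}_i\mathbf{h}_j}$ between the perturbations of two output pixels depends only on the spatial offset $i-j$'' (i.e., the perturbation field is wide-sense stationary across pixels). The strengthening is forced on us because a convolution output pixel is a linear combination of a \emph{window} of input pixels, so $\Sigma_{\mathbf{h}_i}$ is built from the cross-correlations $E^{(q-1)}$ of the previous layer, not just its marginal covariances; with no handle on how $E^{(q-1)}$ varies over the image, the equality $\Sigma_{\mathbf{h}_i}=\Sigma_{\mathbf{h}_j}$ cannot be propagated. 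The base case is the input: the perturbation attached to pixel $i$ is the slice $\boldsymbol{\varepsilon}_i$ of $\boldsymbol{\varepsilon}\sim\mathcal{N}(\mathbf{0},\sigma^2I)$, so every pixel has marginal covariance $\sigma^2I$ and the cross-correlation of pixels $i,j$ is $\sigma^2I$ when $i=j$ and $\mathbf{0}$ otherwise --- a function of $i-j$ alone. Both clauses of the hypothesis hold.

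For the inductive step, write the (activation-free) $q$-th layer as $\mathbf{h}_i=\sum_a W_a\,\mathbf{z}_{\pi(i,a)}+\mathbf{b}$, where $\mathbf{z}$ are the layer-$(q-1)$ pixels, $a$ ranges over kernel offsets, and $\pi(i,a)=s\,i+a$ encodes a stride-$s$ receptive field. Adding the deterministic terms $\mathbf{b}$ and $\mathbb{E}[\mathbf{z}]$ does not change second moments of the perturbation, so $\Sigma_{\mathbf{h}_i}^{(q)}=\sum_{a,a'}W_a\,E^{(q-1)}_{\mathbf{z}_{\pi(i,a)}\mathbf{z}_{\pi(i,a')}}\,W_{a'}^{\!\top}$; by the inductive hypothesis $E^{(q-1)}_{\mathbf{z}_p\mathbf{z}_{p'}}$ depends only on $p-p'=\pi(i,a)-\pi(i,a')=a-a'$, so the whole sum is independent of $i$, giving $\Sigma_{\mathbf{h}_i}^{(q)}=\Sigma_{\mathbf{h}_j}^{(q)}$. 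Running the same computation for $E^{(q)}_{\mathbf{h}_i\mathbf{h}_j}=\sum_{a,a'}W_a\,E^{(q-1)}_{\mathbf{z}_{\pi(i,a)}\mathbf{z}_{\pi(j,a')}}\,W_{a'}^{\!\top}$ shows it depends only on $\pi(i,a)-\pi(j,a')=s(i-j)+(a-a')$, hence only on $i-j$; this reinstates the strengthened hypothesis at layer $q$ and closes the induction. Fully-connected layers, average pooling, strided convolutions, and $1\times1$ channel mixing are all affine and slot into the same template once $W_a$ and the moments are kept as $N\times N$ matrices.

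The step I expect to be the real obstacle is boundary handling. At the image borders the receptive field of a convolution is truncated or zero-padded, so $\pi(i,a)$ is no longer a genuine translate and strict stationarity fails near the edges. I would state the result as exact for \emph{valid} convolutions (no padding reads an in-bounds window at every output location) and under circular padding (where $\pi(i,a)=(s\,i+a)\bmod M$ with $M$ the spatial size keeps the offset argument intact), and observe that for ordinary zero-padding the deviation is confined to an $O(k)$-wide strip along the borders and is absorbed into the approximation error quantified in the experiments. A second point to respect throughout is that the argument hinges on the absence of nonlinearities: only then is the perturbation field jointly Gaussian and exactly linear in $\boldsymbol{\varepsilon}$, so that tracking $\boldsymbol{\mu}$ and $\Sigma$ suffices; reintroducing an activation makes the equality approximate and is exactly what the following sections address.
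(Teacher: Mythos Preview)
Your proposal is correct and, like the paper, proceeds by induction on the layer index. The main difference is in how the cross-correlation terms are handled: you explicitly strengthen the inductive hypothesis to wide-sense stationarity of the perturbation field (cross-correlations depend only on the spatial offset $i-j$), which lets the inductive step close in one line for both $\Sigma^{(q)}$ and $E^{(q)}$. The paper's proof instead carries only ``identical marginal covariance'' as the hypothesis, notes that the troublesome piece is $E_{m,n}^{(q)}$, and deals with it by unrolling the recursion $E_{m,n}^{(q)}=g^{(q)}(\Sigma^{(q-1)},E_{m',n'}^{(q-1)},W^{(q-1)})$ all the way back to the input layer, where everything is determined by the single $\Sigma^{(1)}$; it then concludes that $\Sigma^{(q)}$ is a fixed composition $f^{(q)}\circ\cdots\circ f^{(1)}(\Sigma^{(1)})$ independent of pixel position. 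Your route is tighter and more transparent, and it surfaces the boundary/padding issue naturally (stationarity is exact under valid or circular convolution, approximate near the borders under zero-padding), a point the paper's proof does not raise. The paper does add one remark you omit: once Theorem~\ref{tight_bound} is applied at every layer, the pixels are \emph{treated} as independent, so $E_{m,n}^{(q)}\equiv 0$ and the identity of covariances follows trivially without any stationarity bookkeeping.
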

\vspace{-0.3em}

The Obs. \ref{prop:identical}
only reduces the cost marginally:
instead of computing all the covariances of the perturbation for all pixels, $\Sigma_1^{(i-q)}, \Sigma_2^{(i-q)},\cdots,\Sigma_k^{(i-q)}$, 
we only need to compute a single $\Sigma^{(i-q)}$. 
Unfortunately, we still need to compute all different $E_{ij}^{(i-2)}$ that will contribute to $u_\theta(\mathbf{x})$  (also see worked out example in the appendix).
Thus, due to these cross-correlation terms $E_{ij}$, the overall computation is still not feasible.
In any case, the assumption itself is unrealistic: we {\em do} need to 
take nonlinear activations into account 
which will break the identity assumption of the second moments. 
For this reason, we explore a useful 
approximation 
which we discuss next. 

\subsection{How To Make Distribution Tracking Feasible}
From the previous discussion, we observe that a key bottleneck of tracking distribution across layers is to track the interaction
between pairs of pixels, i.e., cross-correlations. Thus, we need an estimate of
the cross-correlations between pixels.
In \cite{hanebeck2001tight}, the authors provide an upper-bound on the
joint distribution of two multivariate Gaussian random variables such that the 
upper bounding
distribution contains {\bf no cross-correlations}. This result will be crucial for us. 

Formally, let $\mathbf{x}_1, \mathbf{x}_2\in\mathbf{R}^N$ be  two random vectors representing two pixels with $N$ channels. Without any loss of generalization, assume that $\mathbf{x}_1\sim\mathcal{N}(\mathbf{0}, \Sigma_{1})$, and $\mathbf{x}_2\sim\mathcal{N}(\mathbf{0},\Sigma_{2})$ (if the mean is not $\mathbf{0}$, we can subtract the mean without affecting the covariance matrix).  Also, assume that we do not know the cross-correlation between $\mathbf{x}_1$ and $\mathbf{x}_2$, i.e., ${E}_{12}$. Instead, the correlation coefficient $r$ is bounded by $r_{max}$, i.e., $|r|\leq r_{max}$.  

With the above assumptions, we can
bound the covariance matrix of the joint distribution of two $N$-dimensional
random vectors ${\mathbf{x}_1},{\mathbf{x}_2}$
by two independent random vectors $\widehat{\mathbf{x}_1},\widehat{\mathbf{x}_2}$.
We will use the notation `` $\widehat{\cdot}$ '' to denote the upper bound estimation of ``$\cdot$''. The upper bound here means that $[\widehat{\Sigma}-\Sigma]$ is a positive semi-definite matrix, where $\widehat{\Sigma}$ is the joint distribution of the two independent random vectors $\widehat{\mathbf{x}_1}$ and $\widehat{\mathbf{x}_2}$. Here, $\Sigma$ is the joint distribution of ${\mathbf{x}_1},{\mathbf{x}_2}$ with correlation. Formally, 
\begin{theorem}
    \label{tight_bound}
    \cite{hanebeck2001tight} When $\widehat{\mathbf{x}_1}\sim \mathcal{N}(\mathbf{0}, \widehat{\Sigma_{1}} = \tau_1\Sigma_{1})$, and $\widehat{\mathbf{x}_2}\sim \mathcal{N}(\mathbf{0}, \widehat{\Sigma_{2}} = \tau_2\Sigma_{2})$, the covariance matrix $ \mathbf{B} = \widehat{\Sigma} = \begin{bmatrix} \tau_1\Sigma_{1} & 0 \\ 0 & \tau_2\Sigma_{2}\end{bmatrix}$ bounds the joint distribution of $\mathbf{x}_1$ and $\mathbf{x}_2$, i.e., $\mathbf{B} \succeq  \Sigma =  \begin{bmatrix} \Sigma_{1} & E_{12} \\ E_{21} & \Sigma_{2}\end{bmatrix}$, where $\tau_1=\frac{1}{\eta-\kappa}$, $\tau_2=\frac{1}{\eta+\kappa}$, $\kappa^2\leq \frac{1-2\eta}{1-r_{max}^2}+\eta^2$, and $0.5\leq \eta\leq \frac{1}{1+r_{max}}$.
\end{theorem}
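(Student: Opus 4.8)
The plan is to verify the positive-semidefinite ordering $\mathbf{B}\succeq\Sigma$ directly, by showing that the difference
\[
\mathbf{B}-\Sigma=\begin{bmatrix}(\tau_1-1)\Sigma_{1} & -E_{12}\\ -E_{21} & (\tau_2-1)\Sigma_{2}\end{bmatrix}
\]
induces a nonnegative quadratic form, and only at the very end substituting the parametrization $\tau_1=1/(\eta-\kappa)$, $\tau_2=1/(\eta+\kappa)$ to see that the requirement collapses exactly to the stated scalar constraints on $\eta,\kappa,r_{max}$.

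\noindent\textbf{Step 1 (normalize the cross term).} Since the true joint covariance $\Sigma$ is itself a valid covariance matrix, $\Sigma\succeq 0$, so $E_{12}$ factors as $E_{12}=\Sigma_{1}^{1/2}R\,\Sigma_{2}^{1/2}$ for some $R$ supported on the ranges of $\Sigma_{1},\Sigma_{2}$, and the hypothesis $|r|\le r_{max}$ is read as the operator-norm bound $\|R\|\le r_{max}$ (note that $\Sigma\succeq0$ alone already forces $\|R\|\le 1$). Writing $a=\Sigma_{1}^{1/2}v_1$, $b=\Sigma_{2}^{1/2}v_2$, for arbitrary $v_1,v_2$ we get
\[
(v_1^\top,v_2^\top)\,(\mathbf{B}-\Sigma)\,(v_1^\top,v_2^\top)^\top=(\tau_1-1)\|a\|^2+(\tau_2-1)\|b\|^2-2\,a^\top R\,b .
\]

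\noindent\textbf{Step 2 (bound the cross term by completion of squares).} Using Cauchy--Schwarz and $\|R\|\le r_{max}$, for $\tau_1>1$ we have $2\,a^\top Rb\le 2r_{max}\|a\|\|b\|\le (\tau_1-1)\|a\|^2+\tfrac{r_{max}^2}{\tau_1-1}\|b\|^2$, so the quadratic form above is nonnegative as soon as $\tau_1\ge 1$, $\tau_2\ge1$, and $(\tau_1-1)(\tau_2-1)\ge r_{max}^2$; the degenerate endpoints $\tau_i=1$ (which force the off-diagonal block to vanish) are handled separately by continuity. This is the entire PSD content of the theorem.

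\noindent\textbf{Step 3 (plug in the parametrization).} With $\tau_1=1/(\eta-\kappa)$, $\tau_2=1/(\eta+\kappa)$ one computes $\tau_1-1=\frac{1-\eta+\kappa}{\eta-\kappa}$, $\tau_2-1=\frac{1-\eta-\kappa}{\eta+\kappa}$, hence
\[
(\tau_1-1)(\tau_2-1)=\frac{(1-\eta)^2-\kappa^2}{\eta^2-\kappa^2}.
\]
Assuming $\eta^2>\kappa^2$ (equivalently $\tau_1,\tau_2>0$), the inequality $(\tau_1-1)(\tau_2-1)\ge r_{max}^2$ rearranges to $\kappa^2(1-r_{max}^2)\le(1-\eta)^2-r_{max}^2\eta^2$, i.e. $\kappa^2\le\frac{1-2\eta}{1-r_{max}^2}+\eta^2$, precisely the stated bound; likewise $\tau_2\ge1\iff\eta+\kappa\le1$ and $\tau_1\ge1\iff 0<\eta-\kappa\le1$, and one checks that the range $0.5\le\eta\le\frac{1}{1+r_{max}}$ together with the $\kappa$-bound makes all of these feasible (e.g. $\eta=\tfrac{1}{1+r_{max}}$ forces $\kappa=0$, $\tau_1=\tau_2=1+r_{max}$, giving equality). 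The only genuinely fiddly part is this bookkeeping of the feasible region and the singular-$\Sigma_i$ / endpoint cases; the core inequality is the one-line completion of squares in Step 2, and everything else is algebraic verification.
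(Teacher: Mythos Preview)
The paper does not actually prove this theorem: it is quoted verbatim from \cite{hanebeck2001tight} and used as a black box, so there is no ``paper's own proof'' to compare against. Your argument, however, is a correct and self-contained derivation of the result. The factorization $E_{12}=\Sigma_1^{1/2}R\,\Sigma_2^{1/2}$ with $\|R\|\le r_{max}$ is the right way to encode the correlation bound, the completion-of-squares in Step~2 cleanly isolates the scalar condition $(\tau_1-1)(\tau_2-1)\ge r_{max}^2$, and your algebra in Step~3 correctly recovers the stated constraint $\kappa^2\le\frac{1-2\eta}{1-r_{max}^2}+\eta^2$ together with the feasibility of the $(\eta,\kappa)$ region. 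One small point worth making explicit: in Step~1 you need $\Sigma_1,\Sigma_2\succ0$ (or at least to restrict to their ranges) for the square roots and the change of variables $a=\Sigma_1^{1/2}v_1$ to be invertible; you allude to this with ``supported on the ranges'' but a reader might appreciate one more sentence. Otherwise the proof is sound and arguably more transparent than the original reference, since it exposes the single scalar inequality driving everything.
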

With this result in hand, we now discuss how to use it to
makes the tracking of moments across layers feasible. \\
\textbf{How to use Theorem \ref{tight_bound}?} 
By Obs. \ref{prop:identical}, we can store {\it one covariance matrix} over the convolved output pixels at each layer.
%
Notice that due to the presence of the cross-correlation between output pixels,
we also need to store cross-correlation matrices, which was our bottleneck!
But with the help of Theorem \ref{tight_bound}, we can essentially construct
independent convolved outputs, called $\left\{\widehat{\mathbf{h}_i}\right\}$, that bound the covariance of
the original convolved  outputs,  $\left\{\mathbf{h}_i\right\}$.
To apply this theorem, we need to estimate the bounding covariance matrix $\mathbf{B}$,
which can be achieved with the following simple steps (the notations are consistent with Theorem \ref{tight_bound})
\begin{compactenum}[\bfseries (a)] \item We estimate the bound on correlation coefficient $r_{max}$ \item Assign $\eta=\frac{1}{1+r_{max}}$ \item Assign $\kappa = 0$ which essentially implies $\tau_1 = \tau_2$\end{compactenum}

\noindent{\it Remark:} When computing the variance $\Sigma[{c_{\textbf{x}}, c_{\textbf{x}}}]$ in the $i^{th}$ layer, we need only $k$ upper bound of covariances $\widehat{\Sigma_1}^{(i-1)},\widehat{\Sigma_2}^{(i-1)},\cdots , \widehat{\Sigma_k}^{(i-1)}$ from the $(i-1)^{th}$ layer. Moreover, using the assumption that the covariance matrices of the $(i-1)^{\text{th}}$
layers to be identical across pixels when the input perturbation is identical, we only compute $\widehat{\Sigma}^{(i-1)}$, which in turn requires
computing only one upper bound of covariance. {\it Hence, the computational cost reduces to linear in terms of the
  depth of the network}. 
  
\noindent{\it Ansatz:}  The assumption of identical pixels (when removing the mean) is sensible when the network is linear. 
But the assumption is undesirable. 
So, we will need a mechanism to deal with the nonlinear activation function setting. 
Further, we will need to design 
the mechanics of how to track the mean and covariance for different type of layers. We will describe the details next.

\subsection{Robust Training By Propagating Covariances}
\label{sec:train_layers}

\begin{figure*}[t]
        \centering
               \includegraphics[height=0.38\columnwidth]{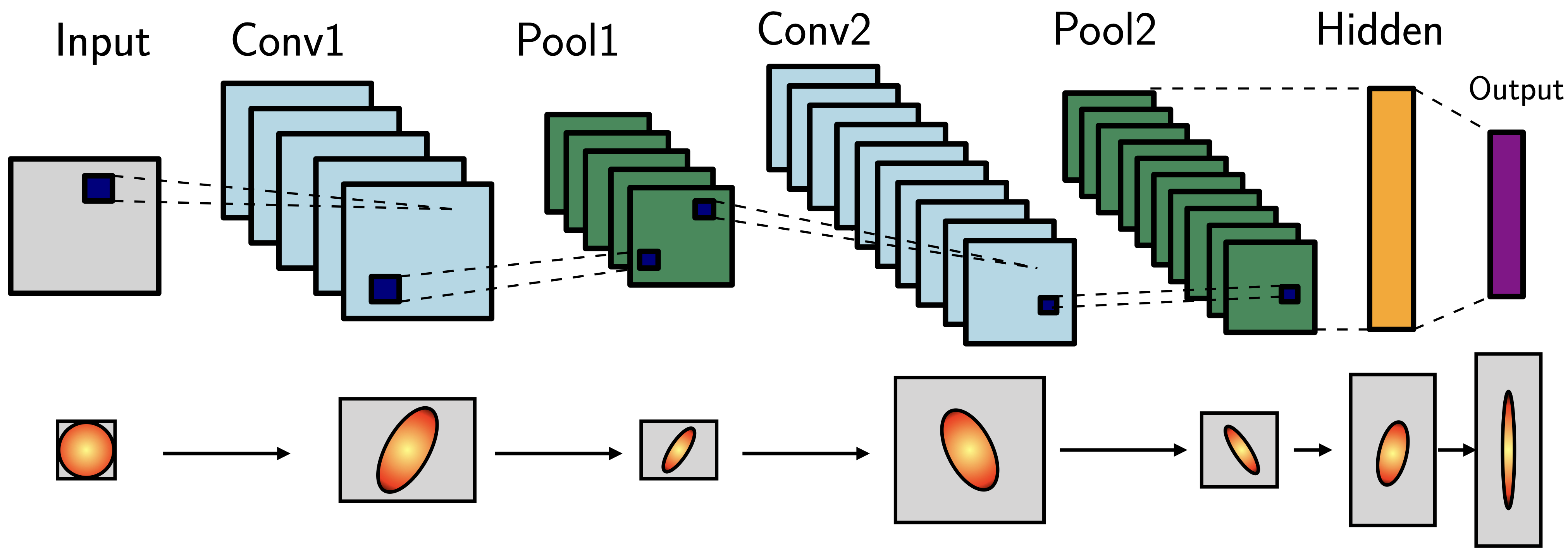}
               \includegraphics[height=0.2\columnwidth]{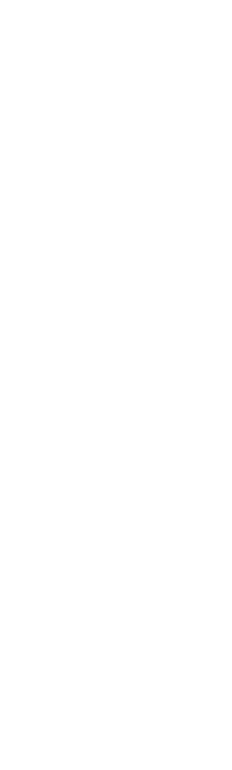}
               \includegraphics[height=0.38\columnwidth]{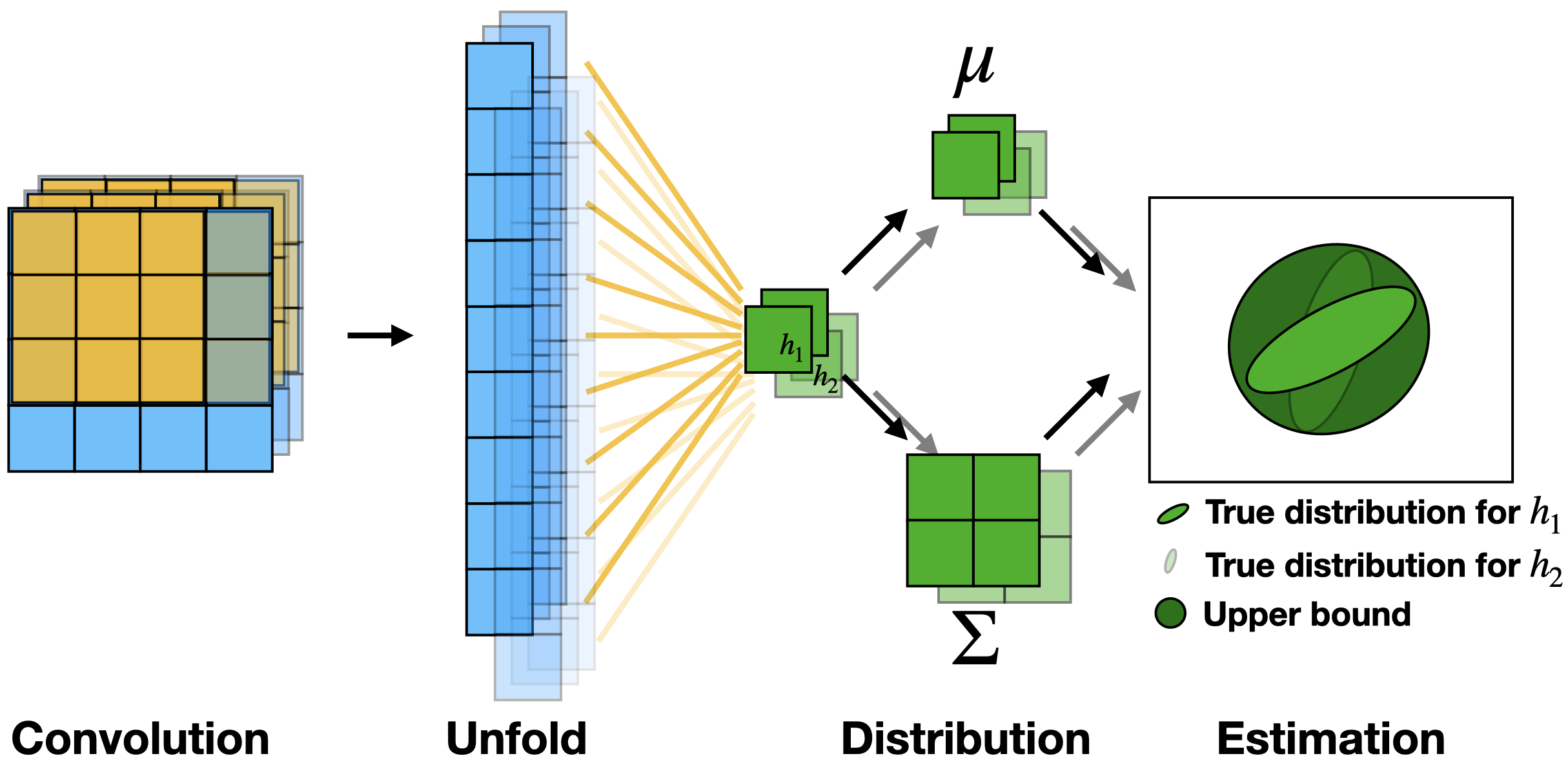}
               \caption{\footnotesize \textbf{(a) Left:} The LeNet with tracking the bounding box or the covariance matrices over each layer. The covariance matrices are denoted as the ovals. Since bounding boxes are proportional to $||W||_1$, while covariance matrices are proportional to $||W||_2$, the covariance-based upper bound will be tighter than the box-base one. \textbf{(b) Right: } The yellow blocks are the kernel of convolution, while the blue blocks are the data. After computing the distribution, we use an upper bound to remove the dependency of two pixels $h_1,h_2$. }\label{interval_and_cov}
\end{figure*}

\noindent{\bf Overview:} We described simplifying the computation cost by tracking the upper bounds on the perturbation of
the independent pixels. 
We introduce details of an  efficient technique to track the
covariance of the distribution across different types of layers in a CNN. 
We will also describe how to deal with nonlinear activation functions.

We treat the $i^{th}$ pixel, after perturbation, as drawn from a Gaussian distribution
$\mathbf{x}_{i} \sim \mathcal{N}(\boldsymbol{\mu}_{i}, \Sigma)$, where $\boldsymbol{\mu}_{i}\in \mathbf{R}^N$
and $\Sigma$ is the covariance  matrix across the channels (note that $\Sigma$ is the same across pixels for the same layer).
We may remove the indices to simplify the formulation and 
avoid clutter.
A schematic showing propagating the
distribution across LeNet \cite{bengio2007scaling} model, for simplicity, is shown in Fig. \ref{interval_and_cov}(a) denoted
by the colored ovals.

To propagate the distribution through the whole network, we need a way to propagate the moments through the layers, 
including commonly used network modules, such as convolution and fully connected layers. 
Since the batch normalization layer normally has a large Lipschitz constant, we do not include the batch normalization layer in the network.
We will introduce the high-level idea, while the low-level details are in the appendix.

\noindent{\bf{Convolution layer:} }
Since the convolution layer is a linear operator, the covariance of an output pixel $\Sigma_{\mathbf{h}} \in \mathbf{R}^{N_{out}\times N_{out}}$ is defined as 
$\Sigma_{\mathbf{h}}=W^T\widetilde{\Sigma} W$. Here, let $\widetilde{\mathbf{x}} \in \mathbf{R}^{N_{in}k^2}$ be the vector consisting of all the independent variables inside a $k\times k$ kernel $\{\mathbf{x}_i\}$, $\widetilde{\Sigma} \in \mathbf{R}^{N_{in}k^2\times N_{in}k^2}$ is the covariance of the concatenated $\widetilde{\mathbf{x}}$. $W$ is the reshaped weight matrix of the shape $N_{in}k^2\times N_{out}$.


We need to apply Theorem \ref{tight_bound} to compute the upper bound of $\Sigma_{\mathbf{h}}$ 
as $\widehat{\Sigma_{\mathbf{h}}}=(1+r_{max})W^T\widetilde{\Sigma} W$ to avoid the computational costs of the dependency from cross-correlations.
A pictorial description of propagating moments through the convolution layer is shown in Fig. \ref{interval_and_cov}(b).

\noindent{\bf{First (and other) linear layers:} }
The first linear layer can be viewed as a special case of convolution with kernel size equal to the input spatial dimension. Since there will only be one output neuron $\mathbf{h}$ (with channels), there is no need to break the cross-correlation between neurons.
Thus, $\Sigma_{\mathbf{h}}=W^T\Sigma_{\mathbf{\widetilde{x}}} W$ and takes a form similar to the convolution layer.

\noindent{\it Special case:} From Obs. \ref{Gaussian_estimate}, we only need the largest two intensities to estimate the $\underline{p_{c_{\mathbf{x}}}}$ in the $u_\theta(\mathbf{x})$ layer. Thus, if there is only one linear layer as the last layer in the $u_\theta(\mathbf{x})$, 
as in most of ResNet like models, this can be further simplified. We only need to consider the covariance matrix between $c_{\mathbf{x}}$ and $\widetilde{c}$ index of $u_\theta(\mathbf{x})$. Thus, this will need calculating a $2\times 2$ covariance matrix instead of a $C \times C$ matrix.

On the other hand, if the network consists of multiple linear layers,
calculating the moments of the subsequent linear layers must be handled differently.
Let $\mathbf{x} \sim \mathcal{N}\left(\boldsymbol{\mu}^{(i)}_{\mathbf{x}}, \Sigma^{(i)}_{\mathbf{x}}\right) \in \mathbf{R}^{N_i}$ be the input of the $i^{th}$ linear layer given by 
$\mathbf{h} = W_i^T\mathbf{x} + \mathbf{b}_i$, then $$\mathbf{h}  \sim \mathcal{N}\left(W_i^T\boldsymbol{\mu}^{(i)}_{\mathbf{x}}+\mathbf{b}_i, W_i^T\Sigma^{(i)}_{\mathbf{x}}W_i\right).$$ Here, $W_i \in \mathbf{R}^{N_{i}\times N_{i+1}}$, $\mathbf{b}_i \in \mathbf{R}^{N_{i+1}}$, and $\mathbf{h} \in \mathbf{R}^{N_{i+1}}$. 

\noindent{\bf{Pooling layer:} }
Recall that the input of a max pooling layer is $\{\mathbf{x}_i\}$ where each $\mathbf{x}_i\in \mathbf{R}^{N_{in}}$ and the index $i$ varies over the spatial dimension. Observe that as we identify each $\mathbf{x}_i$ by the respective distribution $\mathcal{N}\left(\boldsymbol{\mu}_{i}, \Sigma\right)$, applying max pooling over $\mathbf{x}_i$ essentially requires computing the maximum over $\left\{\mathcal{N}\left(\boldsymbol{\mu}_{i}, \Sigma\right)\right\}$, which is not a well-defined operation. Thus, we restrict ourselves to average pooling. 
This can be viewed as a special case of the convolution layer with no overlapping and the fixed kernel: $\mathbf{h}\sim\mathcal{N}\left(\frac{1}{k^2} \sum_{\mathbf{x}_i\in \mathbb{W}} \boldsymbol{\mu}_{i}, \frac{\Sigma}{k^2} \right)$, $\mathbb{W}$ is the kernel window.

\noindent{\bf{Normalization layer:} }
For the normalization layer, given by $\mathbf{h} = (\mathbf{x} - \mu')/\sigma'$, where $\mu',\sigma'$ can be computed in different ways \cite{ioffe2015batch,ba2016layer,ulyanov2016instance}, we have $\mathbf{h}  \sim \mathcal{N}\left( (\boldsymbol{\mu}^{(i)}_{\mathbf{x}}-\mu')/\sigma', \Sigma^{(i)}_{\mathbf{x}}/\sigma'^2 \right)$. However, as the normalization layers often have large Lipschitz constant \cite{awais2020towards}, we omit these layers in this work.

\noindent{\bf{Activation layer:} }
This is the final missing piece in efficiently tracking the moments. The overall goal is to find an identical upper bound of the second moments after the activation layer when the input vectors share identical second moments. Also, the first moments should be easier to compute, and ideally, will have a closed form. 
In \cite{bibi2018analytic,lee2019probact}, the authors introduced a scheme to compute the mean and variance after a ReLU operation. 
Since ReLU is an element-wise operation, for each element (a scalar), assume $x\sim \mathcal{N}(\mu,\sigma^2)$. After ReLU activation, the first and second moments of the output are given by:
\setlength{\abovedisplayskip}{0pt}
\setlength{\belowdisplayskip}{0pt}
\begin{align*}
    \mathbb{E}(\text{ReLU}(x))&=\frac{1}{2}\mu-\frac{1}{2}\mu\ \textsf{erf}(\frac{-\mu}{\sqrt{2}\sigma}) + \frac{1}{\sqrt{2\pi}}\sigma\exp(-\frac{\mu^2}{2\sigma^2}),\\
    \text{var}(\text{ReLU}(x))&<\text{var}(x)
\end{align*}
Here, $\textsf{erf}$ is the Error function. 
Since we want an identical upper bound of the covariance matrix after $\text{ReLU}$, as well as the closed form of the mean, 
we use $\text{ReLU}(\mathbf{x}) \sim \mathcal{N}\left(\boldsymbol{\mu}_a, \Sigma_a\right)$ where, 
\setlength{\abovedisplayskip}{0pt}
\setlength{\belowdisplayskip}{0pt}
\begin{align*}
    \boldsymbol{\mu}_a &=\frac{1}{2}\boldsymbol{\mu}-\frac{1}{2}\boldsymbol{\mu}\ \textsf{erf}(\frac{-\boldsymbol{\mu}}{\sqrt{2}\boldsymbol{\sigma}}) + \frac{1}{\sqrt{2\pi}}\boldsymbol{\sigma}\exp(-\frac{\boldsymbol{\mu}^2}{2\boldsymbol{\sigma}^2}), \\
    \Sigma_a & \preceq \Sigma
\end{align*}
$\boldsymbol{\sigma}$ is the square root of the diagram of $\Sigma$, $\boldsymbol{\mu}$ is the mean of the input vector. All the operators in the first equation are element-wise operators.

\noindent{\bf{Last layer/prediction:} }
The last layer is the layer before softmax layer,
which represents the ``strength'' of the model for a specific class. By Obs. \ref{Gaussian_estimate}, we have the
estimate 
$$p_{c_\mathbf{x}}=\underline{p_{c_\mathbf{x}}}=\Phi\left(\frac{\boldsymbol{\mu}[{c_\mathbf{x}}]-\boldsymbol{\mu}[{\widetilde{c}}]}{\sqrt{\Sigma[{{c_\mathbf{x}},{c_\mathbf{x}}}]+\Sigma[{\widetilde{c},\widetilde{c}}]-2\Sigma[{{c_\mathbf{x}},\widetilde{c}}]}}\right)$$ 
and $p_{\widetilde{c}}=\overline{p_{\widetilde{c}}}=1-p_{c_\mathbf{x}}$ as an upper bound estimation. 
By Theorem \ref{certified_radius}, the certified radius is 

\begin{align}
C_R =& \frac{\sigma}{2}(\Phi^{-1}(\underline{p_{c_\mathbf{x}}})-\Phi^{-1}(\overline{p_{\widetilde{c}}}))\\
=& \sigma \frac{\boldsymbol{\mu}[c]-\boldsymbol{\mu}[{\widetilde{c}}]}{\sqrt{\Sigma[{{c_\mathbf{x}},{c_\mathbf{x}}}]+\Sigma[{\widetilde{c},\widetilde{c}}]-2\Sigma[{{c_\mathbf{x}},\widetilde{c}}]}}.
\end{align}
\begin{figure}[tb]
\centering
\includegraphics[width=0.60\columnwidth]{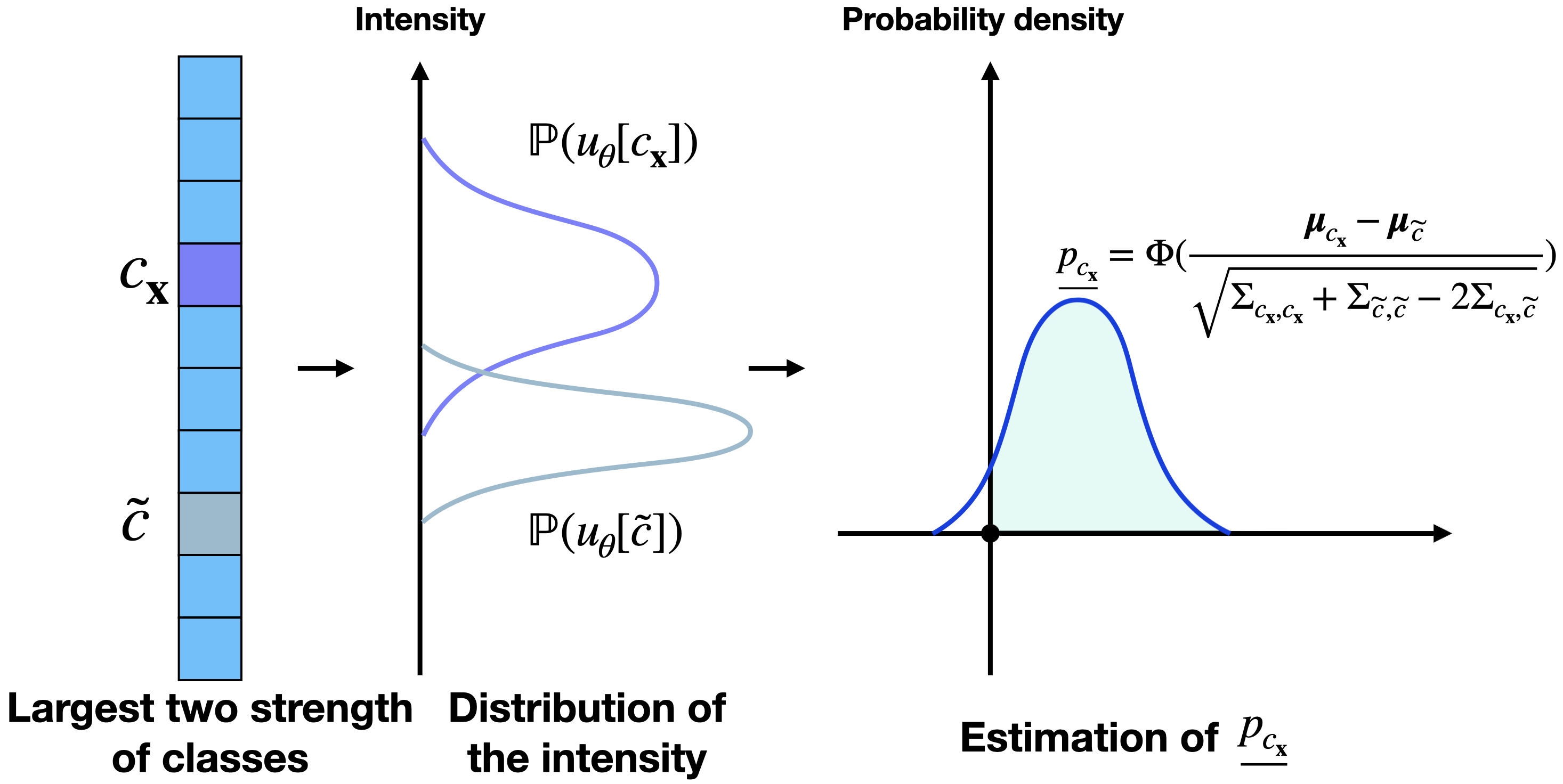}
               \caption{\footnotesize In the last layer, we first find the indexes of the largest two intensity ${c_\mathbf{x}},\widetilde{c}$. Then compute the $\underline{p_{c_\mathbf{x}}}$. }\label{lastlayer}\vspace*{-1em}
\end{figure}

{
\noindent{\bf {Network structures used:}}
In the experiment, we applied two types of network on different dataset, LeNet \cite{bengio2007scaling} and PreActResnet-18 \cite{he2016identity}. 

LeNet requires convolution layer, average pooling layer, activation layer, and linear layer. We build the network with three convolution layers with activation and pooling after each layer, and two linear layers. 

The structure of PreActResnet-18 is similar with two major differences -- the residual connection and it involves only one linear layer. For the residual connection, it can be viewed as a special type of linear layer. Due 
to the assumption of independence, the final covariance is the addition of two inputs. Also, there is only one linear layer as the final layer. Thus, we can reduce the cost of computing the whole covariance matrix to only computing the covariance matrix of the largest two intensities. 

As discussed above, we removed all batch normalization layers within the network as well as replaced all max pooling operations to the average pooling layer in the network structure.
Our experiments suggest that there 
is minimal impact on performance. 

}

\begin{table}[tb]
\caption {\footnotesize A review of different layers. Here, $\boldsymbol{\mu}_i,\Sigma_i$ is the mean and covariance matrix of the input channels, while $\boldsymbol{\mu}_o,\Sigma_o$ is the mean and covariance after that layer. } \label{tab:layer} 
\centering
\scalebox{0.73}{
\begin{tabular}{c|c|c|c|c}
\topline\myrowcolour
 & Convolution & Linear & Pooling & Activation\\
\hline
$\boldsymbol{\mu}_o$ & $conv(\boldsymbol{\mu}_i,W)+b$ & $W^T\boldsymbol{\mu}_i+b$ & $\frac{1}{k^2}\sum \boldsymbol{\mu}_i$ & $\frac{1}{2}\boldsymbol{\mu}_i-\frac{1}{2}\boldsymbol{\mu}_i\ \textsf{erf}(\frac{-\boldsymbol{\mu}_i}{\sqrt{2}\boldsymbol{\sigma}})$ 
\\ & & & & $+ \frac{1}{\sqrt{2\pi} \boldsymbol{\sigma}}\exp(-\frac{\boldsymbol{\mu}_i^2}{2\boldsymbol{\sigma}^2})$\\
\hline
$\Sigma_o$ & $(1+r_{max}) W^T \widetilde{\Sigma_i} W$ & $W^T {\Sigma_i} W$ & $\frac{1}{k^2}\Sigma_i$ & $\Sigma_i$\\
\bottomline 
\end{tabular}
}
\vspace*{-2em}
\end{table}

\subsection{Training Loss}
In the spirit of  \cite{zhai2020macer}, the training loss consists of two parts: the classification loss and the robustness loss, i.e., the total loss  $l(g_\theta;\mathbf{x},y)=l_C(g_\theta;\mathbf{x},y)+\lambda l_{C_R}(g_\theta;\mathbf{x},y)$. 
%
Similar to the literature, we use the softmax layer on the expectation to compute the cross-entropy of the prediction and the true label, given by $$
    l_C(g_\theta;\mathbf{x},y) = y \log(\text{softmax}(\mathbb{E}[u_\theta(\mathbf{x})]))
    $$
Here, $l_{C_R}(g_\theta;\mathbf{x},y=c_{\mathbf{x}})$ is 
{
$$
\max(0, \Gamma - \sigma \frac{\boldsymbol{\mu}[c_{\mathbf{x}}]-\boldsymbol{\mu}[{\widetilde{c}}]}{\sqrt{\Sigma[{c_{\mathbf{x}},c_{\mathbf{x}}}]+\Sigma[{\widetilde{c},\widetilde{c}}]-2\Sigma[{c_{\mathbf{x}},\widetilde{c}}]}})
$$
}
{Thus, minimizing the loss of $l_{C_R}$ is equivalent to maximizing $C_R$.} $\Gamma$ is the offset to control the certified radius.

\section{Experiments}
In this section, we discuss the applicability and usefulness of our proposed model in two applications namely \begin{inparaenum}[\bfseries (a)] \item image classification tasks to show the performance of our proposed model both in terms of performance and speed \item trainability of our model on data with noisy labels.\end{inparaenum} 

\subsection{Robust Training}

Similar to \cite{cohen2019certified}, we use the approximate certified test set accuracy as our metric,
which is defined as the percentage of test set whose $C_R\leq r$. 
For a fair comparison, we use the Monte Carlo method introduced in \cite{cohen2019certified} Section 3.2 to compute $C_R$ here just as our baseline model does. Recall that $C_R = 0$ if the classification is wrong. Otherwise,  $C_R=\sigma \Phi^{-1}(\underline{p_A})$ (please refer to the  pseudocode in \cite{cohen2019certified}). In order to run certification, we used the code provided by \cite{cohen2019certified}. We also report the average certified radius (ACR), which is defined as $\frac{1}{m}\sum_{i=1}^{m}C_R(\mathbf{x}^i)$ over the test set.
\begin{figure}[bt]
\centering
               \includegraphics[width=0.47\columnwidth]{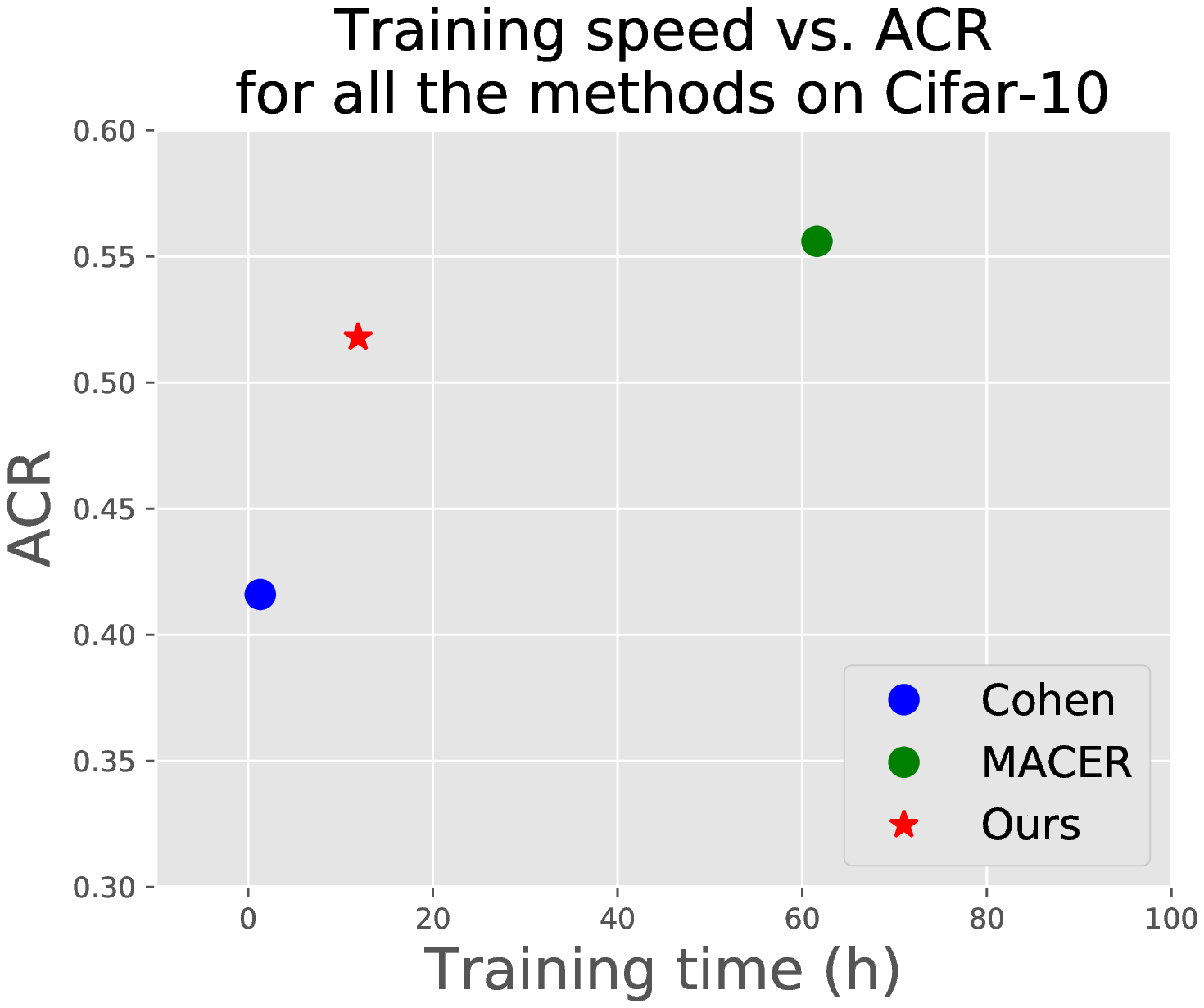}
           		\includegraphics[width=0.47\columnwidth]{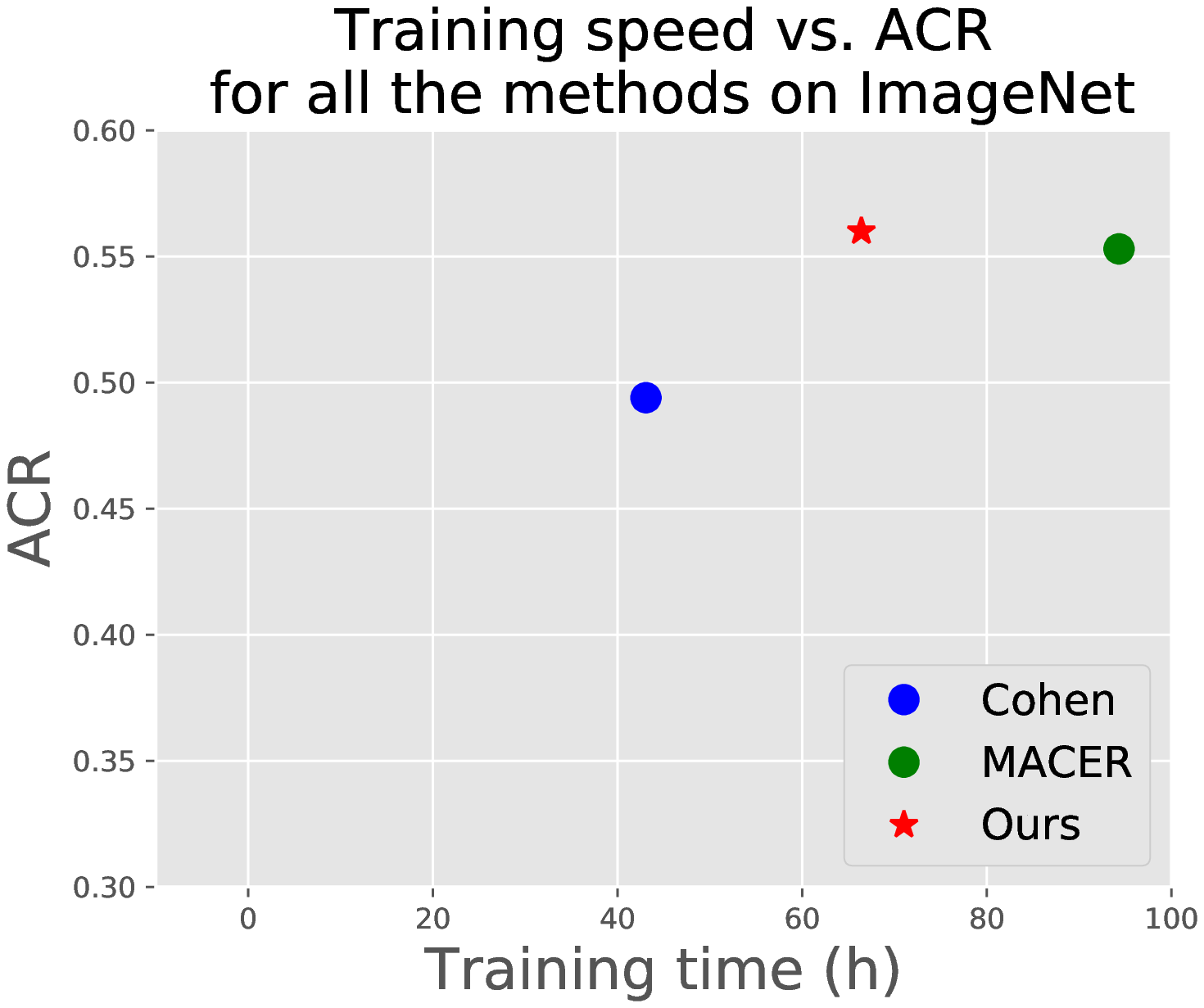}
               \caption{\footnotesize The training speed for three models on Cifar-10 and ImageNet dataset, including  \cite{cohen2019certified}, MACER \cite{zhai2020macer}, and ours. }
               \label{speed}
\vspace*{-1.5em}
\end{figure}\\
{\bf{Datasets and baselines:}} We evaluate our proposed model on five vision datasets: MNIST \cite{lecun1998mnist}, SVHN \cite{netzer2011reading}, Cifar-10 \cite{krizhevsky2009learning}, ImageNet \cite{deng2009imagenet}, and Places365 \cite{zhou2017places}. 
We modify LeNet for MNIST dataset and PreActResnet-18 \cite{he2016identity} for SVHN, Cifar-10, ImageNet, and Places365 datasets similarly as in \cite{cohen2019certified}. Our baseline model is based on Monte Carlo samples, which requires a large number of samples to make an accurate estimation. {\it In the rest of the section, we will observe that our model can be at best $5\times$ faster than the baseline model. For the larger dataset, since MACER \cite{zhai2020macer} uses a reduced number of MC samples, our model is $1.5\times $ faster. }\\
{\bf{Model hyperparameters:} }
During training, we use a similar strategy as our baseline model. We train the base classifier first and then fine-tune our model considering the aforementioned robust error. We train a total of $200$ epochs with the initial learning rate to be $0.01$ for MNIST, SVHN, and Cifar-10. The learning rate decays at $100,150$ epochs respectively. The $\lambda$ is set to be $0$ in the initial training step, and changes to $4.0$ at epoch $100$ for MNIST, SVHN, and Cifar-10. 
For ImageNet and Places365 dataset, we train 120 epochs with $\lambda$ being 0.5 after 30 epochs. The initial learning rate is $0.01$ and decays linearly at $30,60,90$ epochs.\\
{\bf{Results:}}
We report the numerical results in Table. \ref{tab:robust_tab}, where, 
the number reported in each column represents the ratio of the test set with the certified radius larger than the header of that column. Thus, the larger the number is, the better the performance of different models. The ACR is the average of all the certified radius on the test set. Note that the certified radius is $0$ when the classification result is wrong.
It is noticeable that our model strikes a balance of robustness and the training speed. We achieve  $5\times$ speed-up over \cite{zhai2020macer}, which uses the Monte Carlo method during the training phase, as shown in Fig. \ref{speed}. On the other hand, compared with \cite{cohen2019certified}, our model achieves competitive accuracy and certified radius. We like to point out that although SmoothAdv \cite{salman2019provably} is a powerful model, we did not compare with SmoothAdv because MACER  \cite{zhai2020macer} performs better than SmoothAdv \cite{salman2019provably} in terms of ACR and training speed.

\begin{table}[t]
 \caption {\footnotesize Statistics for different layers of MC sampling and our upper bound tracking method. } \label{tab:stat} 
 \centering
 \scalebox{0.65}{
 \begin{tabular}{c|ccccc}
 \topline\myrowcolour
 Layer number & 1 & 5 & 9 & 13 & 17\\
 \hline
 MC (1000 samples) 	& 0.243 & 0.913 & 2.740  & 2.999 & 0.712 \\
 Upper bound 		& 0.256 & 1.126 & 4.069  & 5.367 & 1.208 \\
\bottomline 
 \end{tabular}
 }
 \vspace*{-1em}
 \end{table}

{
Separate from the quantitative performance measures, we also 
evaluate the validity of Gaussian assumption on the pre-activation vectors within the network. Here, we choose PreActResNet-18 on ImageNet to visualize the first two channels across different layers. The detailed results are shown in Fig. \ref{first_assumption} and Table. \ref{tab:stat}. These results not only show that the assumption is reasonable along the neural network, but also demonstrates that our method can estimate the covariance matrix well when the depth of network is moderate. Deeper networks, where the bounds get looser, are described in the appendix.
}

\begin{figure}[!b]
        \centering
               \includegraphics[height=0.30\columnwidth]{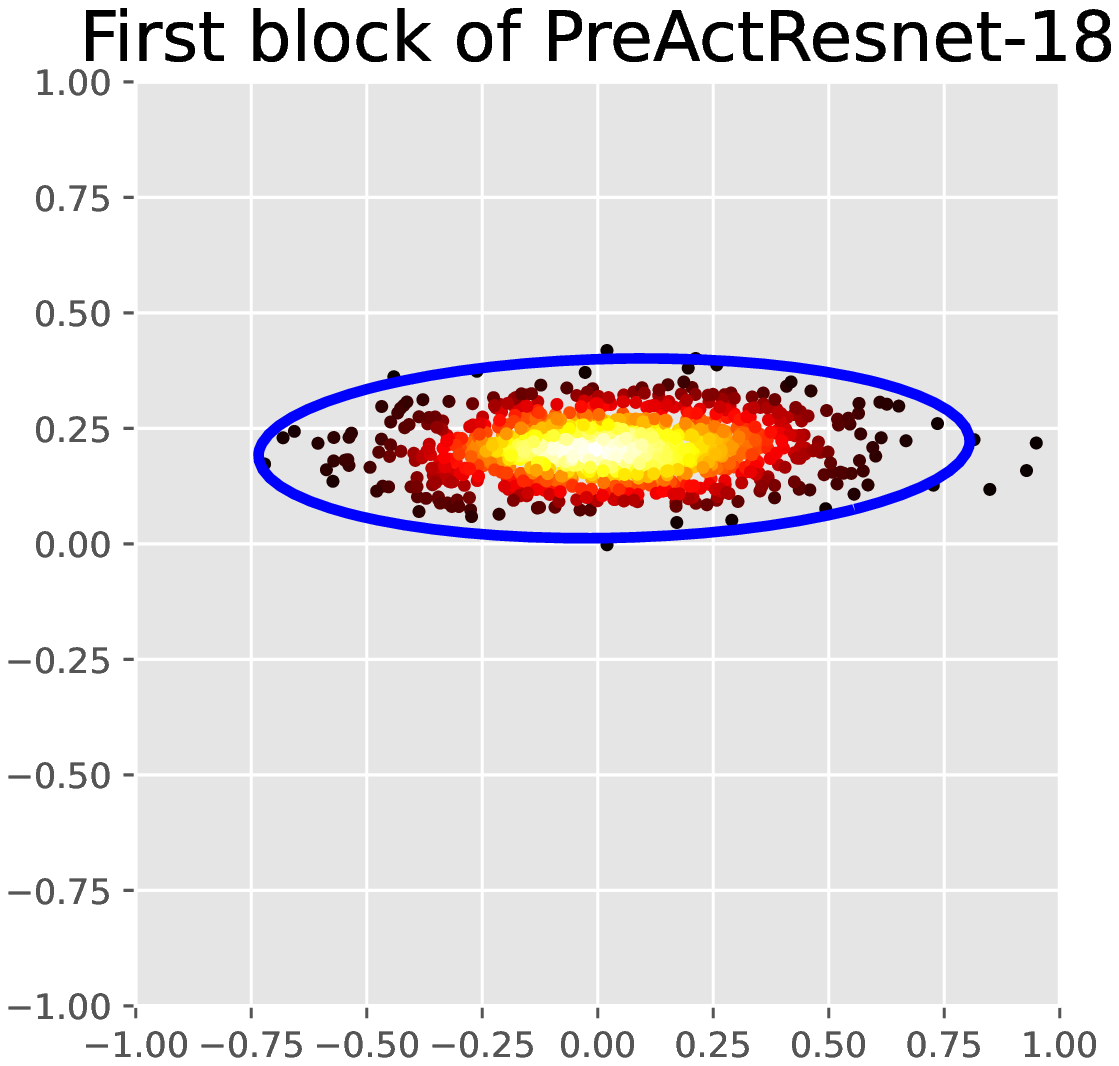}
               \includegraphics[height=0.30\columnwidth]{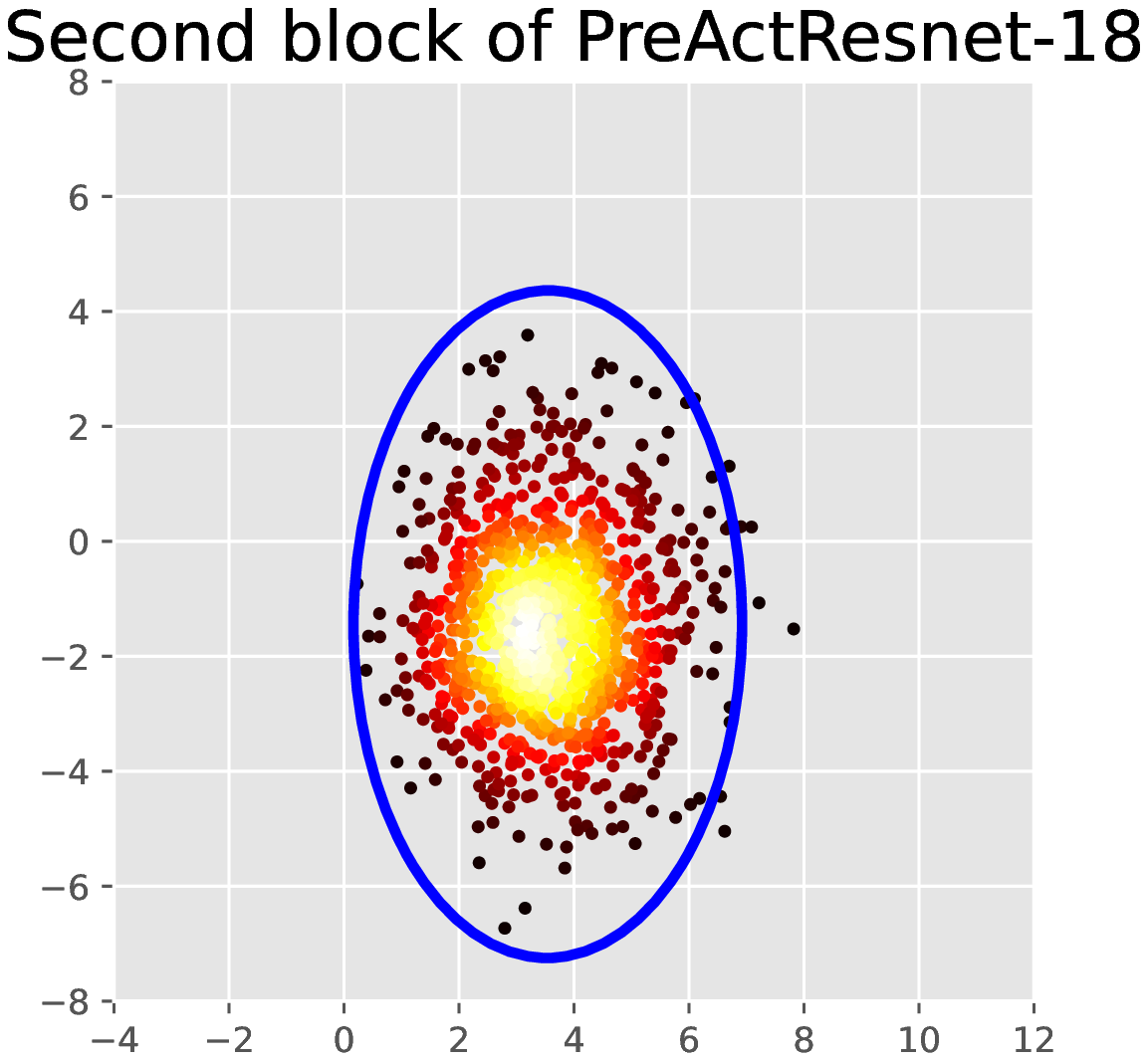}
               \includegraphics[height=0.30\columnwidth]{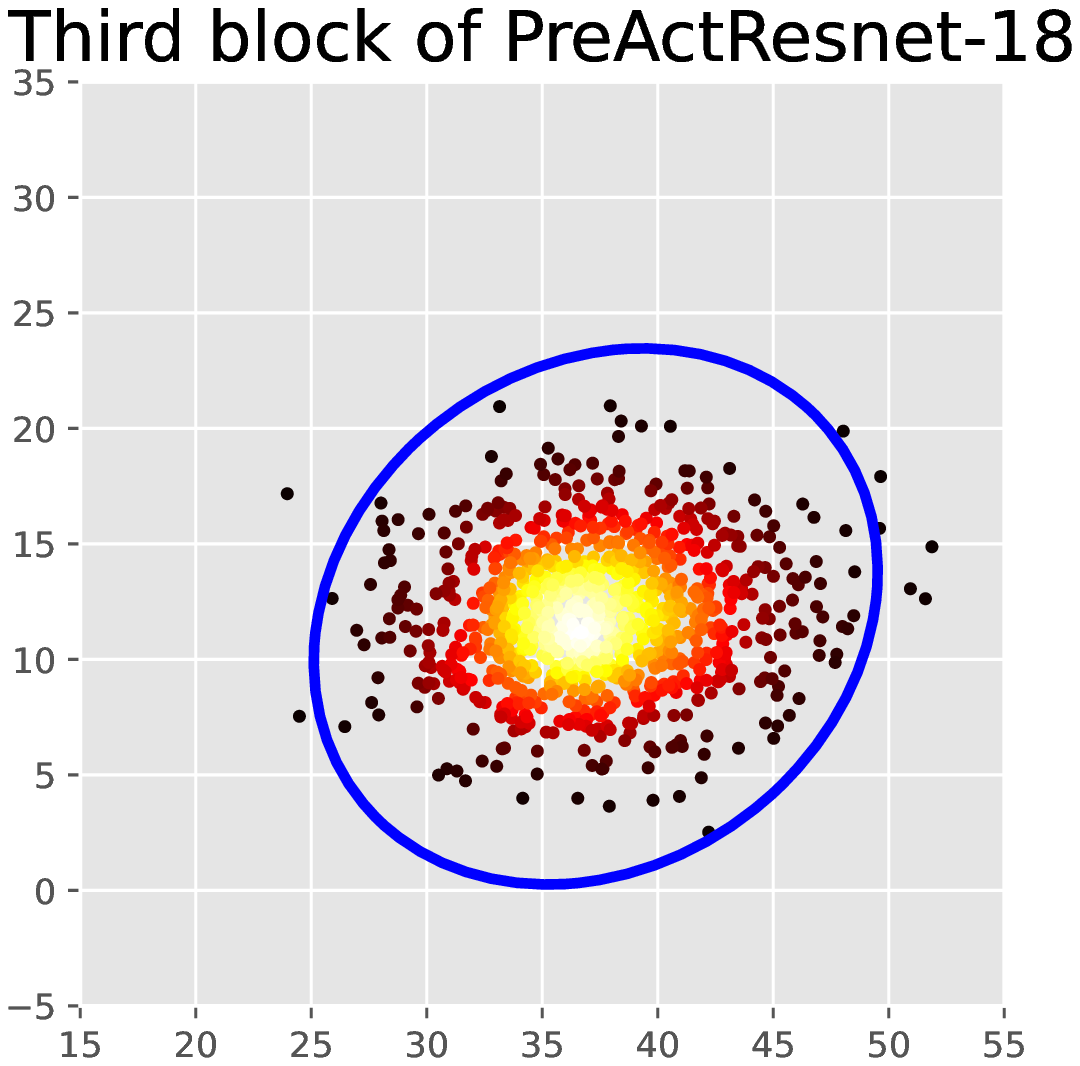}
               \caption{\footnotesize  A visualization of the first two channels within the neural network across different layers. The dots are the actual MC samples and the color represents the density at that point. The blue oval is generated from the covariance matrices we are tracking.}
               \label{first_assumption}
\end{figure}

\noindent{\bf{Ablation study:}}
We perform an ablation study on the choice of the hyperparameters for Places365 dataset. We fix $\sigma$ of the perturbation to be $0.5$. We first test the influence of $\lambda$ which is the balance between the accuracy (first moments) and the robustness (second moments). Also, to verify the estimation of  $r_{max}$, we tried different $r_{max}$ estimates while fixing $\lambda=0.5$. The detailed results are shown in Table. \ref{tab:ablation}. 

\noindent{\bf{Discussion:}}
A key benefit of our method is the training time. As shown in Fig. \ref{speed}, our method can be $5\times $ faster on Cifar-10, dataset, with a comparable ACR as MACER. For larger datasets, since MACER reduces the number of MC samples in their algorithm, our method is only $1.5\times$ faster with a slightly better ACR than MACER. {\it Hence, our method is a cheaper substitute of the SOTA with a marginal performance compromise.} 

\newcolumntype{g}{>{\columncolor{Gray}}c}
\begin{table*}[h]
\caption {\footnotesize The results on MNIST, SVHN, Cifar-10, ImageNet, and Places365 with the certified robustness. The number reported in each column represents the ratio of the test set with the certified radius larger than the header of that column under the perturbation $\sigma$. ACR is the average certified radius of all the test samples. A larger value is better for all the numbers reported.} \label{tab:robust_tab} 
\centering
\begin{minipage}{.49\textwidth}
\centering
\scalebox{0.58}{
\begin{tabular}{c|c|c|c g c g c g c g |c}
\topline\myrowcolour
Dataset&$\sigma$ & Method & 0.00 & 0.25 & 0.50 & 0.75 & 1.00 & 1.25 & 1.50 & 1.75  & ACR  \\
\hline

\multirow{6}{*}{MNIST} & \multirow{3}{*}{0.25} & Cohen \cite{cohen2019certified} & 0.99 & 0.97 & 0.94 & 0.89 & 0&0&0&0 & 0.887\\
 & & MACER \cite{zhai2020macer} & 0.99 & 0.99&  0.97&  0.95 &   0&0&0&0&\highest{0.918}\\
 & & Ours  & 0.99 & 0.98 & 0.96 & 0.92  &  0&0&0&0&0.904\\
\hhline{~-----------}
& \multirow{3}{*}{0.50} & Cohen \cite{cohen2019certified} & 0.99 & 0.97 & 0.94 & 0.91 & 0.84 & 0.75 & 0.57 & 0.33 &1.453\\
 & & MACER \cite{zhai2020macer} & 0.99 & 0.98 & 0.96 & 0.94 & 0.90 & 0.83 & 0.73 & 0.50 & \highest{1.583}\\
&  & Ours & 0.98 & 0.98 & 0.95 & 0.91 & 0.87 & 0.77 & 0.62 & 0.37  & 1.485\\
\hline
\multirow{6}{*}{SVHN} & \multirow{3}{*}{0.25} & Cohen \cite{cohen2019certified} & 0.90 & 0.70 & 0.44 & 0.26 & 0&0&0&0 & 0.469\\
& & MACER \cite{zhai2020macer} & 0.86 & 0.72&  0.56&  0.39 &   0&0&0&0&\highest{0.540}\\
& & Ours  & 0.89 & 0.68 & 0.48 & 0.36  &  0&0&0&0&0.509\\
\hhline{~-----------}
& \multirow{3}{*}{0.50} & Cohen \cite{cohen2019certified} & 0.67 & 0.48 & 0.37 & 0.24 & 0.14 & 0.08 & 0.06 & 0.03 &0.434\\
&  & MACER \cite{zhai2020macer} & 0.61 & 0.53 & 0.44 & 0.35 & 0.24 & 0.15 & 0.09 & 0.04 &\highest{0.538}\\
&  & Ours & 0.67 & 0.53 & 0.36 & 0.29 & 0.19 & 0.12 & 0.07 & 0.03  & 0.475\\
\hline
\multirow{6}{*}{Cifar-10} & \multirow{3}{*}{0.25} & Cohen \cite{cohen2019certified} & 0.75 & 0.60 & 0.43 & 0.26 & 0&0&0&0 & 0.416\\
& & MACER \cite{zhai2020macer} & 0.81 & 0.71&  0.59&  0.43 &   0&0&0&0&\highest{0.556}\\
& & Ours  & 0.80 & 0.72 & 0.55 & 0.37  &  0&0&0&0&0.518\\
\hhline{~-----------}
& \multirow{3}{*}{0.50} & Cohen \cite{cohen2019certified} & 0.65& 0.54& 0.41& 0.32& 0.23& 0.15& 0.09& 0.04 &0.491\\
& & MACER \cite{zhai2020macer} & 0.66& 0.60& 0.53& 0.46& 0.38& 0.29& 0.19& 0.12 & \highest{0.726}\\
& & Ours & 0.58 & 0.56 & 0.43 & 0.36 & 0.27 & 0.15 & 0.08 & 0.01  & 0.543\\
\bottomline  
\end{tabular}
}
\end{minipage}
\begin{minipage}{.49\textwidth}
\centering
\scalebox{0.58}{
\begin{tabular}{c|c|c|c g c g c g c g |c}
\topline\myrowcolour
Dataset&$\sigma$ & Method & 0.00 & 0.25 & 0.50 & 0.75 & 1.00 & 1.25 & 1.50 & 1.75  & ACR  \\
\hline	

\multirow{9}{*}{ImageNet} & \multirow{3}{*}{0.25} & Cohen \cite{cohen2019certified} & 0.58 & 0.49 & 0.40 &0.29&0&0&0&0 & 0.379\\
& & MACER\cite{zhai2020macer} & 0.59 & 0.52 & 0.43& 0.34&0&0&0&0&0.418\\
& & Ours  & 0.64 & 0.55 & 0.44 & 0.33 & 0&0&0&0 & \highest{0.425}\\
\hhline{~-----------}
& \multirow{3}{*}{0.50} & Cohen \cite{cohen2019certified} & 0.43 & 0.38 & 0.34 & 0.29 & 0.26 & 0.22 & 0.17 & 0.12 & 0.494\\
& & MACER \cite{zhai2020macer} & 0.54 & 0.47 & 0.39 & 0.32 & 0.29& 0.21 & 0.17 & 0.11 &0.553\\
& & Ours  & 0.52 & 0.47 & 0.39 & 0.32 & 0.28& 0.23 & 0.18 &0.13& \highest{0.560}\\
\hhline{~-----------}
 & \multirow{3}{*}{1.00} & Cohen \cite{cohen2019certified} & 0.21 & 0.19 & 0.18 & 0.16 &0.15 & 0.13 & 0.11 & 0.09 & 0.345\\
 & & MACER \cite{zhai2020macer} & 0.37 & 0.33 &  0.30 & 0.26 & 0.22 & 0.19 & 0.15 & 0.12 & 0.517\\
 & & Ours  & 0.38 & 0.33 & 0.29 & 0.26 & 0.22 & 0.19 & 0.15 & 0.11 & \highest{0.519}\\
\hline

\multirow{9}{*}{Places365} & \multirow{3}{*}{0.25} & Cohen \cite{cohen2019certified} & 0.45 & 0.42 & 0.36 & 0.29 &0&0&0&0 & 0.340\\
& & MACER\cite{zhai2020macer} & 0.46 & 0.44 & 0.39 & 0.30 &0&0&0&0& 0.359 \\
& & Ours  & 0.50 & 0.46 & 0.40 & 0.33 & 0&0&0&0& \highest{0.380} \\
\hhline{~-----------}
& \multirow{3}{*}{0.50} & Cohen \cite{cohen2019certified} & 0.43 & 0.38 & 0.35 & 0.28 & 0.23 & 0.19 & 0.17 & 0.12 & 0.484\\
& & MACER \cite{zhai2020macer} & 0.45 & 0.42 & 0.37 & 0.31 & 0.26 & 0.22 & 0.18 & 0.13 & 0.533\\
& & Ours  & 0.46 & 0.43 & 0.39 & 0.35 & 0.31 & 0.28 & 0.23 & 0.16 & \highest{0.597}\\
\hhline{~-----------}
 & \multirow{3}{*}{1.00} & Cohen \cite{cohen2019certified} & 0.20 & 0.18 & 0.16 & 0.15 & 0.13 & 0.12 & 0.11 & 0.10 & 0.357\\
 & & MACER \cite{zhai2020macer} & 0.31 & 0.29 & 0.28 & 0.25 & 0.22 & 0.21 &  0.19 & 0.17 & 0.615\\
 & & Ours  & 0.32 & 0.30 & 0.29 & 0.26 & 0.24 & 0.21 & 0.19 & 0.16 & \highest{0.622}\\
\bottomline 
\end{tabular}
}
\end{minipage}
\end{table*}

\begin{table}[!ht]
\caption {\footnotesize Ablation experiment on Places365 with $\sigma=0.5$. We perform the choice of $\lambda$ and $r_{max}$ as the hyper-parameters. } \label{tab:ablation} 
\centering
\scalebox{0.58}{
\begin{tabular}{c|c|c g c g c g c g |c}
\topline\myrowcolour
Parameters & Value & 0.00 & 0.25 & 0.50 & 0.75 & 1.00 & 1.25 & 1.50 & 1.75  & ACR\\
\hline
\multirow{3}{*}{$\lambda$} & 0.0 & 0.43 & 0.38 & 0.35 & 0.28 & 0.23 & 0.19 & 0.17 & 0.12 & 0.484\\
 & 0.5 & 0.47 & 0.44 & 0.39 & 0.34 & 0.29 & 0.23 & 0.19 & 0.14 & \highest{0.565}\\
 & 1.0 & 0.44 & 0.41 & 0.34 & 0.30 & 0.28 & 0.23 & 0.20 & 0.14 & 0.530\\
\hline
\multirow{5}{*}{$r_{max}$} & 0.0 & 0.43 & 0.38 & 0.36 & 0.31 & 0.27 & 0.21 & 0.16 & 0.13 & 0.509\\
 & 0.1 & 0.47 & 0.44 & 0.39 & 0.34 & 0.29 & 0.23 & 0.19 & 0.14 & 0.565 \\
 & 0.2 & 0.46 & 0.43 & 0.39 & 0.35 & 0.31 & 0.28 & 0.23 & 0.16 & \highest{0.597} \\
 & 0.3 & 0.46 & 0.44 & 0.41 & 0.35 & 0.29 & 0.23 & 0.19 & 0.15 & 0.573 \\
 & 0.4 & 0.44 & 0.40 & 0.36 & 0.31 & 0.27 & 0.23 & 0.17 & 0.12 & 0.520\\
\bottomline 
\end{tabular}
}
\end{table}

\begin{table}[b]
\caption {\footnotesize Average test accuracy on pair-flipping with noise rate $45\%$ for last $10$ epochs. Results of Bootstrap\cite{reed2014training}, S-model\cite{goldberger2016training}, Decoupling\cite{malach2017decoupling}, MentorNet\cite{jiang2017mentornet}, Co-teaching\cite{han2018co}, Trunc $\mathcal{L}_q$\cite{zhang2018generalized}, and Ours.} \label{tab:noisy_cifar} 
\centering
\scalebox{0.58}{
\begin{tabular}{c|c g c g c g c}
\topline\myrowcolour
Method & Bootstrap & S-model & Decoupling & MentorNet & Co-teaching & Trunc $\mathcal{L}_q$ & Ours \\
\hline
mean & 0.501 & 0.482 & 0.488 & 0.581 & 0.726 & \highest{0.828} & 0.808\\
std                          &  3.0$e-3$ & 5.5$e-3$ & 0.4$e-3$ & 3.8$e-3$ & 1.5$e-3$ & 6.7$e-3$ & 0.2$e-3$ \\ 
\bottomline 
\end{tabular}
}
\end{table}

\noindent{\bf Limitations:} There are some limitations due to the 
simplifications incorporated in 
our model. When the network is extremely deep, e.g., Resnet-101, the estimation of the second moments tends to be looser as the network grows deeper. Another minor issue is when the input perturbation is large. As observed from Table. \ref{tab:robust_tab}, the ACR drops for $\sigma=1.0$ from $\sigma=0.5$. The main reason is the assumption that samples are Gaussian distributed. Hence, as the perturbation grows larger, the number of channels, by the central limit theorem, should be much larger to satisfy the Gaussian distribution. Thus, given a network, there is an inherent limitation imposed on the input perturbation. We provide a more detailed discussion in the appendix.

\subsection{Training With Noisy Labels}
{
As we discussed in Section. \ref{sec:intro}, training a robust network has a side effect on smoothing the margin of the decision boundary, which enables training with noisy labels.\\
}
\textbf{Problem statement: }Here, we consider a challenging noise setup called ``pair flipping'', which can be described as follows. When noise rate is $p$ fraction, it means $p$ fraction of the $i_{th}$ labels are flipped to the $(i+1)_{th}$. In this work, we test our method on the high noise rate $0.45$. \\
\textbf{Dataset: }The dataset we considered for this analysis is Cifar-10. To generate noisy labels from the clean labels of the dataset, we stochastically changed $p$ fraction of the labels using the source code provided by \cite{han2018co}. We perform a comparative analysis of our method with Bootstrap \cite{reed2014training}, S-model \cite{goldberger2016training}, Decoupling \cite{malach2017decoupling}, MentorNet \cite{jiang2017mentornet}, Co-teaching \cite{han2018co}, and Trunc $\mathcal{L}_q$\cite{zhang2018generalized}. \\
\textbf{Model hyperparameters: } Similar to training robust network with the clean labels, we first treat the noisy labels as ``clean'' to train our model. After $60$ epochs, we remove the classification loss for the data with top $10\%$ $C_R$ to fine-tune the network. The initial learning rate is set to $0.01$ and decays at $30,60,90$ epochs, respectively. \\
\textbf{Results:} The results are shown in Table. \ref{tab:noisy_cifar}, where, it is noticeable that even under this strong label corruption, our model outperforms most baseline results as well as stays stable over different epochs. 

\section{Conclusions}

Developing mechanisms that enable training certifiably robust neural networks nicely complements the rapidly evolving body of literature on adversarial training. While certification schemes, in general, have typically been limited to small sized networks, recent proposals related to randomized smoothing have led to a significant expansion of the type of models where these ideas can be used. Our proposal here takes this line of work forward and shows that bound propagation ideas together with some meaningful approximations can provide an efficient method to maximize the certified radius -- a measure of robustness of the model. We show that the strategy achieves competitive results to other baselines with faster training speed. We also investigate a potential use case for training with noisy labels where the behavior of such ideas has not been investigated, but appears to be promising. 

\vspace{-5pt}
\section*{Acknowledgments}
Research supported in part by grant NIH RF1 AG059312, NIH RF1AG062336, NSF CCF \#1918211 and NSF CAREER RI \#1252725. Singh thanks Loris D'Antoni and Aws Albarghouthi for several discussions related to the work in \cite{drews2020fairness}. 

{\small
\bibliographystyle{ieee_fullname}
\bibliography{egbib}

\begin{thebibliography}{10}\itemsep=-1pt

\bibitem{angluin1988learning}
Dana Angluin and Philip Laird.
\newblock Learning from noisy examples.
\newblock {\em Machine Learning}, 2(4):343--370, 1988.

\bibitem{athalye2018robustness}
Anish Athalye and Nicholas Carlini.
\newblock On the robustness of the cvpr 2018 white-box adversarial example
  defenses.
\newblock {\em arXiv preprint arXiv:1804.03286}, 2018.

\bibitem{awais2020towards}
Muhammad Awais, Fahad Shamshad, and Sung-Ho Bae.
\newblock Towards an adversarially robust normalization approach.
\newblock {\em arXiv preprint arXiv:2006.11007}, 2020.

\bibitem{ba2016layer}
Jimmy~Lei Ba, Jamie~Ryan Kiros, and Geoffrey~E Hinton.
\newblock Layer normalization.
\newblock {\em arXiv preprint arXiv:1607.06450}, 2016.

\bibitem{bai2018convolutional}
Shaojie Bai, J~Zico Kolter, and Vladlen Koltun.
\newblock Convolutional sequence modeling revisited.
\newblock 2018.

\bibitem{balunovic2019certifying}
Mislav Balunovic, Maximilian Baader, Gagandeep Singh, Timon Gehr, and Martin
  Vechev.
\newblock Certifying geometric robustness of neural networks.
\newblock In {\em Advances in Neural Information Processing Systems}, pages
  15287--15297, 2019.

\bibitem{bengio2007scaling}
Yoshua Bengio, Yann LeCun, et~al.
\newblock Scaling learning algorithms towards ai.
\newblock {\em Large-scale kernel machines}, 34(5):1--41, 2007.

\bibitem{bibi2018analytic}
Adel Bibi, Modar Alfadly, and Bernard Ghanem.
\newblock Analytic expressions for probabilistic moments of pl-dnn with
  gaussian input.
\newblock In {\em Proceedings of the IEEE Conference on Computer Vision and
  Pattern Recognition}, pages 9099--9107, 2018.

\bibitem{biggio2013evasion}
Battista Biggio, Igino Corona, Davide Maiorca, Blaine Nelson, Nedim
  {\v{S}}rndi{\'c}, Pavel Laskov, Giorgio Giacinto, and Fabio Roli.
\newblock Evasion attacks against machine learning at test time.
\newblock In {\em Joint European conference on machine learning and knowledge
  discovery in databases}, pages 387--402. Springer, 2013.

\bibitem{bojarski2016end}
Mariusz Bojarski, Davide Del~Testa, Daniel Dworakowski, Bernhard Firner, Beat
  Flepp, Prasoon Goyal, Lawrence~D Jackel, Mathew Monfort, Urs Muller, Jiakai
  Zhang, et~al.
\newblock End to end learning for self-driving cars.
\newblock {\em arXiv preprint arXiv:1604.07316}, 2016.

\bibitem{carlini2017adversarial}
Nicholas Carlini and David Wagner.
\newblock Adversarial examples are not easily detected: Bypassing ten detection
  methods.
\newblock In {\em Proceedings of the 10th ACM Workshop on Artificial
  Intelligence and Security}, pages 3--14, 2017.

\bibitem{carlini2017towards}
Nicholas Carlini and David Wagner.
\newblock Towards evaluating the robustness of neural networks.
\newblock In {\em 2017 ieee symposium on security and privacy (sp)}, pages
  39--57. IEEE, 2017.

\bibitem{cohen2019certified}
Jeremy Cohen, Elan Rosenfeld, and Zico Kolter.
\newblock Certified adversarial robustness via randomized smoothing.
\newblock In {\em International Conference on Machine Learning}, pages
  1310--1320. PMLR, 2019.

\bibitem{deng2009imagenet}
Jia Deng, Wei Dong, Richard Socher, Li-Jia Li, Kai Li, and Li Fei-Fei.
\newblock Imagenet: A large-scale hierarchical image database.
\newblock In {\em 2009 IEEE conference on computer vision and pattern
  recognition}, pages 248--255. Ieee, 2009.

\bibitem{drews2020fairness}
Samuel~EP Drews.
\newblock {\em Fairness, Correctness, and Automation}.
\newblock PhD thesis, The University of Wisconsin-Madison, 2020.

\bibitem{goldberger2016training}
Jacob Goldberger and Ehud Ben-Reuven.
\newblock Training deep neural-networks using a noise adaptation layer.
\newblock 2016.

\bibitem{goodfellow2014explaining}
Ian~J Goodfellow, Jonathon Shlens, and Christian Szegedy.
\newblock Explaining and harnessing adversarial examples.
\newblock {\em arXiv preprint arXiv:1412.6572}, 2014.

\bibitem{gowal2018effectiveness}
Sven Gowal, Krishnamurthy Dvijotham, Robert Stanforth, Rudy Bunel, Chongli Qin,
  Jonathan Uesato, Relja Arandjelovic, Timothy Mann, and Pushmeet Kohli.
\newblock On the effectiveness of interval bound propagation for training
  verifiably robust models.
\newblock {\em arXiv preprint arXiv:1810.12715}, 2018.

\bibitem{han2018co}
Bo Han, Quanming Yao, Xingrui Yu, Gang Niu, Miao Xu, Weihua Hu, Ivor Tsang, and
  Masashi Sugiyama.
\newblock Co-teaching: Robust training of deep neural networks with extremely
  noisy labels.
\newblock In {\em Advances in neural information processing systems}, pages
  8527--8537, 2018.

\bibitem{hanebeck2001tight}
Uwe~D Hanebeck, Kai Briechle, and Joachim Horn.
\newblock A tight bound for the joint covariance of two random vectors with
  unknown but constrained cross-correlation.
\newblock In {\em Conference Documentation International Conference on
  Multisensor Fusion and Integration for Intelligent Systems. MFI 2001 (Cat.
  No. 01TH8590)}, pages 85--90. IEEE, 2001.

\bibitem{he2016deep}
Kaiming He, Xiangyu Zhang, Shaoqing Ren, and Jian Sun.
\newblock Deep residual learning for image recognition.
\newblock In {\em Proceedings of the IEEE conference on computer vision and
  pattern recognition}, pages 770--778, 2016.

\bibitem{he2016identity}
Kaiming He, Xiangyu Zhang, Shaoqing Ren, and Jian Sun.
\newblock Identity mappings in deep residual networks.
\newblock In {\em European conference on computer vision}, pages 630--645.
  Springer, 2016.

\bibitem{huang2015learning}
Ruitong Huang, Bing Xu, Dale Schuurmans, and Csaba Szepesv{\'a}ri.
\newblock Learning with a strong adversary.
\newblock {\em arXiv preprint arXiv:1511.03034}, 2015.

\bibitem{ilyas2018black}
Andrew Ilyas, Logan Engstrom, Anish Athalye, and Jessy Lin.
\newblock Black-box adversarial attacks with limited queries and information.
\newblock In {\em International Conference on Machine Learning}, pages
  2137--2146. PMLR, 2018.

\bibitem{ioffe2015batch}
Sergey Ioffe and Christian Szegedy.
\newblock Batch normalization: Accelerating deep network training by reducing
  internal covariate shift.
\newblock In {\em International conference on machine learning}, pages
  448--456. PMLR, 2015.

\bibitem{jiang2017mentornet}
Lu Jiang, Zhengyuan Zhou, Thomas Leung, Li-Jia Li, and Li Fei-Fei.
\newblock Mentornet: Learning data-driven curriculum for very deep neural
  networks on corrupted labels.
\newblock In {\em International Conference on Machine Learning}, pages
  2304--2313. PMLR, 2018.

\bibitem{NEURIPS2018_d139db6a}
Durk~P Kingma and Prafulla Dhariwal.
\newblock Glow: Generative flow with invertible 1x1 convolutions.
\newblock In S. Bengio, H. Wallach, H. Larochelle, K. Grauman, N. Cesa-Bianchi,
  and R. Garnett, editors, {\em Advances in Neural Information Processing
  Systems}, volume~31. Curran Associates, Inc., 2018.

\bibitem{krizhevsky2009learning}
Alex Krizhevsky, Geoffrey Hinton, et~al.
\newblock Learning multiple layers of features from tiny images.
\newblock 2009.

\bibitem{lecun1998mnist}
Yann LeCun.
\newblock The mnist database of handwritten digits.
\newblock {\em http://yann. lecun. com/exdb/mnist/}, 1998.

\bibitem{lecuyer2019certified}
Mathias Lecuyer, Vaggelis Atlidakis, Roxana Geambasu, Daniel Hsu, and Suman
  Jana.
\newblock Certified robustness to adversarial examples with differential
  privacy.
\newblock In {\em 2019 IEEE Symposium on Security and Privacy (SP)}, pages
  656--672. IEEE, 2019.

\bibitem{lee2019probact}
Joonho Lee, Kumar Shridhar, Hideaki Hayashi, Brian~Kenji Iwana, Seokjun Kang,
  and Seiichi Uchida.
\newblock Probact: A probabilistic activation function for deep neural
  networks.
\newblock {\em arXiv preprint arXiv:1905.10761}, 2019.

\bibitem{li2018certified}
Bai Li, Changyou Chen, Wenlin Wang, and Lawrence Carin.
\newblock Certified adversarial robustness with additive noise.
\newblock In H. Wallach, H. Larochelle, A. Beygelzimer, F. d\textquotesingle
  Alch\'{e}-Buc, E. Fox, and R. Garnett, editors, {\em Advances in Neural
  Information Processing Systems}, volume~32. Curran Associates, Inc., 2019.

\bibitem{lokhande2020generating}
Vishnu~Suresh Lokhande, Songwong Tasneeyapant, Abhay Venkatesh, Sathya~N Ravi,
  and Vikas Singh.
\newblock Generating accurate pseudo-labels in semi-supervised learning and
  avoiding overconfident predictions via hermite polynomial activations.
\newblock In {\em Proceedings of the IEEE/CVF Conference on Computer Vision and
  Pattern Recognition}, pages 11435--11443, 2020.

\bibitem{madry2017towards}
Aleksander Madry, Aleksandar Makelov, Ludwig Schmidt, Dimitris Tsipras, and
  Adrian Vladu.
\newblock Towards deep learning models resistant to adversarial attacks.
\newblock In {\em International Conference on Learning Representations}, 2018.

\bibitem{malach2017decoupling}
Eran Malach and Shai Shalev-Shwartz.
\newblock Decoupling" when to update" from" how to update".
\newblock In {\em Advances in Neural Information Processing Systems}, pages
  960--970, 2017.

\bibitem{mansour2018deep}
Romany~F Mansour.
\newblock Deep-learning-based automatic computer-aided diagnosis system for
  diabetic retinopathy.
\newblock {\em Biomedical engineering letters}, 8(1):41--57, 2018.

\bibitem{mirman2018differentiable}
Matthew Mirman, Timon Gehr, and Martin Vechev.
\newblock Differentiable abstract interpretation for provably robust neural
  networks.
\newblock In {\em International Conference on Machine Learning}, pages
  3578--3586, 2018.

\bibitem{netzer2011reading}
Yuval Netzer, Tao Wang, Adam Coates, Alessandro Bissacco, Bo Wu, and Andrew~Y
  Ng.
\newblock Reading digits in natural images with unsupervised feature learning.
\newblock 2011.

\bibitem{neyman1933ix}
Jerzy Neyman and Egon~Sharpe Pearson.
\newblock Ix. on the problem of the most efficient tests of statistical
  hypotheses.
\newblock {\em Philosophical Transactions of the Royal Society of London.
  Series A, Containing Papers of a Mathematical or Physical Character},
  231(694-706):289--337, 1933.

\bibitem{papernot2016limitations}
Nicolas Papernot, Patrick McDaniel, Somesh Jha, Matt Fredrikson, Z~Berkay
  Celik, and Ananthram Swami.
\newblock The limitations of deep learning in adversarial settings.
\newblock In {\em 2016 IEEE European symposium on security and privacy
  (EuroS\&P)}, pages 372--387. IEEE, 2016.

\bibitem{patrini2017making}
Giorgio Patrini, Alessandro Rozza, Aditya Krishna~Menon, Richard Nock, and
  Lizhen Qu.
\newblock Making deep neural networks robust to label noise: A loss correction
  approach.
\newblock In {\em Proceedings of the IEEE Conference on Computer Vision and
  Pattern Recognition}, pages 1944--1952, 2017.

\bibitem{reed2014training}
Scott Reed, Honglak Lee, Dragomir Anguelov, Christian Szegedy, Dumitru Erhan,
  and Andrew Rabinovich.
\newblock Training deep neural networks on noisy labels with bootstrapping.
\newblock {\em arXiv preprint arXiv:1412.6596}, 2014.

\bibitem{ren2018learning}
Mengye Ren, Wenyuan Zeng, Bin Yang, and Raquel Urtasun.
\newblock Learning to reweight examples for robust deep learning.
\newblock In {\em ICML}, pages 4331--4340, 2018.

\bibitem{salman2019provably}
Hadi Salman, Jerry Li, Ilya Razenshteyn, Pengchuan Zhang, Huan Zhang, Sebastien
  Bubeck, and Greg Yang.
\newblock Provably robust deep learning via adversarially trained smoothed
  classifiers.
\newblock In {\em Advances in Neural Information Processing Systems}, pages
  11289--11300, 2019.

\bibitem{scaglione2008decentralized}
Anna Scaglione, Roberto Pagliari, and Hamid Krim.
\newblock The decentralized estimation of the sample covariance.
\newblock In {\em 2008 42nd Asilomar Conference on Signals, Systems and
  Computers}, pages 1722--1726. IEEE, 2008.

\bibitem{shafahi2019adversarial}
Ali Shafahi, Mahyar Najibi, Mohammad~Amin Ghiasi, Zheng Xu, John Dickerson,
  Christoph Studer, Larry~S Davis, Gavin Taylor, and Tom Goldstein.
\newblock Adversarial training for free!
\newblock In {\em Advances in Neural Information Processing Systems}, pages
  3353--3364, 2019.

\bibitem{singh2018fast}
Gagandeep Singh, Timon Gehr, Matthew Mirman, Markus P{\"u}schel, and Martin
  Vechev.
\newblock Fast and effective robustness certification.
\newblock In {\em Advances in Neural Information Processing Systems}, pages
  10802--10813, 2018.

\bibitem{szegedy2013intriguing}
Christian Szegedy, Wojciech Zaremba, Ilya Sutskever, Joan Bruna, Dumitru Erhan,
  Ian Goodfellow, and Rob Fergus.
\newblock Intriguing properties of neural networks.
\newblock {\em arXiv preprint arXiv:1312.6199}, 2013.

\bibitem{tramer2017ensemble}
Florian Tramèr, Alexey Kurakin, Nicolas Papernot, Ian Goodfellow, Dan Boneh,
  and Patrick McDaniel.
\newblock Ensemble adversarial training: Attacks and defenses.
\newblock In {\em International Conference on Learning Representations}, 2018.

\bibitem{uesato2018adversarial}
Jonathan Uesato, Brendan O'Donoghue, Pushmeet Kohli, and Aäron van~den Oord.
\newblock Adversarial risk and the dangers of evaluating against weak attacks.
\newblock In {\em ICML}, pages 5032--5041, 2018.

\bibitem{ulyanov2016instance}
Dmitry Ulyanov, Andrea Vedaldi, and Victor Lempitsky.
\newblock Instance normalization: The missing ingredient for fast stylization.
\newblock {\em arXiv preprint arXiv:1607.08022}, 2016.

\bibitem{weng2018towards}
Lily Weng, Huan Zhang, Hongge Chen, Zhao Song, Cho-Jui Hsieh, Luca Daniel,
  Duane Boning, and Inderjit Dhillon.
\newblock Towards fast computation of certified robustness for relu networks.
\newblock In {\em International Conference on Machine Learning}, pages
  5276--5285. PMLR, 2018.

\bibitem{wishart1928generalised}
John Wishart.
\newblock The generalised product moment distribution in samples from a normal
  multivariate population.
\newblock {\em Biometrika}, pages 32--52, 1928.

\bibitem{wong2017provable}
Eric Wong and Zico Kolter.
\newblock Provable defenses against adversarial examples via the convex outer
  adversarial polytope.
\newblock In {\em International Conference on Machine Learning}, pages
  5286--5295. PMLR, 2018.

\bibitem{xiao2018dynamical}
Lechao Xiao, Yasaman Bahri, Jascha Sohl-Dickstein, Samuel Schoenholz, and
  Jeffrey Pennington.
\newblock Dynamical isometry and a mean field theory of cnns: How to train
  10,000-layer vanilla convolutional neural networks.
\newblock In {\em International Conference on Machine Learning}, pages
  5393--5402. PMLR, 2018.

\bibitem{zhai2020macer}
Runtian Zhai, Chen Dan, Di He, Huan Zhang, Boqing Gong, Pradeep Ravikumar,
  Cho-Jui Hsieh, and Liwei Wang.
\newblock Macer: Attack-free and scalable robust training via maximizing
  certified radius.
\newblock In {\em International Conference on Learning Representations}, 2019.

\bibitem{zhang2016understanding}
Chiyuan Zhang, Samy Bengio, Moritz Hardt, Benjamin Recht, and Oriol Vinyals.
\newblock Understanding deep learning requires rethinking generalization.
\newblock In {\em International Conference on Learning Representations}, 2017.

\bibitem{zhang2019you}
Dinghuai Zhang, Tianyuan Zhang, Yiping Lu, Zhanxing Zhu, and Bin Dong.
\newblock You only propagate once: Accelerating adversarial training via
  maximal principle.
\newblock In {\em Advances in Neural Information Processing Systems}, pages
  227--238, 2019.

\bibitem{zhang2018efficient}
Huan Zhang, Tsui-Wei Weng, Pin-Yu Chen, Cho-Jui Hsieh, and Luca Daniel.
\newblock Efficient neural network robustness certification with general
  activation functions.
\newblock In {\em Advances in neural information processing systems}, pages
  4939--4948, 2018.

\bibitem{zhang2018generalized}
Zhilu Zhang and Mert Sabuncu.
\newblock Generalized cross entropy loss for training deep neural networks with
  noisy labels.
\newblock In {\em Advances in neural information processing systems}, pages
  8778--8788, 2018.

\bibitem{zhou2017places}
Bolei Zhou, Agata Lapedriza, Aditya Khosla, Aude Oliva, and Antonio Torralba.
\newblock Places: A 10 million image database for scene recognition.
\newblock {\em IEEE transactions on pattern analysis and machine intelligence},
  40(6):1452--1464, 2017.

\end{thebibliography}
}

\newpage
\setcounter{section}{0}
\setcounter{table}{0}
\renewcommand\thesection{\Alph{section}}
\renewcommand{\thetable}{\alph{table}}

\section{Difficulties Of Accounting For All Terms}
In the main paper, we described at a high level,
the problem of bookkeeping
all covariances $\Sigma$ and cross-correlations $E$ because the number of terms
grow very rapidly. Here, we provide the low-level details of where/why this problem arises
and strategies to address it.

%

\subsection{A 1-D Convolution Without Overlap: Exponential Explosion}
As shown in Table. \ref{tab:example_exp}, even if we only consider
a simple case -- a 1-D convolution network with a kernel size $k=3$, we run into problems
with a naive bookkeeping approach. Let us analyze this at the last layer and trace back the number of terms
that will contribute to it.

In the last layer, we will need to calculate a single covariance matrix.
Tracing back, in the second last layer, we will need to compute $3$ covariance matrices $\Sigma$ and
$3$ cross-correlation matrices $E$. In the third last layer, we will need to compute $3^2$
covariance matrices $\Sigma$ {and $\frac{1}{2}(3^2+1)3^2$ cross-correlation matrices $E$}.
Then, in the $(q+1)^{th}$ last layer, we will require $k^q$ covariance matrices
$\Sigma$ {and $\frac{1}{2}(1+k^q)k^q=\Theta(k^q)$ cross-correlation matrices $E$}.
In this case, the computational cost increase exponentially, which is infeasible.
in case of a deep neural network.
This setup will rapidly increase the field of view of the network,
  and one would need special types of convolution, e.g., dilated convolutions, to
  address the problem \cite{bai2018convolutional}.


\subsection{A 1-D Convolution With Overlap: Polynomial But Higher Than Standard 2-D Convolution}
The setup above did not consider the overlap between pixels as the shared kernel
moves within the same layer.
If we consider a setting with overlap as in a standard CNN,
as shown in Table. \ref{tab:example_overlap}, similarly, in the last layer, we
will need to compute a single covariance matrix.
In the second last layer, we will need to compute $3$ covariance matrices $\Sigma$ and
$3$ cross-correlation matrices $E$. In the third last layer,
we will need to compute $(3-1)\times 2+1$ covariance matrices $\Sigma$ {and
  $\frac{1}{2}((3-1)\times 2+1+1)((3-1)\times 2+1)$ cross-correlation matrices $E$}.
Then, in the $(q+1)^{th}$ last layer, it will require $(k-1)q+1$
covariance matrices $\Sigma$ {and $\frac{1}{2}((k-1)q+1+1)((k-1)q+1)=\Theta(k^2q^2)$
 cross-correlation matrices $E$}. 

This may appear
feasible since the computational cost is polynomial.
But this setup indeed increases the computational cost of a 1-D convolution network $\Theta(k^2q^2)$ to
be higher than a 2-D convolution network, {$\Theta(k^2 q)$ since $q<q^2$}. 
When this strategy is applied to the 2-D image case, the
computational cost ($\Theta(k^4 q^2)$) will be more than the
4-D tensor convolution network ($\Theta(k^4 q)$), which is not feasible.
As shown in the literature, training CNNs efficiently
on very high resolution 3-D images (or videos) is still an active topic of ongoing research. 

\subsection{Memory Cost}
The GPU memory footprint also turns out to be high.
For a direct impression of the numbers, we take the ImageNet with PreActResnet-18 as an example,
shown in Table. \ref{tab:memory}. The memory cost for a naive method is too high, even for large clusters,
if we want to track 
all cross-correlation terms between any two pixels and the channels.

\begin{table*}[ht]
\caption{\footnotesize The memory cost of ImageNet with PreActResnet-18 for different layers. The brute force method would require to compute all the cross-correlation between different pixels and channels. The memory cost for the brute force method is too high to afford. Our method, in comparison, only increase a little from the tradition (non-robust) network. }
\label{tab:memory}
\centering
\scalebox{0.79}{
\begin{tabular}{c|c|c|c}
\topline\myrowcolour
	Layer & Traditional network & Brute force method & Our method \\
	\hline
	Input ($224 \times 224 \times 3$) & $224 \times 224 \times 3=150528$ & $\frac{1}{2}(224 \times 224 \times 3)^2=11329339392$ & $(224 \times 224 \times 3)+ (3\times 3)=150537$\\
	\hline
	$1^{st}$ convolution ($112\times 112\times 64$) & $112\times 112\times 64=802816$ & $\frac{1}{2}(112\times 112\times 64)^2=644513529856$ & $(112\times 112\times 64) + (64\times 64)=806912$\\
	\hline
	$1^{st}$ res-block ($56\times 56\times 64$) & $56\times 56\times 64 = 200704$ & $\frac{1}{2}(56\times 56\times 64)^2 = 322256764928$ & $(56\times 56\times 64) + (64\times 64) = 204800$ \\
	\hline
	$2^{nd}$ res-block ($28\times 28\times 128$) & $28\times 28\times 128=100352$ & $\frac{1}{2}(28\times 28\times 128)^2=5035261952$ & $(28\times 28\times 128)+(128\times 128)=116736$ \\
	\hline
	$3^{rd}$ res-block ($14\times 14\times 256$) & $14\times 14\times 256=50176$ & $\frac{1}{2}(14\times 14\times 256)^2=1258815488$ & $(14\times 14\times 256)+(256\times 256)=115712$\\
	\hline
	$4^{th}$ res-block ($7\times 7\times 512$) & $7\times 7\times 512 = 25088$ & $\frac{1}{2}(7\times 7\times 512)^2 = 314703872$ & $(7\times 7\times 512) + (512\times 512) = 287232$\\
	\bottomline
	
\end{tabular}
}
	
\end{table*}

\section{Tracking Distributions Through Layers}
In the main paper, we briefly described the different layers used in our model. Here, we will provide
more details about the layers.

We consider the $i^{th}$ pixel, after perturbation, to be drawn from a Gaussian distribution
$\mathbf{x}_{i} \sim \mathcal{N}(\boldsymbol{\mu}_{i}, \Sigma)$, where $\boldsymbol{\mu}_{i}\in \mathbf{R}^N$
and $\Sigma$ is the covariance  matrix across the $N$ channels (note that $\Sigma$ is same for a
layer across all pixels).
In the following sections, we will remove the indices to simplify the formulation.

Several commonly-used basic blocks such as
convolution and fully connected layers are used to setup our network architecture.
In order to propagate the distribution through
the entire network, we need a way to propagate the moments through these layers. 

\subsection{Convolution Layer:}
Let the pixels $\{\mathbf{x}_i\}$ inside a $k\times k$ convolution kernel window
be independent. Let $\widetilde{\mathbf{x}} \in \mathbf{R}^{N_{in}k^2}$ be the vector
which consists of $\{\mathbf{x}_i\}$, where $N_{in}$ is the number of input channels. Let $\Sigma \in \mathbf{R}^{N_{in}\times N_{in}}$ be the covariance matrix of each pixel within the $k\times k$ window. Let $W$ be the weight matrix of the convolution layer, which is of the shape $N_{in}\times k\times k\times N_{out}$. With a slight abuse of notation, let $W$ be the reshaped weight matrix of the shape $N_{in}k^2\times N_{out}$. Further, concatenate $\Sigma$ from each $\{\mathbf{x}_i\}$ inside the $k\times k$ window to get a block diagonal $\widetilde{\Sigma} \in \mathbf{R}^{N_{in}k^2\times N_{in}k^2}$. Then, we get the covariance of an output pixel to be $\Sigma_{\mathbf{h}} \in \mathbf{R}^{N_{out}\times N_{out}}$ defined as $$\Sigma_{\mathbf{h}}=W^T\widetilde{\Sigma} W$$

We need to apply Theorem 2 (from the main paper) to compute the upper bound of $\Sigma_{\mathbf{h}}$.
We get the upper bound covariance matrix (covariance matrix of independent random variable $\widehat{\mathbf{x}}$ used in Theorem 2) as $$\widehat{\Sigma_{\mathbf{h}}}=(1+r_{\max})W^T\widetilde{\Sigma} W$$

{\it Summary:} Given each input pixel, $\mathbf{x}_{i} \in \mathbf{R}^{N_{in}}$ following $\mathcal{N}(\boldsymbol{\mu}_{i}, \Sigma)$ and convolution kernel matrix $W$, the output distribution of each pixel is $\mathcal{N}(\boldsymbol{\mu}_{h_i}, (1+r_{max})W^T\widetilde{\Sigma}W)$, where, $\widetilde{\Sigma}$ is the block diagonal covariance matrix as mentioned before. 

\subsection{First Linear Layer:} 
For the first linear layer, we reshape the input in a vector by
flattening both the channel and spatial dimensions. Similar
to the convolution layer, we concatenate $\{\mathbf{x}_i\}$ to be $\widetilde{\mathbf{x}}$,
whose covariance matrix has a block-diagonal structure.
Thus, the covariance matrix of the output pixel is $$\Sigma_{\mathbf{h}}=W^T\Sigma_{\mathbf{\widetilde{x}}} W$$
where $W$ is the learnable parameter and $\Sigma_{\mathbf{\widetilde{x}}}$ is the
block-diagonal covariance matrix of $\widetilde{\mathbf{x}}$ similar to the convolution layer.

{\it Special case:} From Obs. 1, we only need the largest two intensities to estimate the $\underline{p_{c_{\mathbf{x}}}}$. Thus if there is only one linear layer as the last layer in the entire network
(as in most ResNet like models), this can be further simplified: it needs
computing a $2\times 2$ covariance matrix instead of the $C \times C$ covariance matrix.

\subsection{Linear Layer:} 
If the network consists of multiple linear layers,
calculating the moments of the subsequent linear layers is performed differently.
Since it contains only 1-D inputs,
we can either treat it spatially or channel-wise.
In our setup, we consider it as along the channels. 

Let $\mathbf{x}\in \mathbf{R}^{N_i}$ be the input of the $i^{th}$ linear layer,
where $i>1$.
Assume, $\mathbf{x} \sim \mathcal{N}\left(\boldsymbol{\mu}^{(i)}_{\mathbf{x}}, \Sigma^{(i)}_{\mathbf{x}}\right)$.
Given the $i^{th}$ linear layer with parameter $(W_i, \mathbf{b}_i)$, with the output given by
$$\mathbf{h} = W_i^T\mathbf{x} + \mathbf{b}_i$$
$$\mathbf{h}  \sim \mathcal{N}\left(W_i^T\boldsymbol{\mu}^{(i)}_{\mathbf{x}}+\mathbf{b}_i, W_i^T\Sigma^{(i)}_{\mathbf{x}}W_i\right)$$ Here, $W_i \in \mathbf{R}^{N_{i}\times N_{i+1}}$ and $\mathbf{b}_i \in \mathbf{R}^{N_{i+1}}$ and $\mathbf{h} \in \mathbf{R}^{N_{i+1}}$. 

\subsection{Pooling Layer:} 
Recall that the input of a max pooling layer is $\{\mathbf{x}_i\}$ where each $\mathbf{x}_i\in \mathbf{R}^{N_{in}}$ and the index $i$ varies over the spatial dimension. Observe that as we identify each $\mathbf{x}_i$ by the respective distribution $\mathcal{N}\left(\boldsymbol{\mu}_{i}, \Sigma\right)$, applying max pooling over $\mathbf{x}_i$ essentially requires computing the maximum over $\left\{\mathcal{N}\left(\boldsymbol{\mu}_{i}, \Sigma\right)\right\}$. Thus, we restrict ourselves to average pooling. To be precise, with a kernel window $\mathbb{W}$ of size $k\times k$ with stride $k$ used for average pooling, the output of average pooling, denoted by $$\mathbf{h}\sim \mathcal{N}\left(\frac{1}{k^2} \sum_{\mathbf{x}_i\in \mathbb{W}} \boldsymbol{\mu}_{i}, \frac{\Sigma}{k^2} \right)$$

\subsection{Normalization Layer:}
For the normalization layer, given by $\mathbf{h} = (\mathbf{x} - \mu)/\sigma$, where $\mu,\sigma$ can be computed in different ways \cite{ioffe2015batch,ba2016layer,ulyanov2016instance}, we have 
$$\mathbf{h}  \sim \mathcal{N}\left(\frac{\boldsymbol{\mu}^{(i)}_{\mathbf{x}}-\mu}{\sigma}, \frac{\Sigma^{(i)}_{\mathbf{x}}}{\sigma^2} \right)$$ 
However, as the normalization layers often have large Lipschitz constant \cite{awais2020towards}, we remove those layers in this work.

\noindent{\bf Batch normalization: }
For the batch normalization, the mean and variance are computed within each mini-batch \cite{ioffe2015batch}. 
$$ \mu = \frac{1}{m} \sum_{i=1}^m x_i, \sigma^2 = \frac{1}{m} \sum_{i=1}^{m} (x_i-\mu)^2 $$
where $m$ is the size of the mini-batch. One thing to notice that $\mu,\sigma\in \mathbf{R}^{N}$ when $x_i\in \mathbf{R}^N$, $N$ is the channel size. 
In this setting, the way to compute the bounded distribution of output will need to be modified as following:
$$ \mathbf{h}  \sim \mathcal{N}\left(\frac{\boldsymbol{\mu}^{(i)}_{\mathbf{x}}-\mu}{\sigma}, \frac{\Sigma^{(i)}_{\mathbf{x}}}{\sigma \sigma^T } \right) $$
where ``$/$'' is the element-wise divide. $\sigma \sigma^T\in \mathbf{R}^{N\times N}$. And $\mu,\sigma$ will be computed dynamically as the new data being fed into the network.

\noindent{\bf Act normalization: }
As the performance for batch normalization being different between training and testing period, there are several other types of normalization layers. One of the recent one \cite{NEURIPS2018_d139db6a} proposed an act normalization layer where $\mu,\sigma$ are trainable parameters. These two parameters are initialed during warm-up period to compute the mean and variance of the training dataset. After initialization, these parameters are trained freely.

In this case, the update rule is the same as above. The only difference is that $\mu,\sigma$ do not depended on the data being fed into the network.

\subsection{Activation Layer:} 
\noindent{\bf ReLU: }
In \cite{bibi2018analytic,lee2019probact}, the authors introduced a way to compute the mean and variance after ReLU. 
Since ReLU is an element-wise operation, assume $x\sim \mathcal{N}(\mu,\sigma^2)$. After ReLU activation, the first and second moments of the output are given by:
\begin{align*}
    \mathbb{E}(\text{ReLU}(x))&=\frac{1}{2}\mu-\frac{1}{2}\mu\ \textsf{erf}(\frac{-\mu}{\sqrt{2}\sigma}) + \frac{1}{\sqrt{2\pi}}\sigma\exp(-\frac{\mu^2}{2\sigma^2})\\
    \text{var}(\text{ReLU}(x))&<\text{var}(x)
\end{align*}
Here, $\textsf{erf}$ is the Error function. We need to track an upper bound of the covariance matrix, so
we use $\text{ReLU}(x) \sim \mathcal{N}\left(\mu_a, \Sigma_a\right)$, where, 
\begin{align*}
    \mu_a &=\frac{1}{2}\mu-\frac{1}{2}\mu\ \textsf{erf}(\frac{-\mu}{\sqrt{2}\sigma}) + \frac{1}{\sqrt{2\pi}}\sigma\exp(-\frac{\mu^2}{2\sigma^2}), \\
    \Sigma_a &\preceq\Sigma
\end{align*}
\paragraph{Last layer/prediction:} 
Here, the last layer is the layer before softmax layer,
which represents the ``strength'' of the model for the label $l$. By Obs. 1, we have the
estimation of $$p_{c_\mathbf{x}}=\underline{p_{c_\mathbf{x}}}=\Phi(\frac{\boldsymbol{\mu}[{c_\mathbf{x}}]-\boldsymbol{\mu}[{\widetilde{c}}]}{\sqrt{\Sigma[{{c_\mathbf{x}},{c_\mathbf{x}}}]+\Sigma[{\widetilde{c},\widetilde{c}}]-2\Sigma[{{c_\mathbf{x}},\widetilde{c}}]}})$$ and $$p_{\widetilde{c}}=\overline{p_{\widetilde{c}}}=1-p_{c_\mathbf{x}}$$ By Theorem 1, the certified radius is 
\begin{align*}
	C_R &= \frac{\sigma}{2}(\Phi^{-1}(\underline{p_{c_\mathbf{x}}})-\Phi^{-1}(\overline{p_{\widetilde{c}}}))\\ 
	&= \sigma \frac{\boldsymbol{\mu}[c]-\boldsymbol{\mu}[{\widetilde{c}}]}{\sqrt{\Sigma[{{c_\mathbf{x}},{c_\mathbf{x}}}]+\Sigma[{\widetilde{c},\widetilde{c}}]-2\Sigma[{{c_\mathbf{x}},\widetilde{c}}]}}
\end{align*}

\noindent{\bf Other activation functions: }
Other than ReLU, there is no closed form to compute the mean after the activation function. One simplification can be: use the local linear function to approximate the activation function, as Gowal, et al. did \cite{gowal2018effectiveness}. The good piece of this approximation is that the output covariance matrix $\Sigma_a$ can be bounded by $\alpha^2 \Sigma$, where $\alpha$ is the largest slope of the linear function. 

Another direction is to apply the Hermite expansion \cite{lokhande2020generating} which will add more parameters but is theoretically sound. 

As ReLU is the most well-used activation function as well as the elegant closed form mean after the activation function, in this paper, we only focus on the ReLU layer and hold other activation functions into future work.

\section{Training Loss}
Now that we have defined the basic modules and the ways to propagate the distribution through these modules, we now
need to make the entire model trainable. In order to do that, we need to define an appropriate training loss which is the purpose of this section. In the spirit of  \cite{zhai2020macer}, the training loss contains two parts: the classification loss and the robustness loss, i.e., 
\begin{align*}
    l(g_\theta;\mathbf{x},y)&=l_C(g_\theta;\mathbf{x},y)+\lambda l_{C_R}(g_\theta;\mathbf{x},y)
\end{align*}

Let the distribution of the output of the last layer be $u_\theta(\mathbf{x})\sim\mathcal{N}(\boldsymbol{\mu},\Sigma)$. We need to output the expectation as the prediction, i.e.,  $\boldsymbol{\mu} = \mathbb{E}[u_\theta(\mathbf{x})]$. Similar to the literature, we use the softmax layer on the expectation to compute the cross-entropy of the prediction and the true label.

\begin{align*}
    l_C(g_\theta;\mathbf{x},y) = y \log(\text{softmax}(\mathbb{E}[u_\theta(\mathbf{x})]))
\end{align*}

The robustness loss measures the certified radius of each sample. In our case, we have the radius $$C_R=\sigma \frac{\boldsymbol{\mu}[c]-\boldsymbol{\mu}[{\widetilde{c}}]}{\sqrt{\Sigma[{c,c}]+\Sigma[{\widetilde{c},\widetilde{c}}]-2\Sigma[{c,\widetilde{c}}]}}$$ Thus, we want to maximize the certified radius which is equivalent to minimize the robustness loss.
\begin{align*}
    &l_{C_R}(g_\theta;\mathbf{x},y=c_{\mathbf{x}})\\
     &= \max(0, \Gamma - \sigma \frac{\boldsymbol{\mu}[c_{\mathbf{x}}]-\boldsymbol{\mu}[{\widetilde{c}}]}{\sqrt{\Sigma[{c_{\mathbf{x}},c_{\mathbf{x}}}]+\Sigma[{\widetilde{c},\widetilde{c}}]-2\Sigma[{c_{\mathbf{x}},\widetilde{c}}]}})
\end{align*}
where $c_{\mathbf{x}}$ is the true label, $\widetilde{c}$ is the second highest possible label. $\Gamma$ is the offset to control the certified radius to consider.

\section{Proof Of The Theorem}
\begin{theorem}
    \label{certified_radius}
    \cite{cohen2019certified} Let $f_\theta:\mathbf{R}^d\rightarrow \mathcal{Y}$ be any deterministic or random function, and let $\boldsymbol{\varepsilon} \sim \mathcal{N}(\mathbf{0},\sigma^2I)$. 
Let $g_\theta$ be the random smooth classifier defined as $g_\theta(\mathbf{x})=\argmax_{c\in\mathcal{Y}}\mathbb{P}(f_\theta(\mathbf{x}+\boldsymbol{\varepsilon})=c_{\mathbf{x}})$. Suppose $c_{\mathbf{x}},\widetilde{c}\in\mathcal{Y}$ and $\underline{p_{c_{\mathbf{x}}}},\overline{p_{\widetilde{c}}}\in [0,1]$ satisfy $\mathbb{P}(f_\theta(\mathbf{x}+\boldsymbol{\varepsilon})=c_{\mathbf{x}})\geq \underline{p_{c_\mathbf{x}}}\geq \overline{p_{\widetilde{c}}}\geq \max_{\widetilde{c}\neq c_{\mathbf{x}}}\mathbb{P}(f_\theta(\mathbf{x}+\boldsymbol{\varepsilon})=\widetilde{c})$. 
Then $g_\theta(\mathbf{x}+\boldsymbol{\delta})=c_{\mathbf{x}}$ 
for all $\|\boldsymbol{\delta}\|_2<C_R$, 
where $C_R=\frac{\sigma}{2}(\Phi^{-1}(\underline{p_{c_{\mathbf{x}}}})-\Phi^{-1}(\overline{p_{\widetilde{c}}}))$. 
\end{theorem}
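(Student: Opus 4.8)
The plan is to follow the classical Neyman--Pearson argument that underlies randomized smoothing. Fix any perturbation $\boldsymbol{\delta}$ with $\|\boldsymbol{\delta}\|_2 < C_R$; the case $\boldsymbol{\delta} = \mathbf{0}$ is immediate from the hypothesis, so assume $\boldsymbol{\delta} \neq \mathbf{0}$. By definition of $g_\theta$, it suffices to show that $\mathbb{P}(f_\theta(\mathbf{x}+\boldsymbol{\delta}+\boldsymbol{\varepsilon}) = c_{\mathbf{x}}) > \mathbb{P}(f_\theta(\mathbf{x}+\boldsymbol{\delta}+\boldsymbol{\varepsilon}) = c)$ for every $c \neq c_{\mathbf{x}}$. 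Introduce the two noisy inputs $X \sim \mathcal{N}(\mathbf{x},\sigma^2 I)$ (at the clean point) and $Y \sim \mathcal{N}(\mathbf{x}+\boldsymbol{\delta},\sigma^2 I)$ (at the perturbed point); these Gaussians are isotropic with the same covariance, so their likelihood ratio is a monotone function of the single coordinate $\boldsymbol{\delta}^{\top}(z-\mathbf{x})$.

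The first main step is to lower-bound $\mathbb{P}(f_\theta(Y) = c_{\mathbf{x}})$ using only the constraint $\mathbb{P}(f_\theta(X)=c_{\mathbf{x}}) \ge \underline{p_{c_{\mathbf{x}}}}$. By the (randomized-test) Neyman--Pearson lemma, among all measurable tests $A$ with $\mathbb{P}(X \in A) \ge \underline{p_{c_{\mathbf{x}}}}$, the value $\mathbb{P}(Y \in A)$ is minimized by the halfspace $A^{\star} = \{z : \boldsymbol{\delta}^{\top}(z-\mathbf{x}) \le \beta\}$ with $\mathbb{P}(X \in A^{\star}) = \underline{p_{c_{\mathbf{x}}}}$. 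Since $\boldsymbol{\delta}^{\top}(X-\mathbf{x}) \sim \mathcal{N}(0,\sigma^2\|\boldsymbol{\delta}\|_2^2)$ this forces $\beta = \sigma\|\boldsymbol{\delta}\|_2\,\Phi^{-1}(\underline{p_{c_{\mathbf{x}}}})$, and since $\boldsymbol{\delta}^{\top}(Y-\mathbf{x}) = \|\boldsymbol{\delta}\|_2^2 + \boldsymbol{\delta}^{\top}\boldsymbol{\varepsilon} \sim \mathcal{N}(\|\boldsymbol{\delta}\|_2^2,\sigma^2\|\boldsymbol{\delta}\|_2^2)$, a one-line computation with the standard normal CDF gives $\mathbb{P}(f_\theta(Y)=c_{\mathbf{x}}) \ge \mathbb{P}(Y \in A^{\star}) = \Phi\big(\Phi^{-1}(\underline{p_{c_{\mathbf{x}}}}) - \|\boldsymbol{\delta}\|_2/\sigma\big)$.

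Symmetrically, for any $c \neq c_{\mathbf{x}}$ we have $\mathbb{P}(f_\theta(X)=c) \le \overline{p_{\widetilde{c}}}$, and the same lemma with the inequality reversed shows that $\mathbb{P}(Y \in B)$ is maximized over tests $B$ with $\mathbb{P}(X \in B) \le \overline{p_{\widetilde{c}}}$ by the complementary halfspace $B^{\star} = \{z : \boldsymbol{\delta}^{\top}(z-\mathbf{x}) \ge \beta'\}$ with $\mathbb{P}(X \in B^{\star}) = \overline{p_{\widetilde{c}}}$; the analogous calculation yields $\mathbb{P}(f_\theta(Y)=c) \le \Phi\big(\Phi^{-1}(\overline{p_{\widetilde{c}}}) + \|\boldsymbol{\delta}\|_2/\sigma\big)$. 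Combining the two bounds, $g_\theta(\mathbf{x}+\boldsymbol{\delta}) = c_{\mathbf{x}}$ as soon as $\Phi\big(\Phi^{-1}(\underline{p_{c_{\mathbf{x}}}}) - \|\boldsymbol{\delta}\|_2/\sigma\big) > \Phi\big(\Phi^{-1}(\overline{p_{\widetilde{c}}}) + \|\boldsymbol{\delta}\|_2/\sigma\big)$; since $\Phi$ is strictly increasing this is equivalent to $\Phi^{-1}(\underline{p_{c_{\mathbf{x}}}}) - \Phi^{-1}(\overline{p_{\widetilde{c}}}) > 2\|\boldsymbol{\delta}\|_2/\sigma$, i.e. $\|\boldsymbol{\delta}\|_2 < \tfrac{\sigma}{2}\big(\Phi^{-1}(\underline{p_{c_{\mathbf{x}}}}) - \Phi^{-1}(\overline{p_{\widetilde{c}}})\big) = C_R$, which is precisely the assumed bound.

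I expect the only delicate point to be the Neyman--Pearson step: one must argue that the extremal test is a halfspace perpendicular to $\boldsymbol{\delta}$ even when $f_\theta$ is a random function --- so the relevant ``test'' is the $[0,1]$-valued map $z \mapsto \mathbb{P}(f_\theta(z)=c_{\mathbf{x}})$ rather than an indicator --- and even when the attained probability $\mathbb{P}(f_\theta(X)=c_{\mathbf{x}})$ strictly exceeds $\underline{p_{c_{\mathbf{x}}}}$ (shrinking the test down to exact equality only lowers $\mathbb{P}(Y\in\cdot)$, by monotonicity of the Gaussian likelihood ratio along $\boldsymbol{\delta}$). Both issues are handled by the standard randomized version of the lemma; everything else is elementary manipulation of one-dimensional Gaussian CDFs, using the identity $1-\Phi(-t)=\Phi(t)$.
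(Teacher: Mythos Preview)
Your proposal is correct and follows essentially the same route as the paper's proof: both reduce to comparing $\mathbb{P}(f_\theta(Y)=c_{\mathbf{x}})$ with $\mathbb{P}(f_\theta(Y)=c')$ for the shifted Gaussian $Y\sim\mathcal{N}(\mathbf{x}+\boldsymbol{\delta},\sigma^2 I)$, invoke the Neyman--Pearson lemma to replace the events $\{f_\theta=c_{\mathbf{x}}\}$ and $\{f_\theta=c'\}$ by the extremal halfspaces $\{\boldsymbol{\delta}^\top(z-\mathbf{x})\le\sigma\|\boldsymbol{\delta}\|\Phi^{-1}(\underline{p_{c_{\mathbf{x}}}})\}$ and $\{\boldsymbol{\delta}^\top(z-\mathbf{x})\ge\sigma\|\boldsymbol{\delta}\|\Phi^{-1}(1-\overline{p_{\widetilde{c}}})\}$, and then reduce the comparison to the one-dimensional inequality $\|\boldsymbol{\delta}\|_2<\tfrac{\sigma}{2}(\Phi^{-1}(\underline{p_{c_{\mathbf{x}}}})-\Phi^{-1}(\overline{p_{\widetilde{c}}}))$. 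Your write-up is in fact more explicit than the paper's about the intermediate CDF values and about why the randomized-test version of Neyman--Pearson is needed when $f_\theta$ is random.
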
 
The symbol $\Phi$ denotes the CDF of the standard Normal distribution, and $\Phi$ and $\Phi^{-1}$
are involved because of smoothing the Gaussian perturbation $\boldsymbol{{\varepsilon}}$.

\begin{proof}
	To show that $g_\theta(\mathbf{x} + \boldsymbol{\delta} ) = c_{\mathbf{x}}$, it follows from the definition of $g_\theta$ that we need to show that $$ \mathbb{P}(f(\mathbf{x}+\boldsymbol{\delta} + \boldsymbol{\varepsilon})=c_{\mathbf{x}})> \max_{c'\neq c_{\mathbf{x}}}  \mathbb{P}(f(\mathbf{x}+\boldsymbol{\delta} + \boldsymbol{\varepsilon})=c') $$
	We will show $ \mathbb{P}(f(\mathbf{x}+\boldsymbol{\delta} + \boldsymbol{\varepsilon})=c_{\mathbf{x}})> \max_{c'\neq c_{\mathbf{x}}}  \mathbb{P}(f(\mathbf{x}+\boldsymbol{\delta} + \boldsymbol{\varepsilon})=c') $ for every class $c'\neq c_{\mathbf{x}}$. Fix one $c'$ without loss of generality.
	
	Define the random variables
	\begin{align*}
		X&:=\mathbf{x}+\boldsymbol{\varepsilon}=\mathcal{N}(\mathbf{x},\sigma^2 I)\\
		Y&:=\mathbf{x}+\boldsymbol{\boldsymbol{\delta}}+\boldsymbol{\varepsilon}=\mathcal{N}(\mathbf{x}+\boldsymbol{\delta},\sigma^2 I)
	\end{align*}
	
	We know that 
	$$ \mathbb{P}(f_\theta(X)=c_{\mathbf{x}})\geq \underline{p_{c_\mathbf{x}}},\ and\  \overline{p_{\widetilde{c}}}\geq \mathbb{P}(f_\theta(X)=c') $$
	
	We need to show $$ \mathbb{P}(f(Y)=c_{\mathbf{x}})>\mathbb{P}(f(Y)=c') $$ 
	
	Define the half-spaces:
	\begin{align*}
		A &:= \{z:\delta^T(z-x)\leq \sigma ||\delta|| \Phi^{-1}(\underline{p_{c_\mathbf{x}}}) \} \\
		B &:= \{z:\delta^T(z-x)\geq \sigma ||\delta|| \Phi^{-1}(1-\overline{p_{\widetilde{c}}}) \}
	\end{align*}
	
	We have $\mathbb{P}(X\in A)=\underline{p_{c_\mathbf{x}}}$. Therefore, we know $ \mathbb{P}(f(X)=c_\mathbf{x}) \leq \mathbb{P}(X\in A) $. Apply the Neyman-Pearson for Gaussians with different means Lemma \cite{neyman1933ix} with $h(z):=\mathbf{1}[f(z)=c_{\mathbf{x}}]$ to conclude:
	$$ \mathbb{P}(f(Y)= c_\mathbf{x} \geq \mathbb{P}(Y\in A) $$
	
	Similarly, we have $\mathbb{P}(X\in B)=\overline{p_{\widetilde{c}}}$, then we know $\mathbb{P}(f(X)=c')\leq \mathbb{P}(X\in B)$. Use the Lemma \cite{neyman1933ix} again with $h(z):=\mathbf{1}[f(z)=c']$:
	$$ \mathbb{P}(f(Y)= c') \leq \mathbb{P}(Y\in B) $$
	
	Then to guarantee $\mathbb{P}(f(Y)=c_{\mathbf{x}})>\mathbb{P}(f(Y)=c')$, it suffices to show that $\mathbb{P}(Y\in A) > \mathbb{P}(Y\in B)$. Thus, if and only if 
	$$ ||\boldsymbol{\delta}|| < \frac{\sigma}{2} (\Phi^{-1}(\underline{p_{c_\mathbf{x}}}) - \Phi^{-1}( \overline{p_{\widetilde{c}}} )) $$
\end{proof}

\section{Proof Of The Observations}
  \begin{proposition}
	\label{Gaussian_estimate}
	Let $u_{\theta}$ denote
        the network without the last softmax layer, i.e., the full
        neural network
        can be written as  $$f_\theta(\mathbf{x})=\argmax_{c\in\mathcal{Y}}\text{ softmax}(u_\theta(\mathbf{x}))$$ Let $C = |\mathcal{Y}|$ and assume  $$u_\theta(\mathbf{x}) \sim \mathcal{N}(\boldsymbol{\mu},\Sigma)$$ where $\boldsymbol{\mu} \in \mathbf{R}^C$ and $\Sigma \in \mathbf{R}^{C\times C}$. Then the estimation of $\underline{p_{c_{\mathbf{x}}}}$ is $$\underline{p_{c_{\mathbf{x}}}}= \Phi(\frac{\boldsymbol{\mu}[{c_{\mathbf{x}}}]-\boldsymbol{\mu}[{\widetilde{c}}]}{\sqrt{\Sigma[{c_{\mathbf{x}},c_{\mathbf{x}}}]+\Sigma[{\widetilde{c},\widetilde{c}}]-2\Sigma[{c_{\mathbf{x}},\widetilde{c}}]}}),$$ where  
        \begin{align*}
        	c_{\mathbf{x}}=\displaystyle\argmax_{c\in\mathcal{Y}}\mathbb{P}\left(u_\theta(\mathbf{x})=c \right)\\
        \widetilde{c}=\displaystyle\argmax_{{c}\in\mathcal{Y},{c}\neq c_{\mathbf{x}}}\mathbb{P}\left(u_\theta(\mathbf{x})={c}\right)
        \end{align*}
        \end{proposition}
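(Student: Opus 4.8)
The plan is to reduce the $C$-class prediction problem to a binary comparison between the top two logits, exactly as in the commented-out ``Proof of Theorem 4'' that appears earlier in the excerpt. The key observation is that $\underline{p_{c_{\mathbf{x}}}}$ is defined (via \cite{cohen2019certified}) as a lower bound on $\mathbb{P}(f_\theta(\mathbf{x}+\boldsymbol{\varepsilon})=c_{\mathbf{x}})$, and under the Gaussian-logit modeling assumption $u_\theta(\mathbf{x})\sim\mathcal{N}(\boldsymbol{\mu},\Sigma)$, the event $\{f_\theta = c_{\mathbf{x}}\}$ is $\{\,u_\theta(\mathbf{x})[c_{\mathbf{x}}] \ge u_\theta(\mathbf{x})[c]\ \text{for all } c\,\}$, since $\argmax$ of softmax is $\argmax$ of its argument. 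The probability of this multi-event is hard to evaluate in closed form, but we only want a usable \emph{estimate} (treated as the lower bound $\underline{p_{c_{\mathbf{x}}}}$), so we replace the full event by its binding pairwise constraint against the runner-up $\widetilde c$, i.e. we approximate $\underline{p_{c_{\mathbf{x}}}} \approx \mathbb{P}\big(u_\theta(\mathbf{x})[c_{\mathbf{x}}] > u_\theta(\mathbf{x})[\widetilde c]\big)$.

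The core computation is then a one-line Gaussian fact. Writing $\eta := u_\theta(\mathbf{x})[c_{\mathbf{x}}] - u_\theta(\mathbf{x})[\widetilde c]$, linearity of Gaussian vectors gives $\eta \sim \mathcal{N}\big(\boldsymbol{\mu}[c_{\mathbf{x}}] - \boldsymbol{\mu}[\widetilde c],\ \Sigma[c_{\mathbf{x}},c_{\mathbf{x}}] + \Sigma[\widetilde c,\widetilde c] - 2\Sigma[c_{\mathbf{x}},\widetilde c]\big)$, where the variance formula is just $\mathrm{Var}(aX - bY)$ expanded for the two coordinates $a = e_{c_{\mathbf{x}}}$, $b = e_{\widetilde c}$. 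Standardizing $\eta$ and using that $\Phi$ is the CDF of $\mathcal{N}(0,1)$ with $1 - \Phi(t) = \Phi(-t)$ yields
\begin{align*}
\mathbb{P}(\eta > 0) = \Phi\!\left(\frac{\boldsymbol{\mu}[c_{\mathbf{x}}] - \boldsymbol{\mu}[\widetilde c]}{\sqrt{\Sigma[c_{\mathbf{x}},c_{\mathbf{x}}] + \Sigma[\widetilde c,\widetilde c] - 2\Sigma[c_{\mathbf{x}},\widetilde c]}}\right),
\end{align*}
which is the claimed expression. I would also note that the denominator is nonnegative because it equals $(e_{c_{\mathbf{x}}} - e_{\widetilde c})^\top \Sigma (e_{c_{\mathbf{x}}} - e_{\widetilde c})$ with $\Sigma \succeq 0$, so the square root is well-defined.

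The main obstacle is conceptual rather than computational: justifying why the pairwise reduction is the ``right'' choice for the lower bound. One should argue that among all pairwise events $\{u_\theta[c_{\mathbf{x}}] > u_\theta[c]\}$ for $c \neq c_{\mathbf{x}}$, the one with $c = \widetilde c$ has the smallest mean gap $\boldsymbol{\mu}[c_{\mathbf{x}}] - \boldsymbol{\mu}[c]$ (by definition of $\widetilde c$ as the second-largest mean logit), hence is the ``hardest'' constraint and the natural candidate to control $\underline{p_{c_{\mathbf{x}}}}$; a union-bound-style remark that $\mathbb{P}(f_\theta = c_{\mathbf{x}})$ cannot exceed any single $\mathbb{P}(u_\theta[c_{\mathbf{x}}] > u_\theta[c])$ makes the ``lower bound / estimate'' interpretation plausible, though strictly it is an approximation under the stated assumptions rather than a rigorous bound on the true smoothed-classifier probability. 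Everything after that reduction is the routine Gaussian linear-combination calculation above.
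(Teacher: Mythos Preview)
Your proposal is correct and follows essentially the same approach as the paper: define $\eta = u_\theta(\mathbf{x})[c_{\mathbf{x}}] - u_\theta(\mathbf{x})[\widetilde c]$, use linearity of Gaussians to get $\eta \sim \mathcal{N}(\boldsymbol{\mu}[c_{\mathbf{x}}]-\boldsymbol{\mu}[\widetilde c],\ \Sigma[c_{\mathbf{x}},c_{\mathbf{x}}]+\Sigma[\widetilde c,\widetilde c]-2\Sigma[c_{\mathbf{x}},\widetilde c])$, standardize, and apply $1-\Phi(t)=\Phi(-t)$. Your additional remarks on the well-definedness of the square root and on the pairwise reduction being an approximation rather than a strict bound are accurate refinements, but the computational core is identical to the paper's argument.
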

 
 \begin{proof}
   Let us call the strength of two labels to be $u^{c_\mathbf{x}}, u^{\widetilde{c}}$. We have $$u^{c_\mathbf{x}} \sim\mathcal{N}(\boldsymbol{\mu}[{c_{\mathbf{x}}}],\Sigma[{c_{\mathbf{x}},c_{\mathbf{x}}}]), u^{\widetilde{c}}\sim\mathcal{N}(\boldsymbol{\mu}[{\widetilde{c}}],\Sigma[{\widetilde{c},\widetilde{c}}]).$$
   The cross-correlations $E_{c_\mathbf{x}, \widetilde{c}}=\Sigma[{c_{\mathbf{x}},\widetilde{c}}]$. 
   Thus, $$\underline{p_{c_{\mathbf{x}}}}=\mathbb{P}(u^{c_\mathbf{x}} > u^{\widetilde{c}})=\mathbb{P}(u^{c_\mathbf{x}} - u^{\widetilde{c}} >0).$$
   Let $\eta=u^{c_\mathbf{x}} - u^{\widetilde{c}} \in \mathbf{R}$, then $$\eta\sim\mathcal{N}(\boldsymbol{\mu}[{c_{\mathbf{x}}}]-\boldsymbol{\mu}[{\widetilde{c}}], \Sigma[{c_{\mathbf{x}},c_{\mathbf{x}}}]+\Sigma[{\widetilde{c},\widetilde{c}}]-2E_{c_\mathbf{x}, \widetilde{c}})$$
 	Then, to compute $\underline{p_{c_{\mathbf{x}}}}$,
 	\begin{align*}
    \underline{p_{c_{\mathbf{x}}}} &= \mathbb{P}(u^{c_\mathbf{x}} > u^{\widetilde{c}})\\
    &=\mathbb{P}(\eta>0)\\
    &=\mathbb{P}\bigl(\frac{\eta-\boldsymbol{\mu}[{c_{\mathbf{x}}}]+\boldsymbol{\mu}[{\widetilde{c}}]}{\sqrt{\Sigma[{c_{\mathbf{x}},c_{\mathbf{x}}}]+\Sigma[{\widetilde{c},\widetilde{c}}]-2E_{c_\mathbf{x}, \widetilde{c}}}}>\\
    & \frac{-\boldsymbol{\mu}[{c_{\mathbf{x}}}]+\boldsymbol{\mu}[{\widetilde{c}}]}{\sqrt{\Sigma[{c_{\mathbf{x}},c_{\mathbf{x}}}]+\Sigma[{\widetilde{c},\widetilde{c}}]-2E_{c_\mathbf{x}, \widetilde{c}}}}\bigr)\\
    &=\mathbb{P}(\eta'>\frac{-\boldsymbol{\mu}[{c_{\mathbf{x}}}]+\boldsymbol{\mu}[{\widetilde{c}}]}{\sqrt{\Sigma[{c_{\mathbf{x}},c_{\mathbf{x}}}]+\Sigma[{\widetilde{c},\widetilde{c}}]-2E_{c_\mathbf{x}, \widetilde{c}}}})\\
    &\ \ \text{where}\ \eta'=\frac{\eta-\boldsymbol{\mu}[{c_{\mathbf{x}}}]+\boldsymbol{\mu}[{\widetilde{c}}]}{\sqrt{\Sigma[{c_{\mathbf{x}},c_{\mathbf{x}}}]+\Sigma[{\widetilde{c},\widetilde{c}}]-2E_{c_\mathbf{x}, \widetilde{c}}}}\sim\mathcal{N}(0,1)\\
    &=1-\Phi(\frac{-\boldsymbol{\mu}[{c_{\mathbf{x}}}]+\boldsymbol{\mu}[{\widetilde{c}}]}{\sqrt{\Sigma[{c_{\mathbf{x}},c_{\mathbf{x}}}]+\Sigma[{\widetilde{c},\widetilde{c}}]-2E_{c_\mathbf{x}, \widetilde{c}}}})\\
    &=\Phi(\frac{\boldsymbol{\mu}[{c_{\mathbf{x}}}]-\boldsymbol{\mu}[{\widetilde{c}}]}{\sqrt{\Sigma[{c_{\mathbf{x}},c_{\mathbf{x}}}]+\Sigma[{\widetilde{c},\widetilde{c}}]-2E_{c_\mathbf{x}, \widetilde{c}}}}), \Phi:c.d.f.\ of\ \mathcal{N}(0,1)
\end{align*}
 \end{proof}

\begin{proposition}
\label{prop:identical}
	With the input perturbation $\boldsymbol{\varepsilon}$ to be identical along the spatial dimension and without nonlinear activation function, for $q^{th}$ hidden convolution layer with $\{\mathbf{h}_i\}_{i=1}^{M_q}$ output pixels, we have $\Sigma_{\mathbf{h}_i}^{(q)} = \Sigma_{\mathbf{h}_j}^{(q)},\forall i, j\in \{1, \cdots, M_q\}$.
\end{proposition}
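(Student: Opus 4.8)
The plan is to argue by induction on the layer index, but with a \emph{strengthened} inductive hypothesis: rather than only propagating ``the per-pixel covariance blocks $\Sigma_{\mathbf{h}_i}^{(q)}$ coincide'', I would maintain that the \emph{entire} second-moment structure of the perturbation at layer $q$ is spatially stationary, i.e. the cross-correlation between two pixels depends only on their offset, $E_{\mathbf{x}_i\mathbf{x}_j}^{(q)} = E^{(q)}(i-j)$, and in particular $E^{(q)}(0) = \Sigma^{(q)}$ is the same for every pixel. This strengthening is exactly what makes the induction close: by Obs.~\ref{prop:identical}'s own setup, the covariance of a convolved output pixel is a bilinear combination of the covariances \emph{and} the cross-correlations of the input pixels inside its receptive field, so equality of the diagonal blocks alone cannot be propagated — one needs translation-invariance of the whole Gram structure.

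\textbf{Base case and inductive step.} For the input layer the perturbation is $\boldsymbol{\varepsilon}\sim\mathcal{N}(\mathbf{0},\sigma^2 I)$ applied identically to every pixel with i.i.d. coordinates, so $E_{\mathbf{x}_i\mathbf{x}_j}^{(0)} = \sigma^2 I\cdot\mathbf{1}[i=j]$, which depends only on $i-j$; the base case holds. For the inductive step, write a convolution layer (or, identically, an average-pooling layer) as the translation-equivariant linear map $\mathbf{h}_p = \sum_{s} W_s\,\mathbf{x}_{p+s} + \mathbf{b}$, where $s$ ranges over the kernel offsets. Because there is no nonlinear activation, the mean and covariance recursions decouple, and
\[
\mathrm{Cov}(\mathbf{h}_p,\mathbf{h}_{p'}) = \sum_{s,s'} W_s^{\top}\,\mathrm{Cov}(\mathbf{x}_{p+s},\mathbf{x}_{p'+s'})\,W_{s'} = \sum_{s,s'} W_s^{\top}\,E^{(q-1)}\!\bigl((p-p')+(s-s')\bigr)\,W_{s'},
\]
where I used the inductive hypothesis; the bias $\mathbf{b}$ does not affect covariances. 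The right-hand side depends on $p,p'$ only through $p-p'$, so the layer-$q$ structure is again stationary, and setting $p=p'$ yields $\Sigma_{\mathbf{h}_p}^{(q)}=\Sigma_{\mathbf{h}_{p'}}^{(q)}$ for all $p,p'$, which is exactly the claim. Linear layers that are not translation-equivariant (a fully connected layer) do not appear between hidden convolution layers, so they need not be handled.

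\textbf{Main obstacle.} The delicate point is not the algebra but the translation-equivariance implicit in $\mathbf{h}_p=\sum_s W_s\mathbf{x}_{p+s}+\mathbf{b}$: this is exact only away from the spatial boundary (or under circular padding), so strictly the statement is for interior pixels / periodic padding, and I would flag that caveat rather than hide it. The second subtlety worth making explicit is the role of the ``no nonlinear activation'' hypothesis. A pointwise deterministic nonlinearity would actually preserve stationarity of the true field; the reason it breaks the identity of second moments \emph{here} is that in the moment-propagation model of Sec.~\ref{sec:train_layers} the post-ReLU variance formula depends on the per-pixel \emph{mean}, and the mean field is genuinely non-constant since it still carries the image content ($\boldsymbol{\mu}_i=\mathbf{x}_i$ at the input). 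This is precisely why the paper must introduce the $r_{\max}$-based relaxation afterwards, and I would note that consistency as a sanity check.
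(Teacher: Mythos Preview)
Your argument is correct and, in fact, cleaner than the paper's own proof. Both proceed by induction on the layer index, but the two differ in how they deal with the cross-correlation terms $E_{mn}^{(q)}$ that appear when expanding $\Sigma_{\mathbf{h}_i}^{(q+1)}$.

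The paper keeps the \emph{weak} inductive hypothesis ``the diagonal blocks $\Sigma_{\mathbf{h}_k}^{(q)}$ are identical across pixels'' and, when it encounters the cross terms, does not appeal to the hypothesis at all; instead it unrolls $E_{mn}^{(q)} = g^{(q)}(\Sigma^{(q-1)},E_{m'n'}^{(q-1)},W^{(q-1)})$ recursively down to the input layer, where the cross-correlations vanish, and concludes that the whole expression is a determined function of $\Sigma^{(1)}$. That argument is terse about \emph{why} the composite function is the same for every output pixel; it relies implicitly on exactly the translation-invariance you make explicit. Your strengthened hypothesis $E_{\mathbf{x}_i\mathbf{x}_j}^{(q)}=E^{(q)}(i-j)$ closes the induction in one line via the identity $\mathrm{Cov}(\mathbf{h}_p,\mathbf{h}_{p'})=\sum_{s,s'}W_s^{\top}E^{(q-1)}\bigl((p-p')+(s-s')\bigr)W_{s'}$, which is both more transparent and more robust (it immediately covers strided convolutions and average pooling as special cases). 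The boundary/padding caveat you flag is real and is not addressed in the paper's proof either; your remark on why the ``no nonlinearity'' hypothesis matters in the moment-propagation model (the post-ReLU variance depends on the spatially varying mean) is also on point and matches the paper's later discussion.
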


\begin{proof}
	Let us prove by induction. In the first layer, there is no cross-correlation between pixels, while the input perturbation is given to be identical. Thus, identical assumption is true for the first layer.\\
	Assume in the layer $1$ to $q$, the second moments are all identical within the layer. We need to prove that for layer $q+1$, this property also holds. Consider two pixels in layer $q+1$, $\mathbf{h}_{i}^{(q+1)}$ and $\mathbf{h}_{j}^{(q+1)}$. By definition of convolution, we know $\mathbf{h}_i^{(q+1)}=\sum W_k \mathbf{h}_{k}^{(q)}$ where $\mathbf{h}_{k}^{(q)}$ is the view field of $\mathbf{h}_{i}^{(q+1)}$ and $W$ is the shared kernel. So the covariance matrix of $\mathbf{h}_{i}^{(q+1)}$ is $$\Sigma_{\mathbf{h}_i}^{(q+1)}=\sum_k W_k^T \Sigma_{\mathbf{h}_k}^{(q)} W_k + \sum_{m,n} W_m^T E_{m,n}^{(q)} W_n$$ By the identical assumption, the first components are the same for $\mathbf{h}_{i}^{(q+1)}$ and $\mathbf{h}_{j}^{(q+1)}$. The only problematic component is the second piece which requires computing $E_{m,n}^{(q)}$. For $E_{m,n}^{(q)}$, we can compute recursively that 
	\begin{align*}
		E_{m,n}^{(q)}&=\mathbb{E}[(\mathbf{h}_m^{(q)}-\mathbb{E}[\mathbf{h}_m^{(q)}])^T (\mathbf{h}_n^{(q)}-\mathbb{E}[\mathbf{h}_n^{(q)}])]\\
		&=g^{(q)}(\Sigma^{(q-1)}, E_{m',n'}^{(q-1)}, W^{(q-1)})
	\end{align*}
	where $g^{(q)}(\cdot)$ is a determined function. Thus, the only problematic piece is also $E_{m',n'}^{(q-1)}$. If we keep doing until we reach the first layer, we will have $$\Sigma^{(q)}=f^{(q)}\circ f^{(q-1)} \circ ... f^{(1)}(\Sigma^{(1)})$$ with $f(\cdot)$ is the function depending on $g(\cdot)$, which is a determined value across all the pixels. This assumption depends on the linearity of the network, which will in turn separate the first and second order moments when passing through the network.\\
	As a simplification case, we can assume that the pixels in the same layer are independent, which implies $E_{m,n}^{(q)}$ to be $0$ everywhere. This case is suitable when we apply Theorem 2 in the main paper to each convolution layer.
\end{proof}
One thing to notice is the activation function. In our method, in order to keep the identical assumption, we need to make the second moments after activation function to be identical when given the identical input. Also, we will need to keep an upper bound so that the whole method is theoretically sound. Due to the fact that the ReLU will reduce the variance of the variable, we assign the output covariance to be the same as the input covariance. Thus, the identical assumption is kept through all the layers. 

\section{Other Results Of Our Method}
We also performed the proposed method on Cifar-100 \cite{krizhevsky2009learning}. This dataset is more challenging than Cifar-10 as the number of classes increases from 10 to 100. The results are shown in Table. \ref{tab:noisy_cifar100}. 

\newcolumntype{g}{>{\columncolor{Gray}}c}
\begin{table}[h]
\caption {\footnotesize Average test accuracy on pair-flipping with noisy rate $45\%$ over the last ten epochs of Cifar-100. We show the results of Bootstrap\cite{reed2014training}, S-model\cite{goldberger2016training}, Decoupling\cite{malach2017decoupling}, MentorNet\cite{jiang2017mentornet}, Co-teaching\cite{han2018co}, Trunc $\mathcal{L}_q$\cite{zhang2018generalized}, and Ours.} \label{tab:noisy_cifar100} 
\centering
\scalebox{0.58}{
\begin{tabular}{c|c g c g c g c}
\topline\myrowcolour
Method & Bootstrap & S-model & Decoupling & MentorNet & Co-teaching & Trunc $\mathcal{L}_q$ & Ours \\
\hline
mean  & 0.321 & 0.218 & 0.261 & 0.316 & 0.348 &\highest{0.477}  & 0.346\\
    std                        & 3.0$e-3$ & 8.6$e-3$ & 0.3$e-3$ & 5.1$e-3$ & 0.7$e-3$ & 6.9$e-3$ & 0.6$e-3$ \\ 
\bottomline 
\end{tabular}
}
\vspace*{-1em}
\end{table}

\section{Strengths And Limitations Of Our Method}

The biggest benefit of our method is the training time, which is also the
main focus of the paper. As in the main paper, we have shown that our methods can be $5\times$ faster on Cifar-10, etc. dataset, with a comparable ACR as MACER. In real-world applications, training speed is
an important consideration. Thus, our method is a cheaper substitute of MACER with a marginal performance compromise. 

On the other hand, we also need to discuss that under which circumstances our method does not perform well.
The first case is when the network is extremely deep, e.g., Resnet-101.
Due to the nature of upper bound, the estimation of the second moments tends to become looser as the network grows
deeper. Thus, this will lead to a looser estimation of the distribution of the last layer and
the robustness estimation would be less meaningful.
Another minor weakness is when the input perturbation is large, for example $\sigma=1.0$.
As shown in the main paper, the ACR drops from 0.56 to 0.52 on ImageNet when the
noise perturbation increases from $\sigma=0.5$ to $\sigma=1.0$. The main reason relates to 
the assumption of a Gaussian distribution. As the perturbation grows larger, the number of channels,
by the central limit theorem, should also be larger to satisfy the Gaussian distribution.
Thus, when the network structure is fixed, there is an inherent limit on the input perturbation. 

We note that to perform a fair comparison, we run Cohen's \cite{cohen2019certified},
MACER \cite{zhai2020macer}, and our method based on the PreActResnet-18 for the Table. 3 in the main paper.
Since the network is shallower than the original MACER paper, the performance numbers reported here are lower.
As described above, a large perturbation $\sigma=1.0$ leads to small drop in performance.
Thus, almost for all three methods, the ACR for $\sigma=1.0$ is worse than
the one for $\sigma=0.5$ on ImageNet. For Places365 dataset, the results are slightly better on $\sigma=1.0$
than $\sigma=0.5$. 

Also, similar as the main paper, we statistically test the variance based on MC sampling and our upper bound tracking method. The results are shown in Table. \ref{tab:stat_supp}. As one can see that when the network gets deeper, the upper bound tends to be looser. But in most case, the upper bound is around 3 times higher than the sample-based variance, which is affordable in the real-world application.

\begin{table}[h]
 \caption {\footnotesize Statistics for different layers of MC sampling and our upper bound tracking method for deeper network.} \label{tab:stat_supp} 
 \centering
 \scalebox{0.65}{
 \begin{tabular}{c|ccccc}
 \topline\myrowcolour
 Layer number & 1 & 9 & 17 & 25 & 33\\
 \hline
  MC (1000 samples) 	& 0.595 & 0.782 & 2.751 & 7.692 & 0.712 \\
 	Upper bound 		& 0.685 & 3.231 & 5.583 & 22.546 & 3.960 \\
\bottomline 
 \end{tabular}
 }
 \end{table}

\begin{table*}[ht]
\caption{\footnotesize Consider a 1-D convolution network with kernel size $k=3$. The blue dots are the nodes as well as the covariance matrices $\Sigma$ to be computed. The red arrows are the cross-correlation $E$ between two nodes. }
\label{tab:example_exp}
\begin{tabular}{p{\columnwidth}|p{\columnwidth}}
\topline
	\parbox[c]{1.0\columnwidth}{
      \includegraphics[width=1.0\columnwidth]{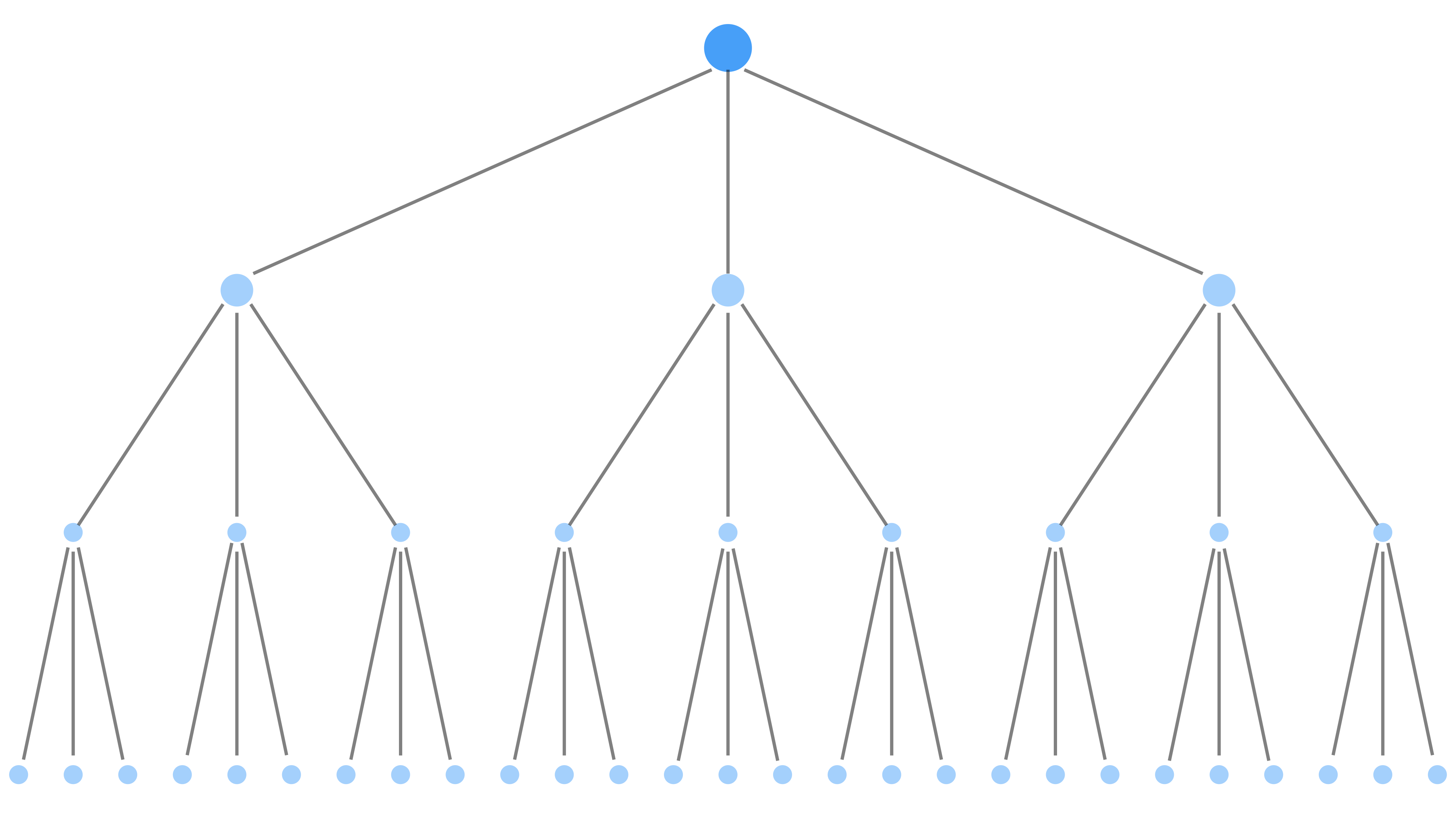}} & In the last layer, \newline
      	$\begin{aligned} p_{c_\mathbf{x}}=\underline{p_{c_\mathbf{x}}}=\Phi\left(\frac{\boldsymbol{\mu}[{c_\mathbf{x}}]-\boldsymbol{\mu}[{\widetilde{c}}]}{\sqrt{\Sigma[{{c_\mathbf{x}},{c_\mathbf{x}}}]+\Sigma[{\widetilde{c},\widetilde{c}}]-2\Sigma[{{c_\mathbf{x}},\widetilde{c}}]}}\right) \end{aligned}$,\newline
      We need to compute 1 $\Sigma^{(i)}$\\
    \hline
    \parbox[c]{1.0\columnwidth}{
      \includegraphics[width=1.0\columnwidth]{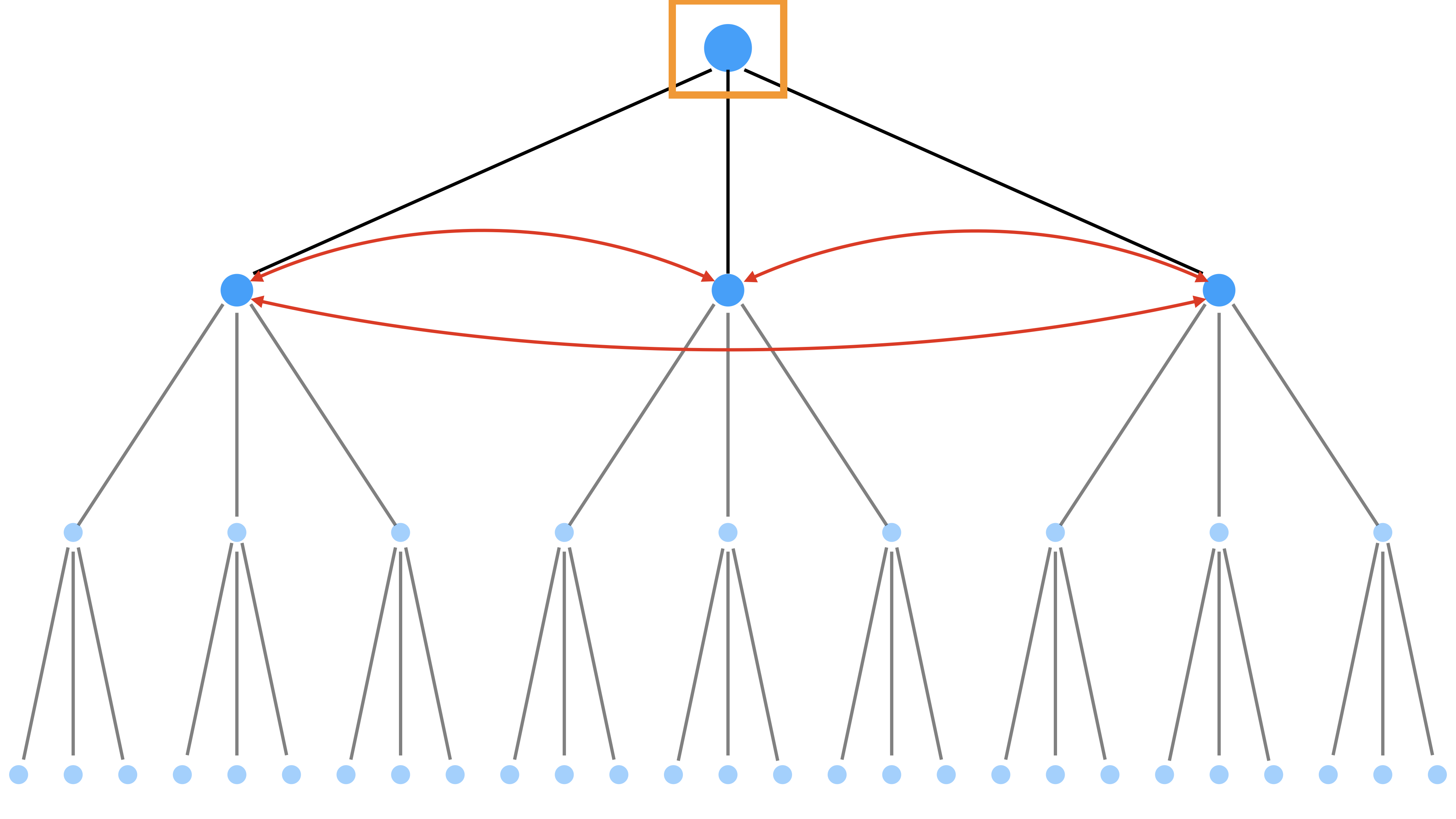}} & To compute the $\Sigma^{(i)}$ above (highlighted in orange box), in the second last layer\newline
      	$\begin{aligned} \Sigma^{(i)} &= \mathbb{E}(W^T{X^{(i-1)}}^T {X^{(i-1)}} W)\\ &= \sum_{m=1}^{3} W_m^T \Sigma_m^{(i-1)} W_m + \sum_{m,n=1,m\neq n} ^{3} W_m^T E_{mn}^{(i-1)} W_n \end{aligned}$\newline
      	We need to compute 3 $\Sigma^{(i-1)}$ and 3 $E^{(i-1)}$.
      	\\
    \hline
    \parbox[c]{1.0\columnwidth}{
      \includegraphics[width=1.0\columnwidth]{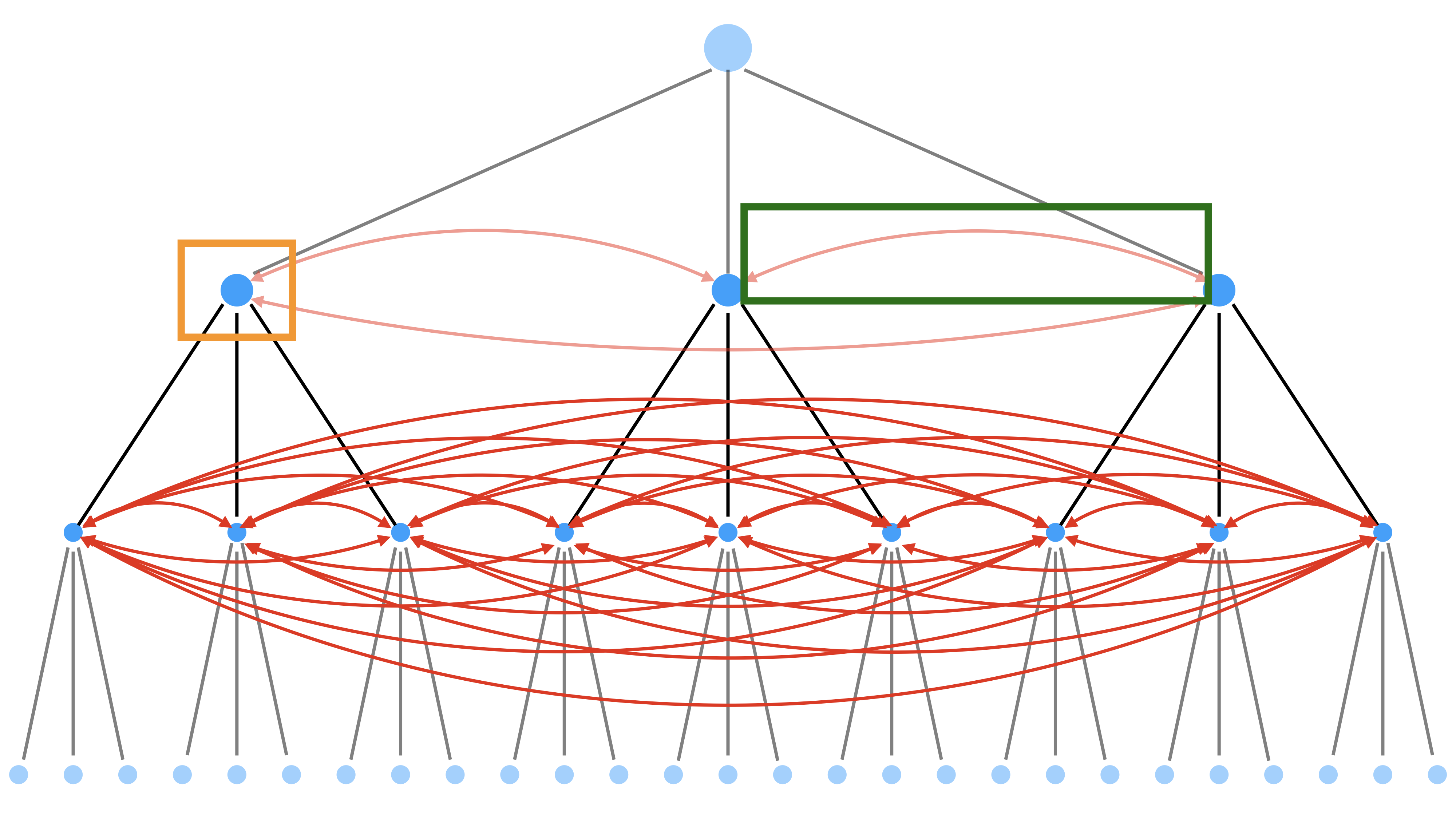}} & To compute the $\Sigma^{(i-1)}$ above (highlighted in orange box), in the third last layer, \newline
      $\begin{aligned} \Sigma^{(i-1)} &= \sum_{m=1}^{3} W_m^T \Sigma_m^{(i-2)} W_m + \sum_{m,n=1,m\neq n} ^{3} W_m^T E_{mn}^{(i-2)} W_n \end{aligned}$\newline
      To compute the $E_{12}^{(i-1)}$ above (highlighted in green box)\newline
      $\begin{aligned} E_{12}^{(i-1)} &= \mathbb{E}({x_1^{(i-1)}}^T x_2^{(i-1)}) = \mathbb{E}({W^T {X_1^{(i-2)}}^T X_2^{(i-2)} W}) \\ &= \sum_{m,n=3}^{8} \tilde{W}^T E_{mn}^{(i-2)} \tilde{W'} \end{aligned}$, where $\tilde{W}, \tilde{W'}$ are the weights depending on the position of $m,n$.\newline
      We need to compute $3^2$ $\Sigma^{(i-2)}$ and $\frac{1}{2}(3^2+1)3^2$ $E^{(i-2)}$.
      
      \\
    \hline
    \parbox[c]{1.0\columnwidth}{
      \includegraphics[width=1.0\columnwidth]{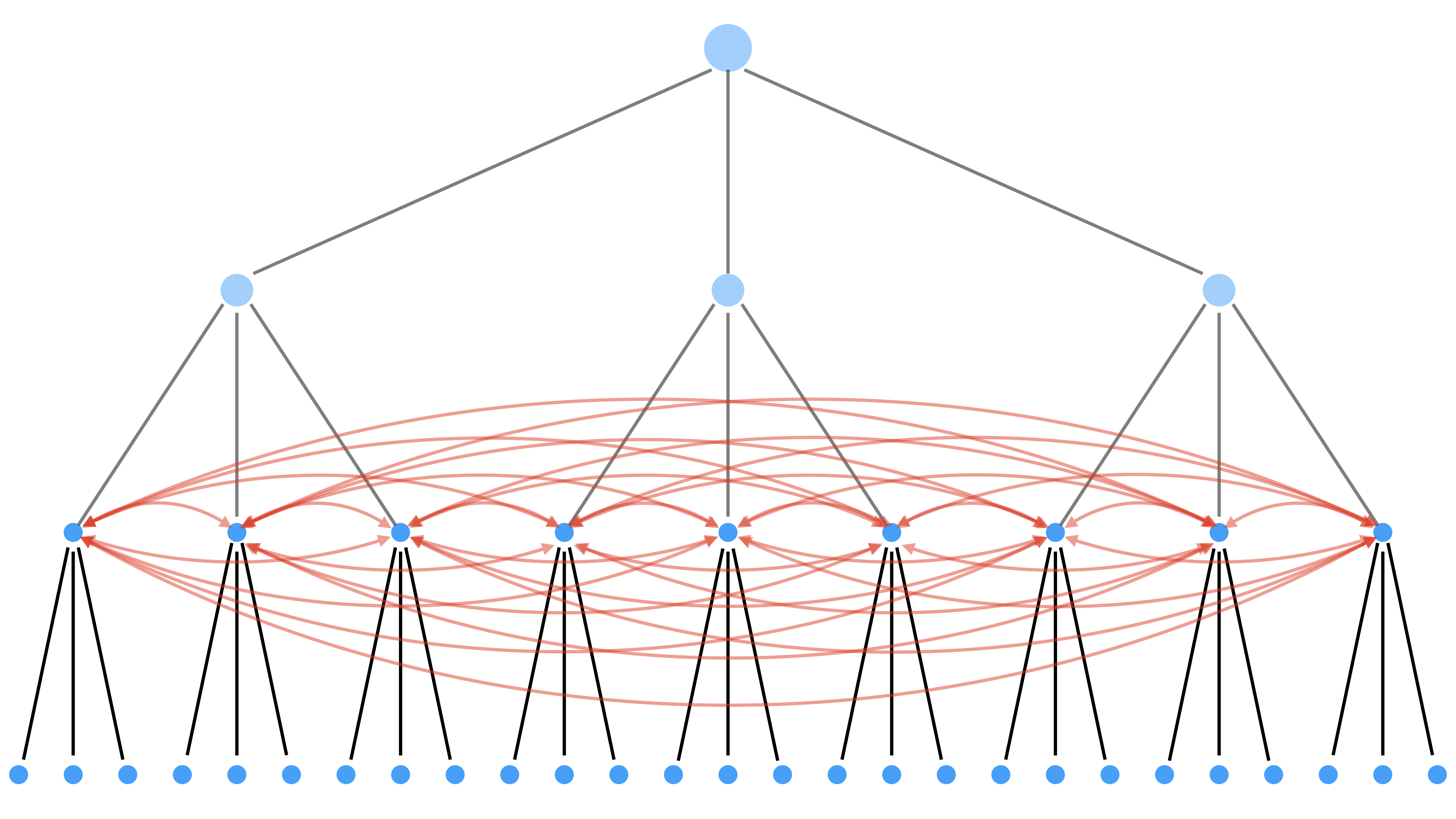}} & In the $(q+1)^{th}$ last layer, 
      
      We need to compute $k^q$ $\Sigma^{(i-q)}$ and $\frac{1}{2}(1+k^q)k^q$ $E^{(i-q)}$. \\
    \bottomline  
\end{tabular}
	
\end{table*}

\begin{table*}[ht]
\caption{\footnotesize Consider a 1-D convolution network with kernel size $k=3$ with overlapping. The blue dots are the nodes as well as the covariance matrices $\Sigma$ to be computed. The red arrows are the cross-correlation $E$ between two nodes. }
\label{tab:example_overlap}
\begin{tabular}{p{\columnwidth}|p{\columnwidth}}
\topline
	\parbox[c]{1.0\columnwidth}{
      \includegraphics[width=1.0\columnwidth]{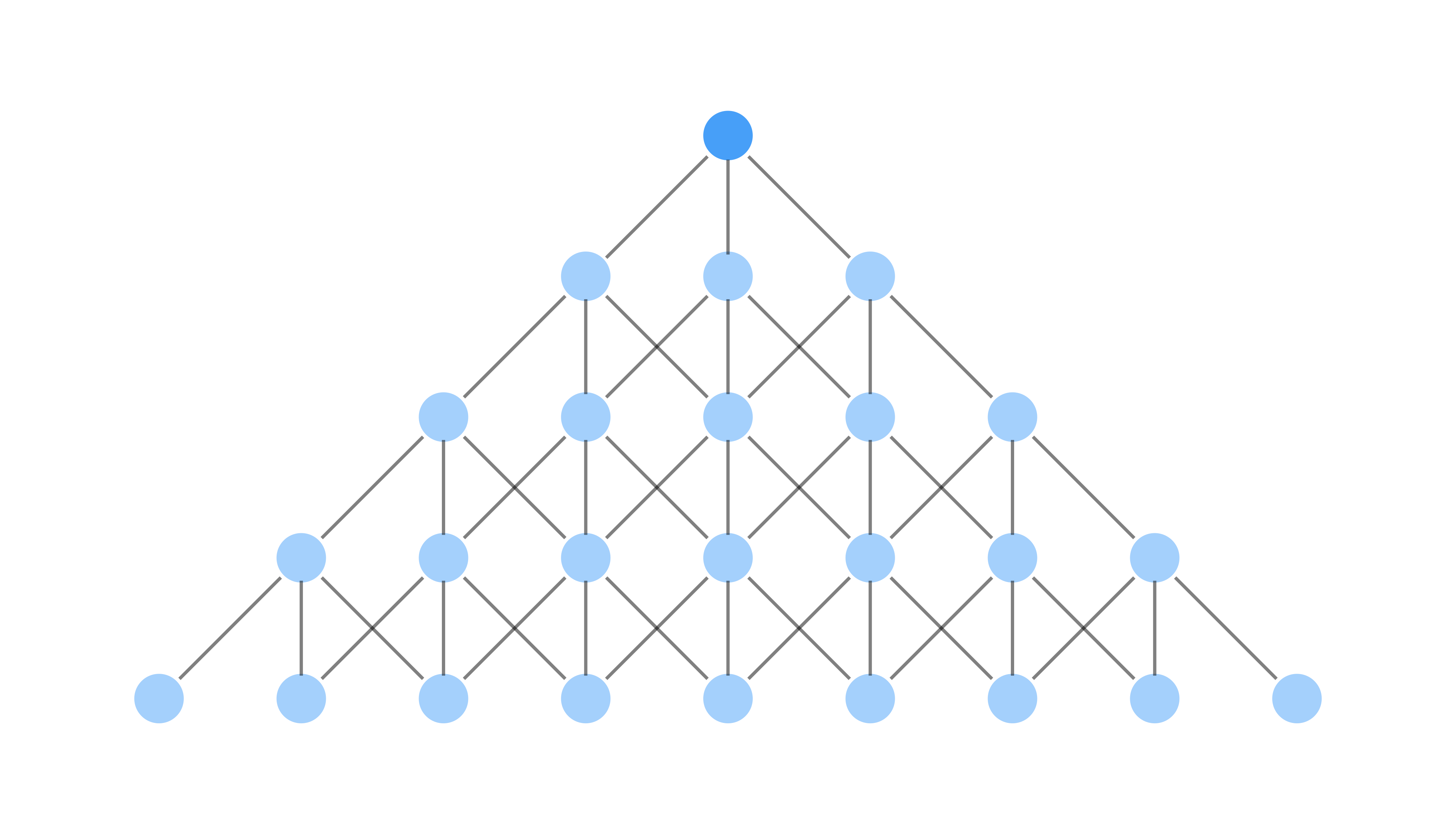}} & In the last layer, 
      
      	$\begin{aligned} p_{c_\mathbf{x}}=\underline{p_{c_\mathbf{x}}}=\Phi\left(\frac{\boldsymbol{\mu}[{c_\mathbf{x}}]-\boldsymbol{\mu}[{\widetilde{c}}]}{\sqrt{\Sigma[{{c_\mathbf{x}},{c_\mathbf{x}}}]+\Sigma[{\widetilde{c},\widetilde{c}}]-2\Sigma[{{c_\mathbf{x}},\widetilde{c}}]}}\right) \end{aligned}$,

      We need to compute 1 $\Sigma^{(i)}$\\
    \hline
    \parbox[c]{1.0\columnwidth}{
      \includegraphics[width=1.0\columnwidth]{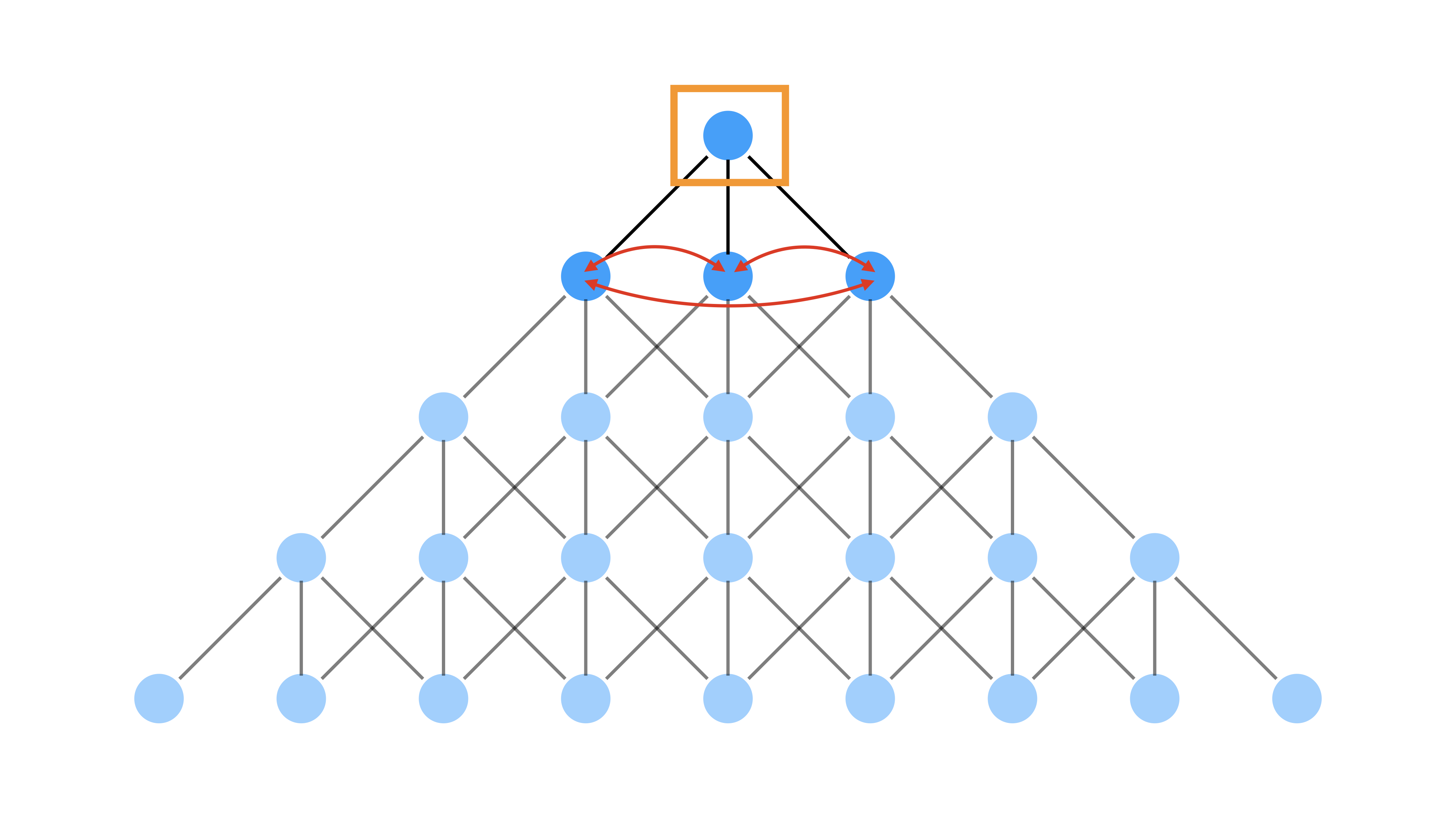}} & To compute the $\Sigma^{(i)}$ above (highlighted in orange box), in the second last layer
      
      	$\begin{aligned} \Sigma^{(i)} &= \mathbb{E}(W^T{X^{(i-1)}}^T {X^{(i-1)}} W)\\ &= \sum_{m=1}^{3} W_m^T \Sigma_m^{(i-1)} W_m + \sum_{m,n=1,m\neq n} ^{3} W_m^T E_{mn}^{(i-1)} W_n \end{aligned}$
      	We need to compute 3 $\Sigma^{(i-1)}$ and 3 $E^{(i-1)}$.
      	\\
    \hline
    \parbox[c]{1.0\columnwidth}{
      \includegraphics[width=1.0\columnwidth]{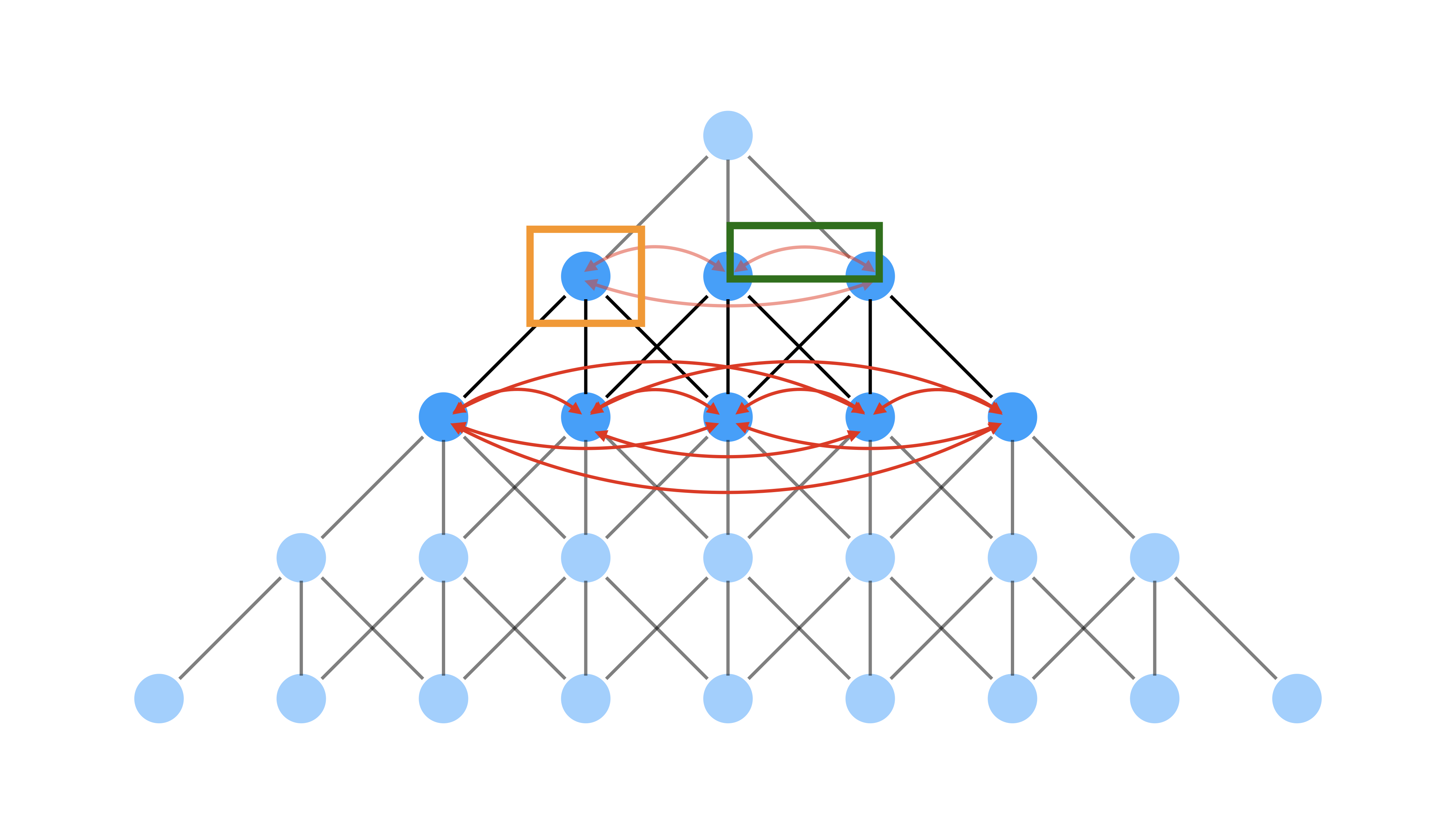}} & To compute the $\Sigma^{(i-1)}$ above (highlighted in orange box), in the third last layer, 
            
      $\begin{aligned} \Sigma^{(i-1)} &= \sum_{m=1}^{3} W_m^T \Sigma_m^{(i-2)} W_m + \sum_{m,n=1,m\neq n} ^{3} W_m^T E_{mn}^{(i-2)} W_n \end{aligned}$
      
      To compute the $E_{12}^{(i-1)}$ above (highlighted in green box) 
      
      $\begin{aligned} E_{12}^{(i-1)} &= \mathbb{E}({x_1^{(i-1)}}^T x_2^{(i-1)}) = \mathbb{E}({W^T {X_1^{(i-2)}}^T X_2^{(i-2)} W}) \\ &= \sum_{m,n=1}^{4} \tilde{W}^T E_{mn}^{(i-2)} \tilde{W'} \end{aligned}$, where $\tilde{W}, \tilde{W'}$ are the weights depending on the position of $m,n$.
     
      We need to compute $(3-1)\times 2+1$ $\Sigma^{(i-2)}$ and $\frac{1}{2}((3-1)\times 2+1+1)((3-1)\times 2+1)$ $E^{(i-2)}$.
      
      \\
    \hline
    \parbox[c]{1.0\columnwidth}{
      \includegraphics[width=1.0\columnwidth]{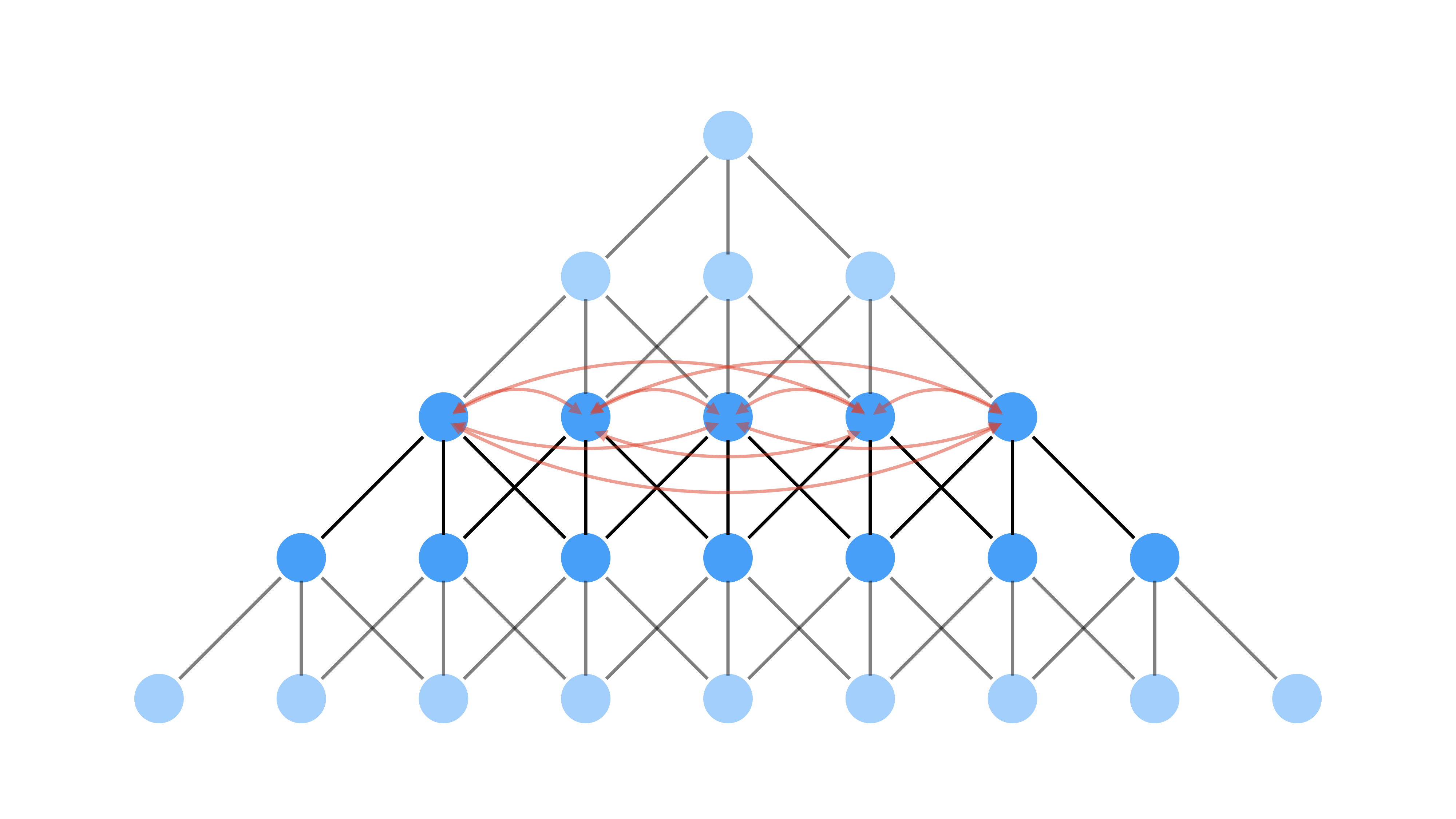}} & In the $(q+1)^{th}$ last layer, 
      
      We need to compute $(k-1)q+1$ $\Sigma^{(i-q)}$ and  $\frac{1}{2}((k-1)q+1+1)((k-1)q+1)$ $E^{(i-q)}$. \\
    \bottomline  
\end{tabular}
	
\end{table*}

\end{document}